\definecolor{arrowblue}{RGB}{192,192,192}
\definecolor{dimgrey}{RGB}{105,105,105}
\newenvironment{mybox}[2]
  {\mdfsetup{
    frametitle={\colorbox{white}{\space#1\space}},
    innertopmargin=10pt,
    frametitleaboveskip=-\ht\strutbox,
    % frametitlealignment=\center,
    linewidth=1.5pt,
    linecolor=#2,
    nobreak
    }
  \begin{mdframed}%
  }
  {\end{mdframed}}
\pgfplotsset{compat=newest}
\pgfplotsset{compat=1.11,
        /pgfplots/ybar legend/.style={
        /pgfplots/legend image code/.code={%
        %\draw[##1,/tikz/.cd,yshift=-0.25em]
                %(0cm,0cm) rectangle (3pt,0.8em);},
        % \draw[##1,/tikz/.cd,bar width=3pt,yshift=-0.2em,bar shift=0pt]
        %         plot coordinates {(0cm,0.8em)};},
        \draw [#1] (0cm,-0.1cm) rectangle (0.1cm,0.1cm);}
},}
\tikzstyle{obs} = [circle,fill=white,draw=black,inner sep=1pt,minimum size=20pt,font=\fontsize{10}{10}\selectfont,node distance=1,thick]
\tikzstyle{latent} = [obs,dotted]
\newcommand{\bR}{\ensuremath \mathbb{R}}
\newcommand{\bN}{\ensuremath \mathbb{N}}
\newcommand{\bS}{\ensuremath \mathbb{S}}
\newcommand{\cX}{\ensuremath \mathcal{X}}
\newcommand{\cH}{\ensuremath \mathcal{H}}
\newcommand{\bE}{\ensuremath \mathbb{E}}
\DeclareMathOperator*{\argmin}{arg\,min}
\DeclareMathOperator{\E}{\mathbb{E}}
\newcommand{\simp}{\ensuremath \mathbb{S}}
\newcommand{\iid}{\stackrel{iid}{\sim}}
\newcommand{\KernelBiome}{\texttt{KernelBiome}\xspace}
\theoremstyle{definition}
\newtheorem{definition}{Definition}[section]
\newtheorem{proposition}{Proposition}[section]
\newtheorem{theorem}{Theorem}[section]
\newtheorem{lemma}{Lemma}[section]
\newtheorem{example}{Example}[section]
\begin{document}

\title{\bf Supervised Learning and Model Analysis with Compositional Data}
\author[1]{Shimeng Huang}
\author[2]{Elisabeth Ailer}
\author[2,3]{Niki Kilbertus}
\author[1]{Niklas Pfister}
\affil[1]{Department of Mathematical Sciences, University of Copenhagen, Copenhagen, Denmark}
\affil[2]{Helmholtz Munich, Munich, Germany}
\affil[3]{Technical University of Munich, Munich, Germany}

\date{\today}

\maketitle

\begin{abstract}%
  Supervised learning, such as regression and classification, is an essential tool for analyzing modern high-throughput sequencing data, for example in microbiome research. However, due to the compositionality and sparsity, existing techniques are often inadequate. Either they rely on extensions of the linear log-contrast model (which adjust for compositionality but cannot account for complex signals or sparsity) or they are based on black-box machine learning methods (which may capture useful signals, but lack interpretability due to the compositionality). We propose \KernelBiome, a kernel-based nonparametric regression and classification framework for compositional data. It is tailored to sparse compositional data and is able to incorporate prior knowledge, such as phylogenetic structure.

\KernelBiome captures complex signals, including in the zero-structure, while automatically adapting model complexity. We demonstrate on par or improved predictive performance compared with state-of-the-art machine learning methods on $33$ publicly available microbiome datasets. Additionally, our framework provides two key advantages: (i) We propose two novel quantities to interpret contributions of individual components and prove that they consistently estimate average perturbation effects of the conditional mean, extending the interpretability of linear log-contrast coefficients to nonparametric models. (ii) We show that the connection between kernels and distances aids interpretability and provides a data-driven embedding that can augment further analysis. \KernelBiome is available as an open-source Python package at \url{https://github.com/shimenghuang/KernelBiome}.

\end{abstract}

% ==========================================================
% SECTION
% ==========================================================

\section{Introduction}

\noindent Compositional data, that is, measurements of parts of a whole, are common in many scientific disciplines. For example, mineral
compositions in geology \citep{buccianti2006compositional},
element concentrations in chemistry \citep{pesenson2015statistical},
species compositions in ecology \citep{jackson1997compositional} and more recently high-throughput sequencing reads in microbiome science \citep{li2015microbiome}.

Mathematically, any $p$-dimensional composition---by appropriate normalization---can be
represented as a point on the simplex
$$\simp^{p-1}\coloneqq\{x\in [0,1]^p\mid \textstyle\sum_{j=1}^px^j=1\}.$$

This complicates the statistical analysis, because the sum-to-one constraint of the simplex induces non-trivial dependencies between the components that may lead to false conclusions, if not appropriately taken into account.

The statistics community has developed a substantial collection of parametric analysis techniques to account for the simplex-structure.
The most basic is the family of Dirichlet distributions. However, as pointed out already by \citet{aitchison1982statistical},
Dirichlet distributions cannot capture non-trivial dependence
structures between the composition
components and are thus too restrictive.
\citet{aitchison1982statistical} therefore introduced the
\emph{log-ratio} approach. It generates a family of distributions by
projecting multivariate normal distributions into $\simp^{p-1}$ via an appropriate log-ratio
transformation (e.g., the additive log-ratio, centered log-ratio
\citep{aitchison1982statistical}, or isometric log-ratio
\citep{egozcue2003isometric}). The resulting family of distributions
results in parametric models on the simplex that are rich enough to
capture non-trivial dependencies between the components (i.e., beyond those induced by the sum-to-one constraint). The log-ratio approach has been extended and adapted to a range of statistical problems \citep[e.g.,][]{aitchison1985general,tsagris2011data, aitchison1983pca, aitchison2002biplot,friedman2012inferring}.

%%% Workflow
\begin{figure*}[t]
\resizebox{\linewidth}{!}{
\begin{tikzpicture}
  \tikzstyle{every node}=[font=\Large]
  % Step 1
  \node at (1,11.8) (Atext){\large \textbf{Input}};
  \node[draw=dimgrey!80!black, minimum width=8cm, minimum height=8.5cm,rounded corners=0.2cm,
  line width=2pt]
  at (1,7.2) (A){};
  \node at (1,11) (Atext){\large compositional predictor \& response};
  %% Predictor X
  \node[draw=dimgrey!80!black, minimum width=5.5cm, minimum height=2.9cm,rounded corners=0cm,
  line width=0.5pt]
  at (0.4,9) (AA){};
  \node at (0.4,9.9) (Apic1)
  {$X_1$ \includegraphics[width=4.5cm]{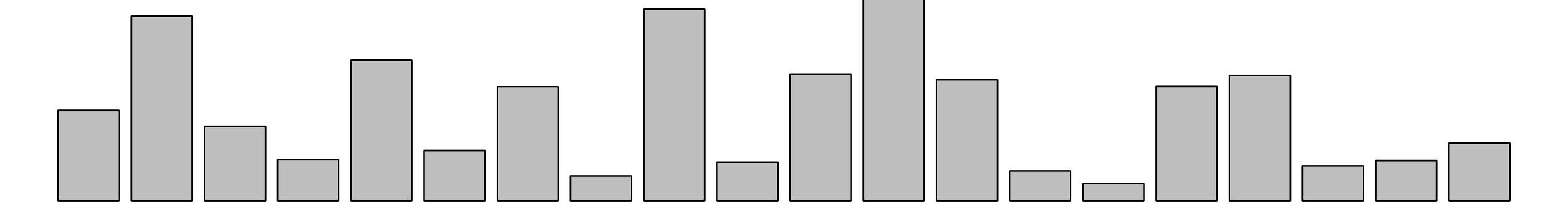}};
  \node at (0.4,9.2) (Apic2)
  {$X_2$ \includegraphics[width=4.5cm]{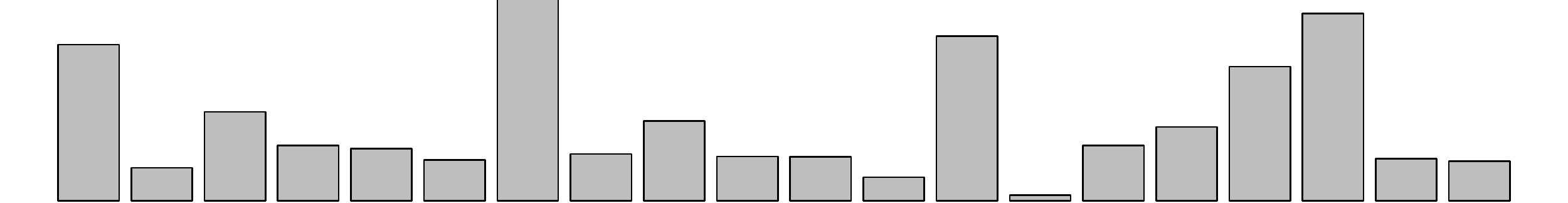}};
  \node at (0.4,8.75) (Apic3)
  {\Huge $\vdots$};
  \node at (0.4,8) (Apic4)
  {$X_n$ \includegraphics[width=4.5cm]{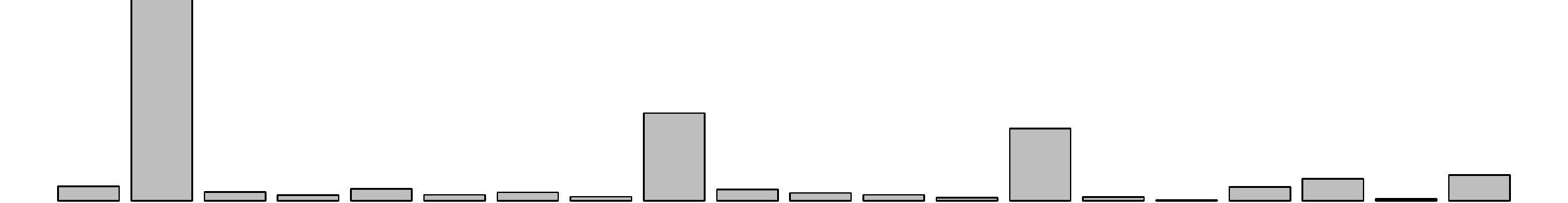}};
  %% Predictor Y
  \node[draw=dimgrey!80!black, minimum width=1cm, minimum height=2.9cm,rounded corners=0cm,
  line width=0.5pt]
  at (3.9, 9) (AAA){};
  \node at (3.9,9.9) (AApic1)
  {$Y_1$};
  \node at (3.9,9.2) (AApic2)
  {$Y_2$};
  \node at (3.9,8.75) (AApic3)
  {\Huge $\vdots$};
  \node at (3.9,8) (AApic4)
  {$Y_n$};
  \node at (1,7.2) (Atext){\Huge $+$};
  \node at (1,6.5) (Atext){\large prior knowledge (optional)};
  \node at (1,6) (Atext){\large e.g.\ phylogenetic tree};
  %% Phylogenetic tree
  \node[draw, circle, fill=black, inner sep=0, minimum width=1pt] at (1,5.5) (P00){$\bullet$};
  \node[draw, circle, fill=black, inner sep=0, minimum width=1pt] at (-1.5,4.5) (P10){$\bullet$};
  \node[draw, circle, fill=black, inner sep=0, minimum width=1pt] at (0,4.5) (P11){$\bullet$};
  \node at (0.8,4.5) (P12){$\cdots$};
  \node[draw, circle, fill=black, inner sep=0, minimum width=1pt] at (2,4.5) (P13){$\bullet$};
  \node[draw, circle, fill=black, inner sep=0, minimum width=1pt] at
  (3.5,4.5) (P14){$\bullet$};
  \node[draw, circle, fill=black, inner sep=0, minimum width=1pt] at (-2.1,3.5) (P20){$\bullet$};
  \node[draw, circle, fill=black, inner sep=0, minimum width=1pt] at (-1.5,3.5) (P21){$\bullet$};
  \node[draw, circle, fill=black, inner sep=0, minimum width=1pt] at (-0.9,3.5) (P22){$\bullet$};
  \node at (0.2,3.5) (P23){$\cdots$};
  \node at (2,3.5) (P24){$\cdots$};
  \node[draw, circle, fill=black, inner sep=0, minimum width=1pt] at (3.2,3.5) (P25){$\bullet$};
  \node[draw, circle, fill=black, inner sep=0, minimum width=1pt] at (3.8,3.5) (P26){$\bullet$};
  % edges
  \draw[line width=1pt, shorten <= -6pt, shorten >= -6pt] (P00) --
  (P10);
  \draw[line width=1pt, shorten <= -6pt, shorten >= -6pt] (P00) --
  (P11);
  \draw[line width=1pt, shorten <= -6pt, shorten >= -6pt] (P00) --
  (P13);
  \draw[line width=1pt, shorten <= -6pt, shorten >= -6pt] (P00) --
  (P14);
  \draw[line width=1pt, shorten <= -6pt, shorten >= -6pt] (P10) --
  (P20);
  \draw[line width=1pt, shorten <= -6pt, shorten >= -6pt] (P10) --
  (P21);
  \draw[line width=1pt, shorten <= -6pt, shorten >= -6pt] (P10) --
  (P22);
  \draw[line width=1pt, shorten <= -6pt, shorten >= -6pt] (P14) --
  (P25);
  \draw[line width=1pt, shorten <= -6pt, shorten >= -6pt] (P14) --
  (P26);

  % Step 2
  \node at (11,11.8) (Atext){\large \textbf{Supervised Learning}};
  \node[draw=dimgrey!80!black, minimum width=8cm, minimum height=8.5cm,rounded corners=0.2cm,
  line width=2pt]
  at (11,7.2) (B){};

  \node[align=center] at (11, 9.2) (textB1){
    \normalsize \textbf{data-driven}\\
    \normalsize \textbf{model} \\
    \normalsize \textbf{selection}};
  \draw[dimgrey!80!black, line width=2pt,-latex] (11, 9.2) + (1.3, 0) arc[radius=1.3,start
  angle=0,delta angle=100];
  \draw[dimgrey!80!black, line width=2pt,-latex] (11, 9.2) + (-0.65, 1.125833) arc[radius=1.3,start
  angle=120,delta angle=100];
  \draw[dimgrey!80!black, line width=2pt, -latex] (11, 9.2) + (-0.65, -1.125833) arc[radius=1.3,start
  angle=240,delta angle=100];

  \node at (11, 11) (topname) {\large Kernels on $\mathbb{S}^{p-1}$};
  \node[align=center] at (8.5, 10) (method1) {\normalsize \color{teal}{Euclidean}};
  \node[align=center] at (13.5, 10) (method2) {\normalsize \color{teal}{Aitchison}\\
    \normalsize \color{teal}{geometry}};
  \node[align=center] at (8.5, 8.5) (method3) {\normalsize \color{teal}{Probability}\\
    \normalsize \color{teal}{distribution}};
  \node[align=center] at (13.5, 8.5) (method4) {\normalsize \color{teal}{Riemannian}\\
    \normalsize \color{teal}{manifold}};

  \draw[dimgrey!80!black, line width=12pt, -{Triangle[width=20pt,length=12pt]}] (11, 7.5) -- (11, 6);

  \node at (9, 6.8) {\large \textbf{model fit}};
  \node[align=center] at (13, 6.8) {\normalsize SVM/Kernel Ridge};

  \node at (11, 5.5) {\large \textbf{Output:}};
  \node at (9, 4.8) {\large \emph{Regression function}};
  \node at (13, 4.8) {\large \emph{Feature embedding}};
  \node at (9, 4) {\large
    $\hat{f}:\mathbb{S}^{p-1}\longrightarrow\mathbb{R}$};
  \node at (13, 4) {\large $x\mapsto \hat{k}(x,\cdot)$};

  % Step 3.1
  \node at (21,11.8) (Atext){\large \textbf{Model Analysis}};
  \node[draw=dimgrey!80!black, minimum width=8cm, minimum height=4.2cm,rounded corners=0.2cm,
  line width=2pt]
  at (21, 9.35) (C1){};
  \node at (21, 11.15) (textB1){
    \large Interpreting individual features};
  \node at (21, 9.1) (CFIplot)
  {\includegraphics[width=6.7cm]{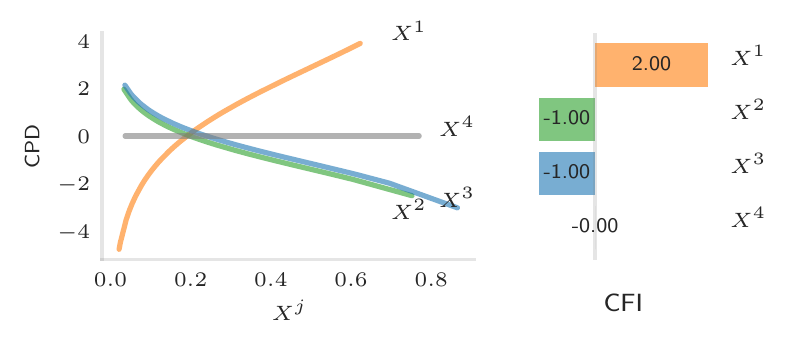}};

  % Step 3.2
  \node[draw=dimgrey!80!black, minimum width=8cm, minimum height=4.2cm,rounded corners=0.2cm,
  line width=2pt]
  at (21, 5.05) (C2){};
  \node at (21, 6.8) (textB1){
    \large Distance-based analysis};
  \node at (19.2, 4.9) (kPCAplot)
  {\includegraphics[height=3.5cm]{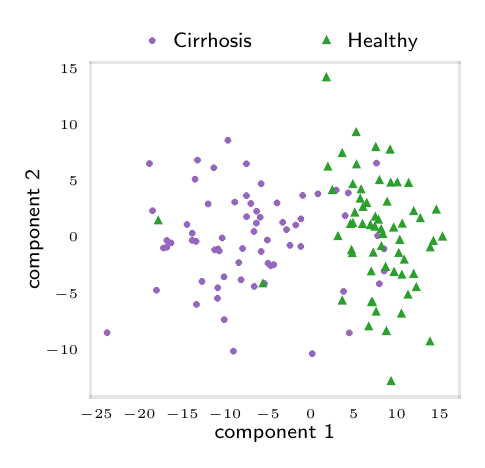}}; 
  \node at (22.7, 4.75) (Distplot)
  {\includegraphics[width=2.75cm]{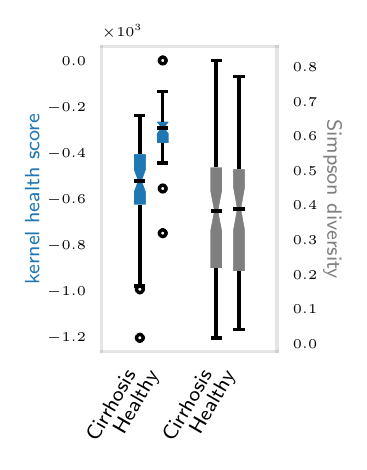}};

  %% Connecting arrows
  \draw[-{Triangle[width=68pt,length=30pt]}, line width=50pt,
  color=dimgrey, shorten >=5pt] (A) -- (B);
  \node at (14.93, 9.35) (B1) {};
  \node at (14.93, 5.05) (B2) {};
  \draw[-{Triangle[width=68pt,length=30pt]}, line width=50pt,
  color=dimgrey, shorten >=5pt] (B1) -- (C1);
  \draw[-{Triangle[width=68pt,length=30pt]}, line width=50pt,
  color=dimgrey, shorten >=5pt] (B2) -- (C2);
  
\end{tikzpicture}%
}
\caption{Overview of \KernelBiome. We start from a paired dataset with a compositional predictor $X$ and a response $Y$ and optional prior knowledge on the relation between components in the compositions (e.g., via a phylogenetic tree). We then select a model among a large class of kernels which best fits the data. This results in an estimated model $\hat{f}$ and embedding $\hat{k}$. Finally, these can be 
analyzed while accounting for the compositional structure.}\label{fig:workflow}
\end{figure*}

For supervised learning tasks the log-ratio approach leads to the \emph{log-contrast model} \citep{Aitchison1984}.
An attractive property of the log-contrast model is that its coefficients quantify the effect of a multiplicative perturbation (i.e., fractionally increasing one component while adjusting the others) on the response.
While several extensions of the log-contrast model exist \citep[e.g.,][]{10.1093/biomet/asu031,shi2016, Combettes2020, simpson2021classo, ailer2021causal}, its parametric approach to supervised learning has two major shortcomings that become particularly severe when applied to high-dimensional and zero-inflated high-throughput sequencing data \citep{tsilimigras2016compositional,gloor2017microbiome}.
Firstly, since the logarithm is not defined at zero, the log-contrast model cannot be directly applied. A common fix is to add so-called pseudo-counts,
a small non-zero constant, to all (zero) entries
\citep{kaul2017analysis,lin2020analysis}. More sophisticated replacements exist as well \citep[e.g.,][]{martin2003dealing, fernandes2013anova, de2018geometric}, however, they often rely on knowing the nature of the zeros (e.g., whether they are structural or random), which is typically not available in practice and difficult to estimate. In any case, the downstream analysis will strongly depend on the selected zero imputation scheme \citep{park2022kernel}. Secondly, the relationships between individual components (e.g., species) and the response are generally complex. For example, in human microbiome settings, a health outcome may depend on interactions or on the presence or absence of species. Both cannot be captured by the linear structure of the log-contrast model.

We propose to solve the supervised learning task using a nonparametric kernel approach, which is able to handle complex signals and avoid arbitrary zero-imputation. To be of use in biological applications, there are two components to a supervised analysis: (i) estimating a predictive model that accurately captures signals in the data and (ii) extracting meaningful and interpretable information from the estimated model. For (i), it has been shown that modern machine learning methods are capable of creating highly predictive models by using microbiome data as covariates and phenotypes as responses \citep[e.g.,][]{knight2018best, zhou2019review, cammarota2020gut}. In particular, several approaches have been proposed where kernels are used to incorporate prior information \citep{chen2013kernel, randolph2018kernel} or as a way to utilize the compositional structure \citep{ramon2021kernint, di2015non, tsagris2021compositional}. Recently, \citet{park2022kernel, li2022reproducing} used the radial transformation to argue that kernels on the sphere provide a natural way of analyzing compositions with zeros and similar to our work suggest using the kernel embeddings in a subsequent analysis.
Part (ii) is related to the fields of explainable AI \citep{samek2019explainable} and interpretable machine learning \citep{molnar2020interpretable}, which focus on extracting information from predictive models. These types of approaches have also received growing attention in the context of microbiome data \citep{topccuouglu2020framework, gou2021interpretable, ruaud2022interpreting}. However, to the best of our knowledge, none of these procedures have been adjusted to account for the compositional structure. As we show in Sec.~\ref{sec:cfi}, not accounting for the compositionality may invalidate the results.

\KernelBiome{}, see Fig.~\ref{fig:workflow}, addresses both (i) by providing a regression and classification procedure based on kernels targeted to the simplex and (ii) by providing a principled way of analyzing the estimated models. Our contributions are fourfold: (1) We develop a theoretical framework for using kernels on compositional data. While using kernels to analyze various aspects of compositional data is not a new idea, a comprehensive analysis and its connection to existing approaches have been missing. In this work, we provide a range of kernels that each capture different aspects of the simplex structure, many of which have not been previously applied to compositional data. For all kernels, we derive novel, positive-definite weighted versions that allow incorporating prior information between the components.  Additionally, we show that the distance associated with each kernel can be used to define a kernel-based scalar summary statistic.
(2) We propose a theoretically justified analysis of kernel-based models that accounts for compositionality. Firstly, we introduce two novel quantities for measuring the effects of individual features that explicitly take the compositionality into account and prove that these can be consistently estimated. Secondly, we build on known connections between kernels and distance measures to advocate for using the kernel embedding from the estimated model to create visualizations and perform follow-up distance-based analyses that respect the compositionality.
(3) We draw connections between \KernelBiome and log-contrast-based analysis techniques. More specifically, we connect the Aitchison kernel to the log-contrast model, prove that the proposed compositional feature influence in this case reduce to the log-contrast coefficients, and show that our proposed weighted Aitchison kernel is related to the recently proposed tree-aggregation method of log-contrast coefficients \citep{bien2021tree}. Importantly, these connections ensure that \KernelBiome reduces to standard log-contrast analysis techniques whenever simple log-contrast models 
are capable of capturing most of the signal. This is also illustrated by our experimental results.
(4) We propose a data-adaptive selection framework that allows to compare different kernels in a principled fashion.

The paper is structured as follows. In Sec.~\ref{sec:methods}, we introduce the supervised learning task, define
two quantities for analyzing individual components (Sec.~\ref{sec:cfi}),
give a short introduction to kernel
methods and how to apply our methodology (Sec.~\ref{sec:kernel}), and present the full \KernelBiome framework (Sec.~\ref{sec:kernelbiome}).
Finally, we illustrate the advantages of \KernelBiome in the experiments in
Sec.~\ref{sec:experiments}. 

% ==========================================================
% SECTION
% ==========================================================

\section{Methods}\label{sec:methods}

We consider the setting in which we observe $n$ independent and identically distributed (i.i.d.) observations $(X_1, Y_1),\ldots,(X_n,Y_n)$ of a random variable $(X, Y)$
with $X\in\simp^{p-1}$ a compositional predictor and $Y\in\bR$ a real-valued response
variable (by which we include categorical responses).
Supervised learning attempts to learn a relationship between the response $Y$ and the dependent predictors $X$. In this work, we focus on conditional mean relationships. More specifically, we are interested in estimating (a version of) the conditional mean of $Y$, that is, the function
\begin{equation}
    \label{eq:conditional_mean}
    f^*: x\mapsto \bE[Y\mid X=x].
\end{equation}
We assume that $f^*\in\mathcal{F}\subseteq\{f\mid f:\simp^{p-1} \to \bR\}$, where $\mathcal{F}$ is a function class determined by the regression (or classification) procedure.

While estimating and analyzing the conditional mean is well established for predictors in Euclidean space, there are two factors that complicate the analysis when the predictors are compositional. (i) While it is possible to directly apply most standard regression procedures designed for $X\in\bR^p$ also for $X\in \simp^{p-1}$, it turns out that many approaches are ill-suited to approximate functions on the simplex. (ii) Even if one accurately estimates the conditional mean function $f^*$, the simplex constraint complicates any direct assessment of the influence and importance of individual components of the compositional predictor. In this work, we address both issues and propose a nonparametric framework for regression and classification analysis for compositional data.

\subsection{Interpreting individual features}\label{sec:cfi}

Our goal when estimating the conditional mean $f^*$ given in \eqref{eq:conditional_mean} is to gain insight into the relationship between the response $Y$ and predictors $X$. For example, when fitting a log-contrast model (see Example~\ref{ex:log-contrast}), the estimated coefficients provide a useful tool to generate hypotheses about which features affect the response and thereby inform follow-up experiments. For more complex models, such as the nonparametric methods proposed in this work, direct interpretation of a fitted model $\hat{f}$ is difficult. Two widely applicable measures due to \citet{friedman2001greedy} are the following: (i) Relative influence, which assigns each coordinate $j$ a scalar influence value given by the expected partial derivative $\bE[\tfrac{d}{dx^j}\hat{f}(X)]$ and (ii) partial dependence plots, which are constructed by plotting, for each coordinate $j$, the function $z\mapsto \bE[\hat{f}(X^1,\ldots,X^{j-1}, z, X^{j+1},\ldots,X^p)]$. However, directly applying these measures on the simplex is not possible as we illustrate in App.~\ref{apd:cfi_cpd_pi_pdp}. The intuition is that both measures evaluate the function $\hat{f}$ outside the simplex. An adaption of the relative influence (or elasticity in the econometrics literature) to compositions based on the Aitchison geometry has recently been proposed by \citet{morais2021impact}. We adapt the relative influence without relying on the log-ratio transform and hence allow for more general function classes.

Our approach is based on two coordinate-wise perturbations on the simplex. For any $j\in\{1,\ldots,p\}$ and $x \in \simp^{p-1}$, define (i) for $c\in [0,\infty)$ the function $\psi_j(x, c) \in \simp^{p-1}$ to be the composition resulting from multiplying the $j$-th component by $c$ and then scaling the entire vector back into the simplex, and (ii) $c\in [0,1]$ the function $\phi_j(x, c)\in \simp^{p-1}$ to be the composition that consists of fixing the $j$-th coordinate to $c$ and then rescaling all remaining coordinates such that the resulting vector lies in the simplex. Each perturbation can be seen as a different way of intervening on a single coordinate while preserving the simplex structure. More details are given in App.~\ref{sec:add_defs}. Based on the first perturbation, we define the \emph{compositional feature influence} (CFI) of component $j\in\{1,\ldots,p\}$ for any differentiable function $f:\simp^{p-1}\rightarrow\bR$ 
by
\begin{equation}
    {\color{gray}\text{(CFI)}}\qquad I_f^j\coloneqq \bE\Big[\tfrac{d}{dc}f(\psi_{j}(X, c))\big\vert_{c=1}\Big].
\end{equation}
Similarly, we adapt partial dependence plots using the second perturbation. Define the \emph{compositional feature dependence} (CPD) of component $j\in\{1,\ldots,p\}$ for any function $f:\simp^{p-1}\rightarrow\bR$ by
\begin{equation}
    {\color{gray}\text{(CPD)}}\qquad S_f^j: z\mapsto \bE[f(\phi_j(X,z))]-\bE[f(X)].
\end{equation}
In practice, we can compute Monte Carlo estimates of both quantities by replacing expectations with empirical means. We denote the corresponding estimators by $\hat{I}^j_f$ and $\hat{S}^j_f$, respectively (see App.~\ref{sec:add_defs} for details).

The following proposition connects the CFI and CPD to the coefficients in a log-contrast function.
\begin{proposition}[CFI and CPD in the log-contrast model]
\label{prop:cfi_cpd_log_contrast_model}
Let $f: x \mapsto \beta^T\log(x)$  with $\sum_{j=1}^p \beta_j = 0$, 
then the CFI and CPD are given by
\begin{equation*}
    I_f^j=\beta_j
    \quad\text{and}\quad
    S_f^j: z\mapsto \beta_j \log\left(\tfrac{z^j}{1-z^j}\right) + c,
\end{equation*}
respectively,
where $c\in\bR$ is a constant depending on the distribution of $X$ but not on $z$ and satisfies $c=0$ if $\beta_j=0$.
\end{proposition}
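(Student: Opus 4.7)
The plan is to compute both quantities directly by substituting the perturbations $\psi_j$ and $\phi_j$ into the log-contrast form and exploiting the sum-to-zero constraint $\sum_k\beta_k=0$ to cancel the normalizing factors that arise from projecting back onto the simplex. First I would write out the explicit formulas for the two perturbations: for $x\in\simp^{p-1}$, $\psi_j(x,c)$ has $j$-th entry $cx^j/(1+(c-1)x^j)$ and other entries $x^k/(1+(c-1)x^j)$, while $\phi_j(x,z)$ has $j$-th entry $z$ and other entries $(1-z)x^k/(1-x^j)$. Both are obtained from App.~\ref{sec:add_defs} by multiplying the targeted coordinate and renormalizing.

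For the CFI, I plug $\psi_j(x,c)$ into $f(x)=\sum_k\beta_k\log x^k$ and obtain
\begin{equation*}
f(\psi_j(x,c)) \;=\; \sum_k\beta_k\log x^k \;+\; \beta_j\log c \;-\; \Bigl(\sum_k\beta_k\Bigr)\log\bigl(1+(c-1)x^j\bigr).
\end{equation*}
The last term vanishes by the constraint $\sum_k\beta_k=0$, so $f(\psi_j(x,c)) = f(x) + \beta_j\log c$. Differentiating in $c$ at $c=1$ gives $\beta_j$ pointwise in $x$, and therefore $I_f^j = \bE[\beta_j] = \beta_j$.

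For the CPD the computation is analogous. Substituting $\phi_j(x,z)$ and splitting the $k\neq j$ sum, the logs of the renormalizing factor $(1-z)/(1-x^j)$ multiply $\sum_{k\neq j}\beta_k = -\beta_j$, and one obtains
\begin{equation*}
f(\phi_j(x,z)) \;=\; \sum_{k\neq j}\beta_k\log x^k \;+\; \beta_j\log z \;-\;\beta_j\log(1-z) \;+\;\beta_j\log(1-x^j).
\end{equation*}
Taking expectations and subtracting $\bE[f(X)]$ isolates the $z$-dependent piece $\beta_j\log\bigl(z/(1-z)\bigr)$, leaving the $z$-independent remainder $c = \beta_j\,\bE\bigl[\log((1-X^j)/X^j)\bigr] - \bE[\beta_j\log X^j]$ (where the $\beta_j\log X^j$ term comes from replacing $\sum_{k\neq j}\beta_k\log X^k$ by $f(X)-\beta_j\log X^j$). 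This gives the claimed form $S_f^j(z) = \beta_j\log(z/(1-z)) + c$, and the factor $\beta_j$ in $c$ makes $c=0$ whenever $\beta_j=0$.

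The only non-routine issue is ensuring that the expectations defining $I_f^j$ and $S_f^j$ are finite, i.e.\ that $\bE[\lvert\log X^k\rvert]<\infty$ for every $k$; this will be implicit in the applicability of the log-contrast model itself (since $f(X)$ must be integrable), so I do not expect any obstacle there. Otherwise the proof is essentially algebraic manipulation, with the sum-to-zero constraint doing all the work to remove the renormalization terms that would otherwise obstruct a clean identification of the coefficients.
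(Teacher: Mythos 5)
Your proof is correct, and for the CPD part it follows essentially the same route as the paper: substitute $\phi_j$ into $f$, let the sum-to-zero constraint absorb the rescaling factor $s=(1-z)/(1-x^j)$, and read off the constant. For the CFI part your route is genuinely different and more elementary. The paper proceeds via the chain rule, first computing $\tfrac{d}{dc}\psi_j(x,c)$ and establishing the general identity $\tfrac{d}{dc}f(\psi_j(x,c))\big\vert_{c=1}=\langle \nabla f(x),\, x^j(e_j-x)\rangle$, and only then specializes to the log-contrast gradient $\nabla f(x)=(\beta_1/x^1,\ldots,\beta_p/x^p)^{\top}$. You instead observe that for log-contrast $f$ the normalization factor enters only through $(\sum_k\beta_k)\log s_c=0$, so that $f(\psi_j(x,c))=f(x)+\beta_j\log c$ exactly, and differentiate this scalar identity. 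Your shortcut is cleaner for this proposition, but it is specific to the log-contrast form; the paper's chain-rule identity is worth the extra work because it is reused verbatim (as equation \eqref{eq:CFI_part1}) in the proof of Theorem~\ref{thm:consistency}, where $f$ is a general differentiable function. One small bookkeeping slip: your stated constant $c=\beta_j\,\bE[\log((1-X^j)/X^j)]-\bE[\beta_j\log X^j]$ double-counts the $-\beta_j\bE[\log X^j]$ term; the correct value is $c=\beta_j\,\bE[\log((1-X^j)/X^j)]$, matching the paper's $\beta_j\bE[\log(\sum_{k\neq j}X^k)]-\beta_j\bE[\log X^j]$. This does not affect the proposition's claims, since every term in your constant still carries the factor $\beta_j$, so $z$-independence and $c=0$ when $\beta_j=0$ both survive. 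Your remark on integrability of $\log X^k$ is a reasonable caveat that the paper leaves implicit as well.
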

A proof is given in App.~\ref{apd:cfi_cpd_log_contrast_model_proof}. The proposition shows that the CFI and CPD are generalizations of the $\beta$-coefficients in the log-contrast model. The following example provides further intuition.

\begin{example}[CFI and CPD in a log-contrast model]
\label{ex:log-contrast}
Consider a log-contrast model $Y=f(X) + \epsilon$ with $f: x\mapsto 2\log(x^1) - \log(x^2) - \log(x^3)$.

The CFI and CPD for the true function $f$ --- estimated based on $n=100$ i.i.d.\ samples $(X_1,Y_1),\ldots,(X_n,Y_n)$ with $X_i$ compositional log-normal --- are shown in Fig.~\ref{fig:cfi_example}.
\end{example}

The following theorem highlights the usefulness of the CFI and CPD by establishing when they can be consistently estimated from data.
\begin{theorem}[Consistency]\label{thm:consistency}
Assume $\hat{f}_n$ is an estimator of the conditional mean $f^*$ given in \eqref{eq:conditional_mean} based on $(X_1,Y_1),\ldots,(X_n,Y_n)$ i.i.d..
\begin{itemize}
    \item[(i)] If $\frac{1}{n}\sum_{i=1}^n \big\|\nabla \hat{f}_n(X_i)-\nabla f^*(X_i)\big\|_2\overset{P}{\longrightarrow}0$ as $n\rightarrow\infty$ and $\bE[(\nabla f^*(X_i))^2]<\infty$, then it holds for all $j\in\{1,\ldots,p\}$ that
    $$\hat{I}_{\hat{f}_n}^j\overset{P}{\longrightarrow} I_{f^*}^j
    \quad\text{as }n\rightarrow\infty.$$
    \item[(ii)] If $\sup_{x\in\operatorname{supp}(X)}|\hat{f}_n(x)- f^*(x)|\overset{P}{\longrightarrow}0$ as $n\rightarrow\infty$ and $\operatorname{supp}(X)=\{w/(\sum_j w^j)\mid w\in \operatorname{supp}(X^1)\times\cdots\times \operatorname{supp}(X^p)\}$, then it holds for all $j\in\{1,\ldots,p\}$ and all $z\in[0,1]$ with $z/(1-z)\in\operatorname{supp}(X^j/\sum_{\ell\neq j}X^{\ell})$ that
    $$\hat{S}_{\hat{f}_n}^j(z)\overset{P}{\longrightarrow} S_{f^*}^j(z)
    \quad\text{as }n\rightarrow\infty.$$
\end{itemize}
\end{theorem}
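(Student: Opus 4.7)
The plan is to prove both parts by the standard decomposition $\hat I^j_{\hat f_n} - I^j_{f^*} = (\hat I^j_{\hat f_n} - \hat I^j_{f^*}) + (\hat I^j_{f^*} - I^j_{f^*})$, and analogously for $\hat S$, so that the first summand is controlled by the stated convergence of $\hat f_n$ to $f^*$ and the second summand is a sample average handled by the weak law of large numbers.

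For part~(i), the starting point is the chain rule applied to $\psi_j(x,c) = (x^1,\ldots,cx^j,\ldots,x^p)/(1+(c-1)x^j)$. A short computation yields $D_j f(x)\coloneqq\tfrac{d}{dc}f(\psi_j(x,c))|_{c=1} = x^j\bigl(\partial_j f(x) - \langle x,\nabla f(x)\rangle\bigr)$. Because $x\in\simp^{p-1}$ satisfies $x^j\leq 1$ and $\|x\|_2\leq 1$, the pointwise difference $|D_j\hat f_n(x)-D_jf^*(x)|$ is bounded by $2\|\nabla\hat f_n(x)-\nabla f^*(x)\|_2$. Averaging over the sample, the first summand of the decomposition is dominated by $\tfrac{2}{n}\sum_i\|\nabla\hat f_n(X_i)-\nabla f^*(X_i)\|_2$, which vanishes in probability by hypothesis. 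The second summand is a sample average of $D_jf^*(X_i)$; the moment assumption together with the inequality $|D_jf^*(x)|\leq 2\|\nabla f^*(x)\|_2$ yields $\bE|D_jf^*(X)|<\infty$, so the weak law of large numbers closes this part.

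For part~(ii), decompose $\hat S^j_{\hat f_n}(z)-S^j_{f^*}(z)$ into $\hat S^j_{\hat f_n}(z)-\hat S^j_{f^*}(z)$, bounded in absolute value by $\tfrac{1}{n}\sum_i|\hat f_n(\phi_j(X_i,z))-f^*(\phi_j(X_i,z))|+\tfrac{1}{n}\sum_i|\hat f_n(X_i)-f^*(X_i)|$, and the sample-mean error $\hat S^j_{f^*}(z)-S^j_{f^*}(z)$, which tends to $0$ in probability by the WLLN applied separately to each of its two empirical averages (integrability of $f^*$ on $\operatorname{supp}(X)$ follows, for large $n$, from the uniform closeness to $\hat f_n$). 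For the first error, the crucial step is to verify that $\phi_j(X_i,z)\in\operatorname{supp}(X)$ almost surely: under the product-support hypothesis any $X_i$ can be written as $w/\sum_\ell w^\ell$ with $w^\ell\in\operatorname{supp}(X^\ell)$, and the assumption $z/(1-z)\in\operatorname{supp}(X^j/\sum_{\ell\neq j}X^\ell)$ guarantees the existence of a $\tilde w^j\in\operatorname{supp}(X^j)$ such that replacing $w^j$ by $\tilde w^j$ (and renormalizing) produces exactly $\phi_j(X_i,z)$. Once this is established, the first error is bounded by $2\sup_{x\in\operatorname{supp}(X)}|\hat f_n(x)-f^*(x)|\to 0$ in probability by hypothesis.

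The main obstacle is this support-compatibility step in part~(ii): the uniform convergence assumption only controls $\hat f_n$ on $\operatorname{supp}(X)$, so proving consistency of the CPD requires showing that the perturbed point $\phi_j(X_i,z)$ never escapes this set. The product-structure assumption and the restriction on $z$ are precisely engineered for this purpose, and making their role fully explicit is the main geometric bookkeeping of the proof; the remaining analytic pieces (chain rule, triangle inequality, WLLN) are routine.
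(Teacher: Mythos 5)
Your proposal is correct and follows essentially the same route as the paper's proof: the same triangle-inequality decomposition into an estimation term and a sampling term for both CFI and CPD, the same chain-rule identity $\tfrac{d}{dc}f(\psi_j(x,c))|_{c=1}=\langle\nabla f(x),\,x^j(e_j-x)\rangle$ with a Cauchy--Schwarz bound (the paper gets constant $1$ from $\|x^j(e_j-x)\|_2\le 1$ rather than your $2$, which is immaterial), the same WLLN step, and the same support-compatibility argument showing $\phi_j(X_i,z)\in\operatorname{supp}(X)$ so that the uniform convergence hypothesis applies. The only cosmetic deviation is your remark that integrability of $f^*$ ``follows for large $n$'' from closeness to $\hat f_n$ --- integrability is deterministic and should instead be justified by boundedness of $f^*$ on the compact support, which is also how the paper (somewhat tersely) argues.
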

A proof is given in App.~\ref{apd:conistency_proof} and the result is demonstrated on simulated data in App.~\ref{apd:consistency_simulation}. The theorem shows that the CFI is consistently estimated as long as the derivative of $f^*$ is consistently estimated, which can be ensured for example for the kernel methods discussed in Sec.~\ref{sec:kernel}. In contrast, the CPD only requires the function $f^*$ itself to be consistently estimated. The additional assumption on the support ensures that the perturbation $\phi_j$ used in the CPD remains within the support. If this assumption is not satisfied one needs to ensure that the estimated function extrapolates beyond the sample support. Interpreting the CPD therefore requires caution.

\subsection{Kernel methods for compositional data analysis}\label{sec:kernel}

\begin{figure}[t]
    \centering
    \includegraphics{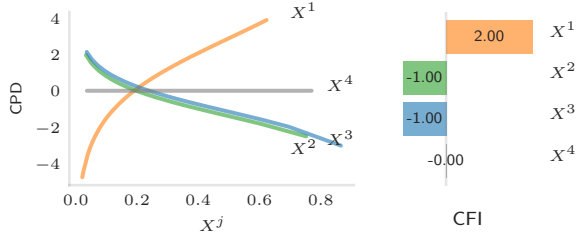}
    \caption{Visualization of the CPD (left) and CFI (right) based on $n=100$ samples and the true function $f$. Since $\beta_4=0$ in this example the $4$-th component has no effect on the value of $f$ resulting in a CFI of zero and a flat CPD. Since we are not estimating $f$, the CFI values exactly correspond to the $\beta$-coefficients in this example.}
    \label{fig:cfi_example}
\end{figure}

Before presenting our proposed weighted and unweighted kernels, we briefly review the necessary background on kernels and their connection to distances. Kernel methods are a powerful class of nonparametric statistical methods that are particularly useful for data from non-standard (i.e., non-Euclidean) domains $\mathcal{X}$. The starting point is a symmetric, positive definite function  $k:\mathcal{X}\times\mathcal{X}\rightarrow\bR$, called kernel. Kernels encode similarities between points in $\mathcal{X}$, i.e., large values of $k$ correspond to points that are similar and small values to points that are less similar. Instead of directly analyzing the data on $\mathcal{X}$, kernel methods map it into a well-behaved feature space $\mathcal{H}_{k}\subseteq\{f\mid f:\mathcal{X}\rightarrow\bR\}$ called reproducing kernel Hilbert space (RKHS), whose inner product preserves the kernel induced similarity.

Here, we consider kernels on the simplex, that is, $\cX=\simp^{p-1}$. The conditional mean function $f^*$ given in \eqref{eq:conditional_mean} can then be estimated by optimizing a loss over $\cH_k$, for an appropriate kernel $k$ for which $\cH_k$ is sufficiently rich, i.e., $f^*\in\cH_k$. The representer theorem \citep[e.g.,][]{scholkopf2002learning} states that such an optimization over $\cH_k$ can be performed efficiently. Formally, it states that the minimizer of an arbitrary convex loss function $L:\bR^n \times \bR^n \rightarrow [0,\infty)$ of the form
\begin{equation*}
    \hat{f} = \argmin_{f\in\cH_k} L\big((Y_1, f(X_1)),\ldots, (Y_n,f(X_n))\big) + \lambda||f||^2_{\cH_k},
\end{equation*}
with $\lambda>0$ a penalty parameter, has the form $\hat{f}(\cdot) = \sum_{i=1}^n \hat\alpha_i k(X_i,\cdot)$ for some $\hat{\alpha}\in\bR^n$. This means that instead of optimizing over a potentially infinite-dimensional space $\cH_k$, it is sufficient to optimize over the $n$-dimensional parameter $\hat{\alpha}$. Depending on the loss function, this allows to construct efficient regression and classification procedures, such as kernel ridge regression and support vector machines \citep[e.g.,][]{scholkopf2002learning}.

The performance of the resulting prediction model depends on the choice of kernel as this determines the function space $\mathcal{H}_k$. A useful way of thinking about kernels is via their connection to distances. In short, any kernel $k$ induces a unique semi-metric $d_k$ and vice versa. More details are given in App.~\ref{app:short_connection_dist_kernel}. This connection has two important implications. Firstly, it provides a natural way for constructing kernels based on established distances on the simplex. The intuition being that a distance, which is large for observations with vastly different responses and small otherwise, leads to an informative feature space $\cH_k$. Secondly, it motivates using the kernel-induced distance, see Sec.~\ref{sec:model_analysis}.

\subsubsection{Kernels on the simplex}\label{sec:kernsl_for_comp}

We consider four types of kernels on the simplex, each related to different types of distances. A full list with all kernels and induced
distances is provided in App.~\ref{apd:list_kernels}. While most kernels have previously appeared in the literature, we have adapted many of the kernels to fit into the framework provided here, e.g., added zero-imputation for Aitchison kernels and updated the parametrization for the probability distribution kernels.

\textbf{Euclidean:} These are kernels that are constructed by restricting kernels on $\bR^p$ to the simplex. Any such restriction immediately guarantees that the restricted kernel is again a kernel. However, the induced distances are not targeted to the simplex and therefore can be unnatural choices. In \KernelBiome, we have included the linear kernel and the radial basis function (RBF) kernel. The RBF kernel is $L^p$-universal \citep[e.g.,][]{sriperumbudur2011universality} which means that it can approximate any integrable function (in the large sample limit). However, this does not necessarily imply good performance for finite sample sizes.

\textbf{Aitchison geometry:} One way of incorporating the simplex structure is to use the Aitchison geometry. Essentially, this corresponds to mapping points from the interior of the simplex via the centered log-ratio transform into $\bR^p$ and then using the Euclidean geometry. This results in the Aitchison kernel for which the induced RKHS is equal to the log-contrast functions. In particular, applying kernel ridge regression with an Aitchison kernel corresponds to fitting a log-contrast model with a penalty on the coefficients. As the centered log-ratio transform is only defined for interior points in the simplex, we add a hyperparameter to the kernels that shift them away from zero.
From this perspective, the commonly added pseudo-count constant added to all components becomes a tuneable hyperparameter of our method, rather than a fixed ad-hoc choice during data pre-processing. Thereby, our modified Aitchison kernel respects the fact that current approaches to zero-replacement or imputation are often not biologically justified, yet may impact predictive performance. Our proposed zero-imputed Aitchison kernel comes with two advantages over standard log-contrast modelling: (1) A principled adjustment for zeros and (2) an efficient form of high-dimensional regularization that performs well across a large range of our experiments. In \KernelBiome, we include the Aitchison kernel and the Aitchison-RBF kernel which combines the Aitchison and RBF kernels.

\textbf{Probability distributions:} Another approach to incorporate the simplex structure into the kernel is to view points in the simplex as discrete probability distributions. This allows us to make use of the extensive literature on distances between probability distributions to construct kernels. In \KernelBiome, we have adapted two classes of such kernels: (1) A parametric class based on generalized Jensen-Shannon distances due to \citet{topsoe2003jenson}, which we call generalized-JS, and (2) a parametric class based on the work by \citet{hein2005hilbertian}, which we call Hilbertian. Together they contain many well-established distances such as the total variation, Hellinger, Chi-squared, and Jensen-Shannon distance. All resulting kernels allow for zeros in the components of compositions.

\textbf{Riemannian manifold:} Finally, the simplex structure can be incorporated by using a multinomial distribution which has a parameter in the simplex. \citet{lafferty2005diffusion} show that the geometry of multinomial statistical models can be exploited by using kernels based on the heat equation on a Riemannian manifold. The resulting kernel is known as the heat-diffusion kernel and has been observed to work well with sparse data.

\subsubsection{Including prior information into kernels}\label{sec:prior_info}

All kernels introduced in the previous section (and described in detail in App.~\ref{apd:list_kernels}) are invariant under permutations of the compositional components. They therefore do not take into account any relation between the components. In many applications, one may however have prior knowledge about the relation between the components. For example, if the compositional predictor consists of relative abundances of microbial species, information about the genetic relation between different species encoded in a phylogenetic tree may be available. Therefore, we provide the following way to incorporate such relations. Assume the prior information has been expressed as a positive semi-definite weight matrix $W\in\bR^{p\times p}$ with non-negative entries (e.g., using the UniFrac-Distance \citep{lozupone2005unifrac} as shown in App.~\ref{apd:unifrac}), where the $ij$-th entry corresponds to the strength of the relation between components $i$ and $j$. We can then incorporate $W$ directly into our kernels. To see how this works, consider the special case where the kernel $k$ can be written as
$k(x,y) = \textstyle\sum_{i=1}^p k_0(x^i, y^i)$ for a positive definite kernel $k_0: [0,1] \times [0,1] \to \bR$. Then, the weighted kernel 
\begin{equation}
 k_W(x,y) \coloneqq \textstyle\sum_{i,j=1}^p W_{i,j} \cdot k_0(x^i, y^j)
\end{equation}
is positive definite and incorporates the prior information in a natural way. If two components $i$ and $j$ are known to be related (corresponding to large values of $W_{i,j}$), the kernel $k_W$ takes the similarity across these components into account. In App.~\ref{apd:weighted_kernels_details}, we show that the probability distribution kernels and the linear kernel can be expressed in this way
and propose similar weighted versions for the remaining kernels.

An advantage of our framework is that it defaults to the log-contrast model when more complex models fail to improve the prediction.\footnote{Due to the zero-shift in our proposed Aitchison kernel and the kernel-based regularization, this correspondence is however not exact.} We now show that for the weighted Aitchison kernel, the RKHS consists of log-contrast functions with equal coefficients across the weighted blocks, this is similar to how \citet{bien2021tree} incorporate prior information into log-contrast models.

\begin{proposition}[weighted Aitchison kernel RKHS]
\label{prop:weighted_aitchison}
Let $P_1,\ldots,P_m\subseteq\{1,\ldots,p\}$ be a disjoint partition and $W\in\mathbb{R}^{p\times p}$ the weight matrix defined for all $i,j\in\{1,\ldots,p\}$ by $W_{i,j}\coloneqq \sum_{\ell=1}^m \frac{1}{|P_{\ell}|}\mathds{1}_{\{i,j\in P_{\ell}\}}$. Let $k_W$ be the weighted Aitchison kernel given in App.~\ref{apd:list_weighted_kernels} (but without zero imputation and on the open simplex). Then, it holds that
\begin{equation*}
    f\in \mathcal{H}_{k_W} \quad\Leftrightarrow\quad
    f= \beta^{\top}\log(\cdot)
\end{equation*}
for some $\beta\in\mathbb{R}^p$ satisfying (1) $\sum_{j=1}^p\beta_j=0$ and (2) for all $\ell\in\{1,\ldots,m\}$ and $i,j\in P_{\ell}$ it holds $\beta_i=\beta_j$
\end{proposition}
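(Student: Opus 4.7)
My plan is to give $k_W$ an explicit finite-dimensional feature map whose coordinates are log-contrasts, and then read off the RKHS as the span of these coordinates. First I would use the explicit form of the weighted Aitchison kernel from App.~\ref{apd:list_weighted_kernels}, which on the open simplex (without zero imputation) has the form $k_W(x,y)=\clr(x)^\top W\,\clr(y)=\sum_{i,j=1}^p W_{i,j}\,\clr(x)_i\,\clr(y)_j$. Plugging in the block weights $W_{i,j}=\sum_{\ell=1}^m\tfrac{1}{|P_\ell|}\mathds{1}_{\{i,j\in P_\ell\}}$ and factorising the block sum yields
\begin{equation*}
k_W(x,y)=\sum_{\ell=1}^m u_\ell(x)\,u_\ell(y),\qquad u_\ell(x)\coloneqq \tfrac{1}{\sqrt{|P_\ell|}}\sum_{i\in P_\ell}\clr(x)_i,
\end{equation*}
so $k_W$ is a finite-rank kernel with explicit feature map $\Phi=(u_1,\ldots,u_m)\in\mathbb{R}^m$.

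Next I would invoke the standard characterization of the RKHS of a kernel with an explicit finite-dimensional feature map, which gives $\mathcal{H}_{k_W}=\{x\mapsto\langle v,\Phi(x)\rangle:v\in\mathbb{R}^m\}=\operatorname{span}\{u_1,\ldots,u_m\}$. The proposition then reduces to identifying this span with the set of $\beta^\top\log(\cdot)$ satisfying (1) $\sum_i\beta_i=0$ and (2) $\beta$ constant on each $P_\ell$.

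For the $(\Rightarrow)$ direction, I would expand $\clr(x)_i=\log x_i-\tfrac{1}{p}\sum_k\log x_k$ and collect coefficients of $\log x_i$. For $f=\sum_\ell v_\ell u_\ell$ this yields $f(x)=\sum_i\beta_i\log x_i$ with $\beta_i=\tfrac{v_\ell}{\sqrt{|P_\ell|}}-C$ for $i\in P_\ell$, where $C=\tfrac{1}{p}\sum_{\ell'}v_{\ell'}\sqrt{|P_{\ell'}|}$; by construction $\beta$ is constant on each block, and a short calculation using $\sum_\ell|P_\ell|=p$ gives $\sum_i\beta_i=0$. For the $(\Leftarrow)$ direction, given $\beta$ satisfying (1) and (2), let $\beta^{(\ell)}$ denote its common value on $P_\ell$ and set $v_\ell\coloneqq\sqrt{|P_\ell|}\,\beta^{(\ell)}$; then $\sum_\ell v_\ell u_\ell=\sum_i\beta_i\clr(x)_i=\sum_i\beta_i\log x_i-\tfrac{\sum_i\beta_i}{p}\sum_k\log x_k=\beta^\top\log(x)$, where the last equality uses $\sum_i\beta_i=0$ to kill the geometric-mean term.

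The main subtlety I expect is that the feature map $\Phi$ is \emph{not} of full rank on the open simplex: since $\mathbf{1}^\top\clr\equiv 0$, one has $\sum_\ell\sqrt{|P_\ell|}\,u_\ell\equiv 0$, so $\Phi$ has rank $m-1$ and the assignment $v\mapsto\sum_\ell v_\ell u_\ell$ is not injective. This does not affect the set-theoretic characterization claimed by the proposition (any $v$ in the preimage produces the same $f$), but it does mean some care is needed if one also wants the induced RKHS norm, since one must pass to a quotient. A useful consistency check for the whole argument is a dimension count: both the span in Step~2 and the set of block-constant $\beta$ with $\sum_j\beta_j=0$ are $(m-1)$-dimensional.
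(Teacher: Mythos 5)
Your proposal is correct, and it takes a genuinely different (and in one respect more complete) route than the paper. The paper's proof starts from the characterization of $\mathcal{H}_{k_W}$ as the closure of finite linear combinations $\sum_i\alpha_i k_W(z_i,\cdot)$ of kernel sections, expands such a combination, sets $\widetilde{\beta}_{\ell}=\sum_i\alpha_i\log(z_i^{\ell}/g(z_i))$ and $\beta_j=\sum_{\ell}W_{\ell,j}\widetilde{\beta}_{\ell}$, and verifies properties (1) and (2) using the row sums $\sum_jW_{\ell,j}=1$ and the identity $\sum_j\widetilde{\beta}_j=0$; as written it only makes the inclusion $\mathcal{H}_{k_W}\subseteq\{\beta^{\top}\log(\cdot)\}$ explicit. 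You instead factorize the block weight matrix to exhibit $k_W(x,y)=\sum_{\ell}u_{\ell}(x)u_{\ell}(y)$ with $u_{\ell}=\tfrac{1}{\sqrt{|P_{\ell}|}}\sum_{i\in P_{\ell}}\clr(\cdot)_i$, invoke the standard description of the RKHS of a finite-rank kernel as the span of the feature coordinates, and then translate between $v$ and $\beta$ in both directions. This buys you (i) an explicit proof of the reverse implication $(\Leftarrow)$, which the paper's argument leaves implicit (one would otherwise need to show that the map $\alpha\mapsto\widetilde{\beta}$ is surjective onto $\{\widetilde{\beta}:\sum_j\widetilde{\beta}_j=0\}$), and (ii) a transparent dimension count; your remark on the rank deficiency $\sum_{\ell}\sqrt{|P_{\ell}|}\,u_{\ell}\equiv 0$ is the right caveat for the induced norm and does not affect the set-level claim. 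The paper's route, in turn, avoids invoking the feature-map characterization theorem and its forward direction uses only $\sum_jW_{\ell,j}=1$ rather than the full block structure. Both arguments sidestep closure issues because the relevant spaces are finite-dimensional, which is worth stating explicitly in either version.
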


A proof is given in App.~\ref{apd:prop:weighted_aitchison}. Combined with Proposition~\ref{prop:cfi_cpd_log_contrast_model}, this implies that the CFI values are equal across the equally weighted blocks $P_1,\ldots,P_m$, which is demonstrated on data in Sec.~\ref{sec:weighting_exp}.

\subsection{KernelBiome framework}\label{sec:kernelbiome}

For a given i.i.d.\ dataset $(X_1,Y_1),\ldots,(X_n,Y_n)$, \KernelBiome first runs a data-driven model selection, resulting in an estimated regression function $\hat{f}$ and a specific kernel $\hat{k}$ (see Fig.~\ref{fig:workflow}). Then, the feature influence properties (CFI, CPD) and embedding induced by $\hat{k}$ are analyzed in a way that respects compositionality.

\subsubsection{Model selection}

We propose the following two step data-driven selection procedure.
\begin{enumerate}
    \setlength{\itemsep}{-7pt}
    \item Select the best kernel $\hat{k}$ with the following hierarchical CV:
    \vspace{-7pt}
    \begin{itemize}
        \item Fix a kernel $\tilde{k}$, i.e., a type of kernel and its kernel parameters.
        \item Split the sample into $N_{\text{out}}$ random (or stratified) folds.
        \item For each fold, use all other folds to perform a $N_{\text{in}}$-CV to select the best hyperparameter $\tilde{\lambda}$ and compute a CV score based on $\tilde{k}$ and $\tilde{\lambda}$ on the left-out fold.
        \item Select the kernel $\hat{k}$ with the best average CV score.
    \end{itemize}
    \item Select the best hyperparameter $\hat{\lambda}$ for $\hat{k}$ using a $N_{\text{in}}$-fold CV on the full data. The final estimator $\hat{f}$ is then given by the kernel predictor based on $\hat{k}$ and $\hat{\lambda}$.
\end{enumerate}
We provide default parameter grids for the kernel parameters of each kernel type (see Table~\ref{tab:default_param_grid} in App.~\ref{apd:summary_kernels_and_metrics}). The kernel and the hyperparameters are treated separately, because the kernel is itself of interest and is used in subsequent analyses as discussed in the next section.

\subsubsection{Model analysis}\label{sec:model_analysis}

Firstly, as discussed in Sec.~\ref{sec:cfi}, we propose to analyze the fitted model $\hat{f}$ with the CPD and CFI. Other methods developed for functions on $\bR^p$ do not account for compositionality and can be misleading. Secondly, the kernel embedding $\hat{k}$ can be used for the following two types of analyses.

\textbf{Distance-based analysis:} A key advantage of using kernels is that the fitted kernel $\hat{k}$ is itself helpful in the analysis. As discussed in Sec.~\ref{sec:kernel}, $\hat{k}$ induces a distance on the simplex that is well-suited to separate observations with different response values. We therefore suggest to utilize this distance to investigate the data further. Essentially, any statistical method based on distances can be applied.
We specifically suggest using kernel PCA to project the samples into a two-dimensional space. As we illustrate in Sec.~\ref{sec:postanalysis_exp}, such a projection can be used to detect specific groups or outliers in the samples and can also help understand how the predictors are used by the prediction model $\hat{f}$. As we are working with compositional data we need to be careful when looking at how individual components contribute to each principle component. Fortunately, the perturbation $\psi$ defined in Sec.~\ref{sec:cfi} can again be used to construct informative component-wise measures. All details on kernel PCA and how to compute component-wise contributions for each principle component are provided in App.~\ref{app:kernel_mds}. 

\textbf{Data-driven scalar summary statistics:} 
Practitioners often work with scalar summaries of the data as these are easy to communicate.
A commonly used summary statistic in ecology is $\alpha$-diversity which measures the variation within a community. The connection between kernels and distances provides a useful tool to construct informative scalar summary statistics by considering distances to a reference point $u$ in the simplex. Formally, for a fixed reference point $u\in\simp^{p-1}$ define for all $x\in\simp^{p-1}$ a corresponding closeness measure by $D^k(x)\coloneqq -d^2_k(x, u)$, where $d_k$ is the distance induced by the kernel $k$. This provides an easily interpretable scalar quantity. For example, if $Y$ is a binary indicator taking values \emph{healthy} and \emph{sick}, we could select $u$ to be the geometric median\footnote{The geometric median is the observation that has the smallest total distance to all the other observations based on the pairwise kernel distance.} of all $X_i$ with $Y_i=\text{\emph{healthy}}$. Then, $D^k$ corresponds to a very simple health score (see Sec.~\ref{sec:postanalysis_exp} for a concrete example). A further example is given by selecting $u=(1/p,\ldots,1/p)$ and considering points on the simplex as communities. Then, $u$ can be interpreted as the most diverse point in the simplex and $D^k$ corresponds to a data-adaptive $\alpha$-diversity measure. While such a definition of diversity does not necessarily satisfy all desirable properties for diversities \citep[see e.g.,][]{Cobbold2012}, it is (1) symmetric with respect to switching of coordinates, (2) has an intuitive interpretation and (3) is well-behaved when combined with weighted kernels. Connections to established diversities also exist, for example, the linear kernel corresponds to a shifted version of the Gini-Simpson diversity (i.e., $\text{Gini-Simpson}(x)\coloneqq1-\sum_{j=1}^{p}(x^j)^2=D^{k}(x)+\tfrac{p-2}{p}$).

\subsubsection{Implementation}

\KernelBiome is implemented as a Python \citep{vanrossum2009python} package that takes advantage of the high-performance \texttt{JAX} \citep{jax2018github} and \texttt{scikit-learn} \citep{scikit-learn} libraries. All kernels introduced are implemented with \texttt{JAX}'s just-in-time compilation and automatically leverage accelerators such as GPU and TPU whenever available. \KernelBiome provides fast computation of all kernels and distance metrics as well as easy-to-use procedures for model selection and comparison and procedures to estimate CPD and CFI, compute kernel PCA, and estimate scalar summary statistics.

% ==========================================================
% SECTION
% ==========================================================

\section{Results}\label{sec:experiments}

We evaluated \KernelBiome on a total of $33$ microbiome datasets. All datasets have been previously published and a full list including details on the pre-processing, prediction task and references is provided in Table~\ref{tab:datasets} in App.~\ref{apd:details_on_data}. First, in Sec.~\ref{sec:prediction_exp}, we show that \KernelBiome performs on par or better than existing supervised learning procedures for compositional data, while reducing to a powerfully regularized version of the log-contrast model if the prediction task is simple. In Sec.~\ref{sec:weighting_exp}, we demonstrate how \KernelBiome can incorporate prior knowledge, while preserving a theoretically justified interpretation. Finally, in  Sec.~\ref{sec:postanalysis_exp}, we illustrate the advantages of a full analysis with \KernelBiome.

\subsection{State-of-the-art prediction performance}\label{sec:prediction_exp}

We compare the predictive performance of \KernelBiome on all datasets with the following competitors: (i) \texttt{Baseline}, a naive baseline that predicts the majority class for classification and the mean for regression, (ii) \texttt{SVM-RBF}, a support vector machine with the RBF kernel, (iii) \texttt{Lin/Log-L1}, a linear/logistic regression with $\ell^1$-penalty (iv) \texttt{LogCont-L1}, a log-contrast regression with $\ell^1$ penalty with a half of the minimum non-zero relative abundance added as pseudo-count to remove zeros, and (v) \texttt{RF}, a random forest with $500$ trees. For \texttt{SVM-RBF}, \texttt{Lin/Log-L1} and \texttt{RF} we use the \texttt{scikit-learn} implementations \citep{scikit-learn} and choose the hyperparameters (bandwidth, max depth and all penalty parameters) based on a 5-fold CV. For \texttt{LogCont-L1}, we use the \texttt{c-lasso} package \citep{simpson2021classo} and the default CV scheme to chose the penalty parameter. We apply two versions of \KernelBiome: (1) The standard version that adaptively chooses the kernel using $N_{\text{in}}=5$, $N_{\text{out}}=10$ (denoted \KernelBiome), and (2) a version with fixed Aitchison kernel with $c$ equal to half of the minimum non-zero relative abundance (denoted \texttt{KB-Aitchison}). Both methods use a default hyperparameter grid of size $40$.

For the comparison we perform $20$ random (stratified) $10$-fold train/test splits and record the predictive performance (accuracy for classification and mean-squared error (MSE) for regression) on each test set. The results for a subselection of the datasets are shown in Fig.~\ref{fig:real_datasets_scores}. The results for the remaining datasets as well as ROC-curves for all classification tasks are provided in App.~\ref{apd:remaining_results}. 
On all datasets \KernelBiome achieves the best or close to best performance, indicating that the proposed procedure is well-adapted to microbiome data. There are several further interesting observations: (1) Even though \KernelBiome selects mostly the Aitchison kernel on rmp, it outperforms \texttt{KB-Aitchison}, we attribute this to the advantage of the data-driven zero-imputation. (2) On datasets were the top kernel is selected consistently (e.g., americangut, hiv and tara) \KernelBiome generally performs very well and in these cases strongly outperformed both log-contrast based methods 
\texttt{KB-Aitchison} and \texttt{LogCont-L1}. (3) The predictive performance is substantially different between \texttt{KB-Aitchison} and \texttt{LogCont-L1} which we see as an indication that the regularization is crucial in microbiome datasets.

\begin{figure}[!tpb]
  \centering
  \includegraphics[width=0.75\textwidth]{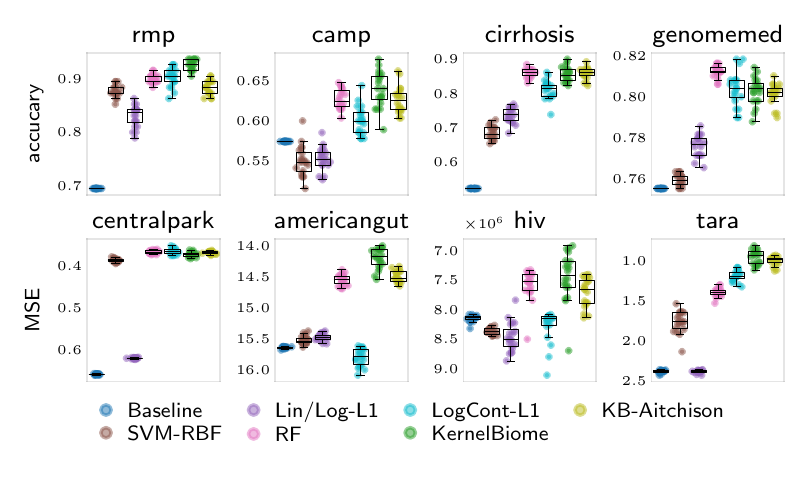}
  \caption{Comparison of predictive performance on 8 public datasets (4 classification and 4 regression tasks) based on a 10-fold train/test split. The frequency and kernel which was selected most often by \KernelBiome is $58\%$ Aitchison for rmp, $50\%$ Aitchison-RBF for camp, $73\%$ Aichtison-RBF for cirrhosis, $67\%$ Aitchison-RBF for genomemed, $45\%$ generalized-JS for centralpark, $100\%$ Aitchison-RBF for americangut, $97\%$ Aitchison-RBF for hiv and $93\%$ Aitchison-RBF for tara.}
  \label{fig:real_datasets_scores}
\end{figure}

\subsection{Model analysis with KernelBiome}\label{sec:postanalysis_exp}

\begin{figure*}[t]
\resizebox{\linewidth}{!}{
    \begin{tikzpicture}
    \node (A) at (0, 0) {\includegraphics[width=0.5\linewidth]{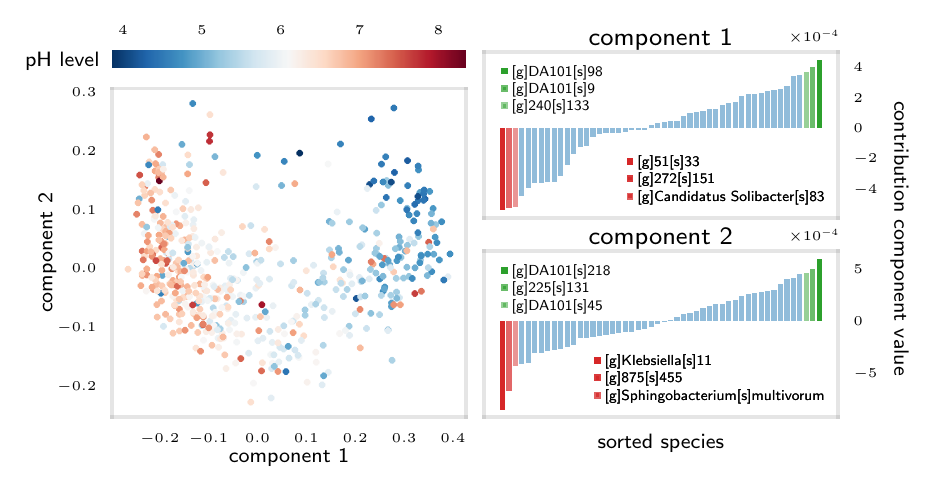}};
    \node (B) at (6.9, 0.95) {\includegraphics{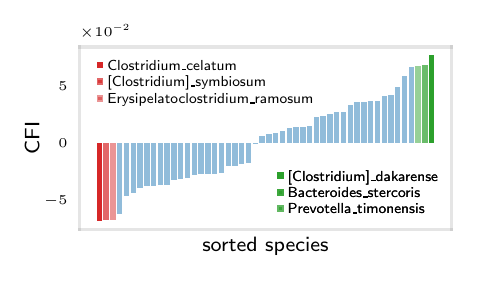}};
    \node (C) at (11, 0) {\includegraphics{figures/sum_stat_and_sim_div_cirrhosis.pdf}};
    \node (E) at (6.9, -1.6) {\includegraphics[width=0.28\linewidth]{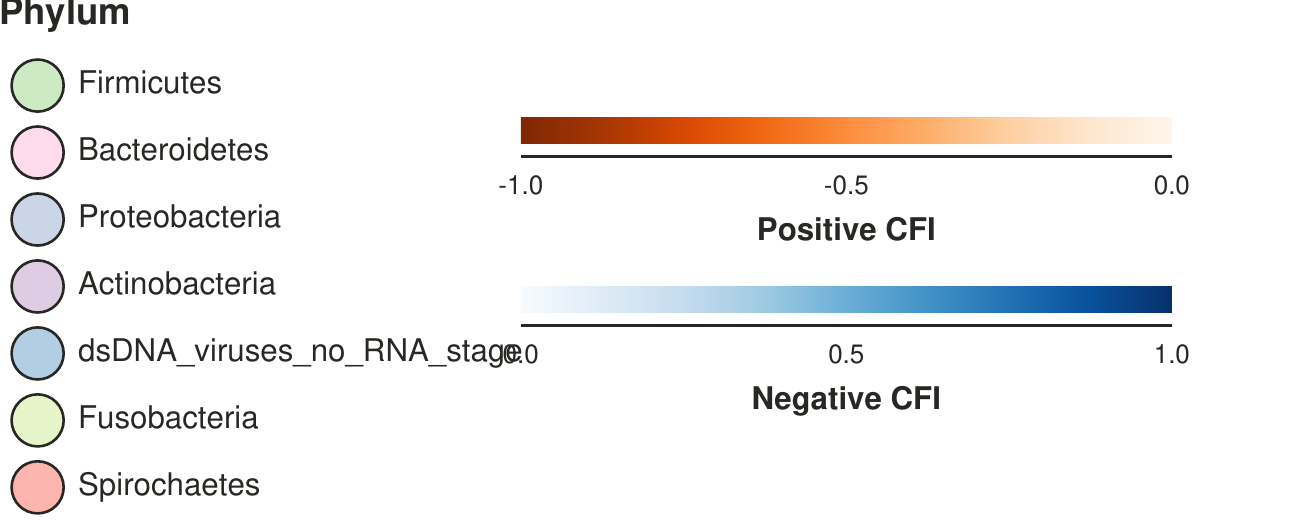}};
    \node (D) at (4, -5) {
    \includegraphics[width=.22\linewidth, trim={0 -6cm 0 1cm},clip]{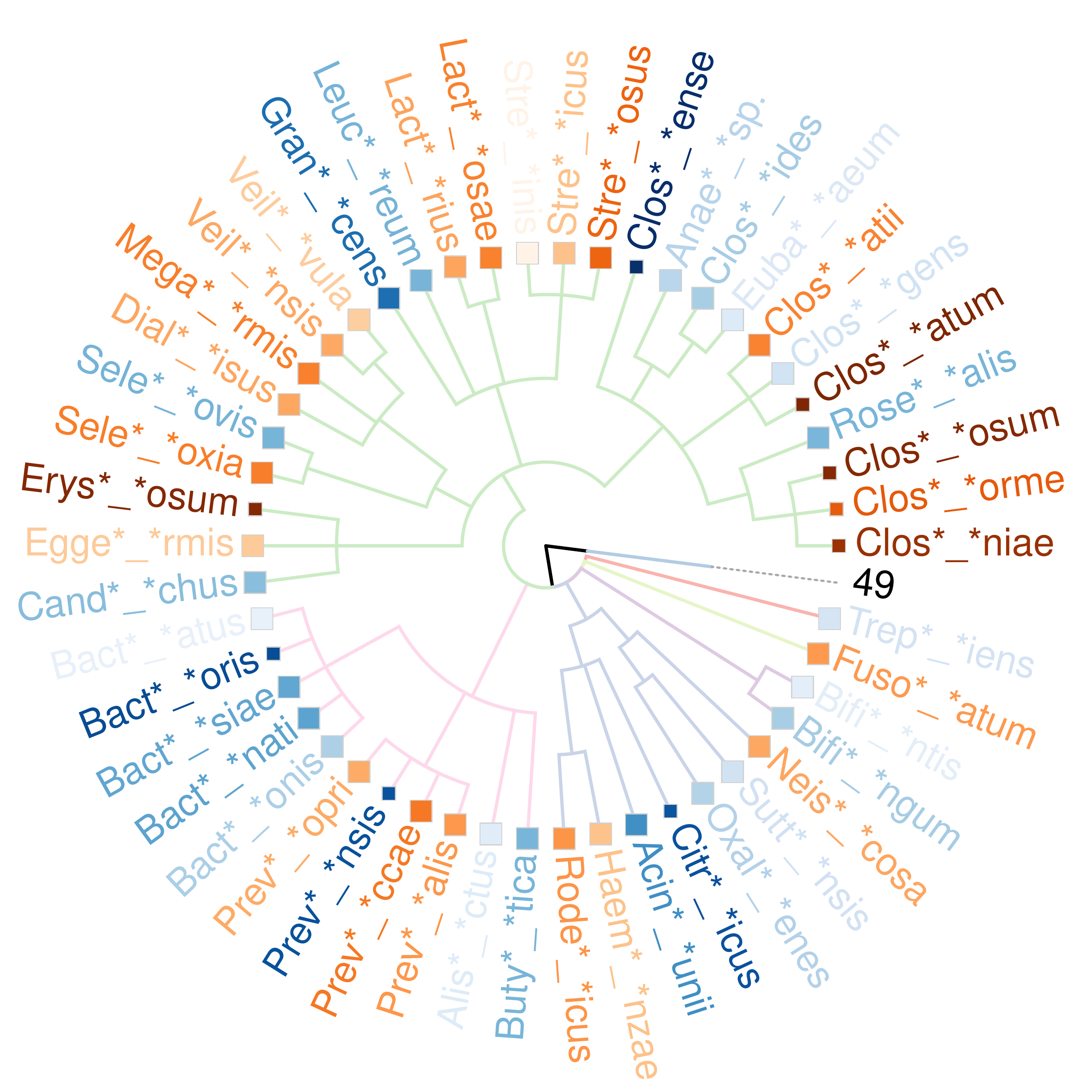} 
    \hspace{1cm}
    \includegraphics[width=.22\linewidth, trim={0 -6cm 0 1cm},clip]{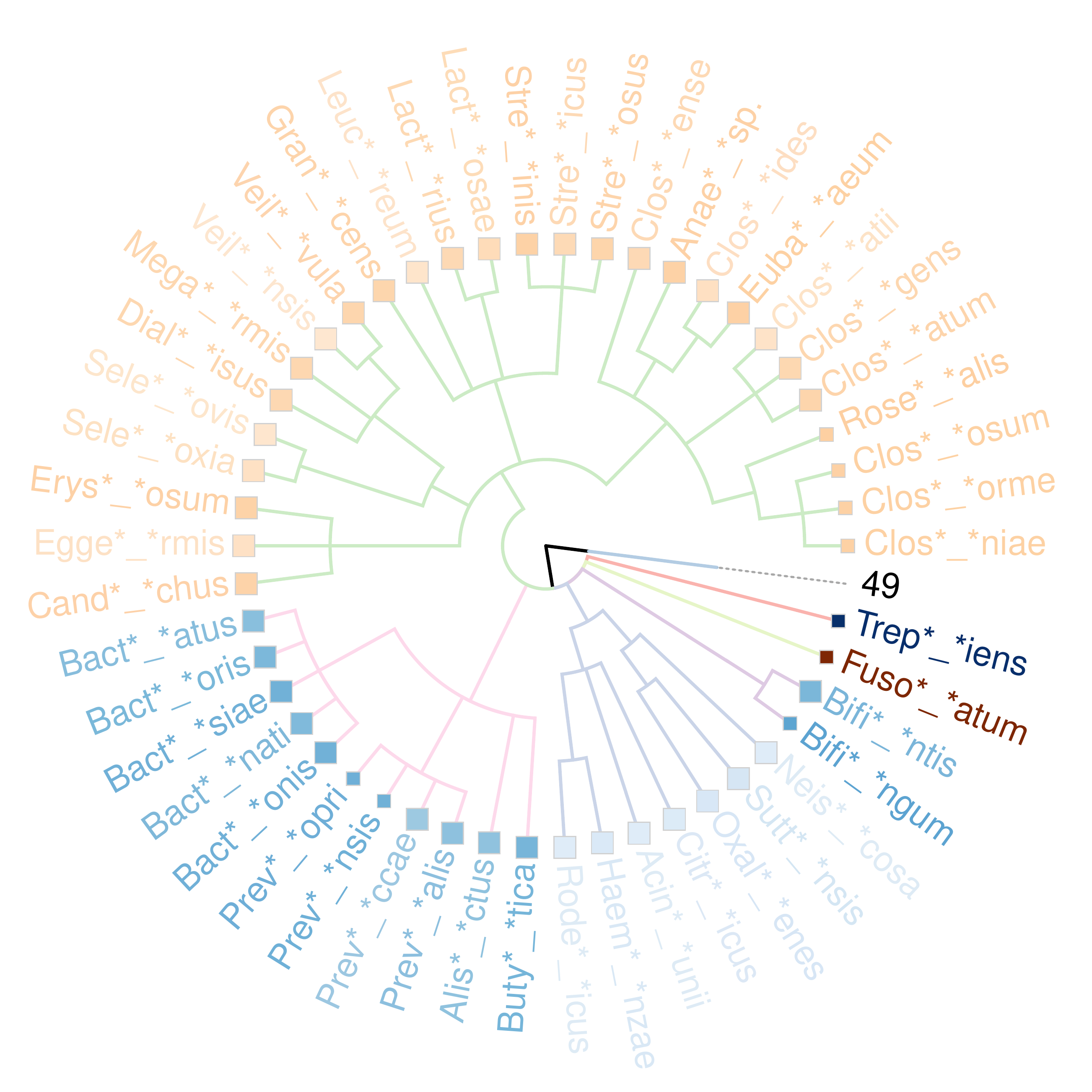}
    \hspace{1cm}
    \includegraphics[width=.22\linewidth, trim={0 -6cm 0 1cm},clip]{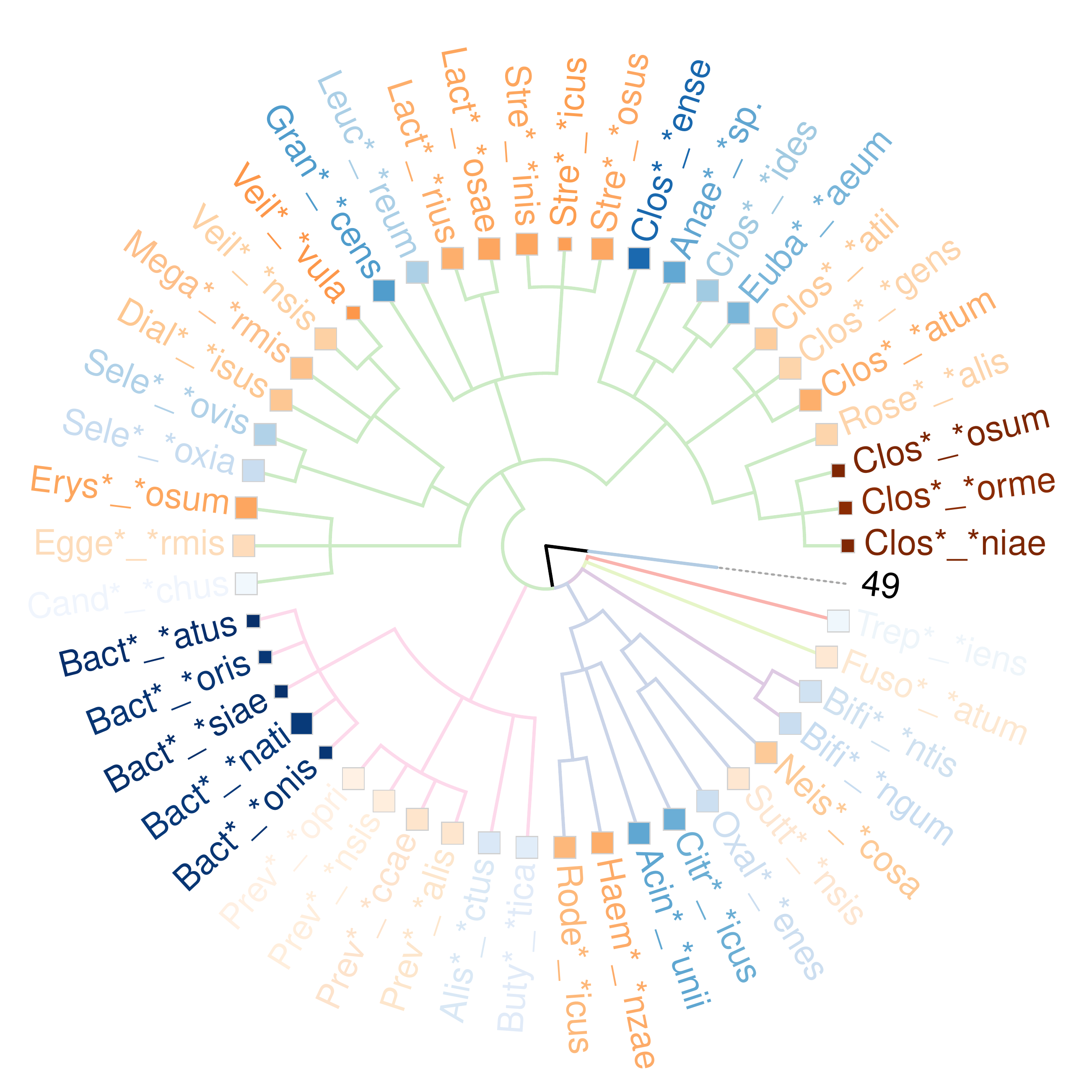}
    };
    % Box A
    \draw[dotted, rounded corners=4pt] (-4.2, 2.3) -- (4.2, 2.3) -- (4.2,-2.2) -- (-4.2,-2.2) -- cycle;
    \node[fill=white, draw] (aa) at (-3.8, 2.3) {\textbf{a}};
    % Box B
    \draw[dotted, rounded corners=4pt] (4.4, 2.3) -- (12.7, 2.3) -- (12.7,-2.2) -- (9.3,-2.2) -- (9.3, -0.4) -- (4.4, -0.4) -- cycle;
    \node[fill=white, draw] (bb) at (4.8, 2.3) {\textbf{b}};
    % Box C
    \draw[dotted, rounded corners=4pt] (-4.2, -2.4) -- (4.4, -2.4) -- (4.4, -0.6) -- (9.1, -0.6) -- (9.1, -2.4) -- (12.7, -2.4) -- (12.7, -6.5) -- (-4.2, -6.5) -- cycle;
    \node[fill=white, draw] (cc) at (-3.8, -2.4) {\textbf{c}};
    % Additional labels for box C
    \node (clab1) at (-3,-2.8) {\small unweighted};
    \node (clab2) at (1.8,-2.8) {\small phylum-weighted};
    \node (clab3) at (6.7,-2.8) {\small UniFrac-weighted};
    \end{tikzpicture}}
\caption{(a) shows a kernel PCA for the centralpark dataset with 2 principle components. On the right, the contribution of the species to each of the two components is given (see App.~\ref{app:kernel_mds} for details). (b) and (c) are both based on the cirrhosis dataset. In (b) the CFI values are shown on the left and the right plot compares the proposed kernel health score with Simpson diversity.
In (c) the scaled CFI values for are illustrated for different weightings. A darker color shade of the (shortened) name of the microbiota signifies a stronger (positive resp. negative) CFI.}
\label{fig:model_analysis}
\end{figure*}

As shown in the previous section, \KernelBiome results in fitted models with state-of-the-art prediction performance. This is useful because supervised learning procedures can be used in two types of applications: (1) To learn a prediction model that has a direct application, e.g., as a diagnostic tool, or (2) to learn a predictive model as an intermediate step of an exploratory analysis to find out what factors could be driving the response. As discussed above (2) requires us to take the compositional nature of the predictors into account to avoid misleading conclusions. We show how the \KernelBiome framework can be used to achieve this based on two datasets: (i) \emph{cirrhosis}, based on a study analyzing the differences in microbial compositions between $n=130$ healthy and cirrhotic patients \citep{qin2014alterations} and (ii) \emph{centralpark}, based on a study analyzing the pH concentration using microbial compositions from $n=580$ soil samples \citep{ramirez2014biogeographic}. Our aim is not do draw novel biological conclusions, but rather to showcase how \KernelBiome can be used in this type of analysis.

To reduce the complexity, we screen the data using \KernelBiome with the Aitchison kernel and only keep the $50$ taxa with the highest absolute CFIs. We then fit \KernelBiome with default parameter grid. For cirrhosis this results in the Aitchison kernel and for centralpark in the Aitchison-RBF kernel. As outlined in Sec.~\ref{sec:model_analysis}, we can then apply a kernel PCA with a compositionally adjusted component influence. The result for centralpark is given in Fig.~\ref{fig:model_analysis} (a) (for cirrhosis see Fig.~\ref{fig:kpca_cirrhosis} in App.~\ref{apd:remaining_results}). This provides some direct information on which perturbations affects each principle component (e.g., ``[g]DA101[s]98" affects the first component the most positively and ``Sphingobacterium[s]multivorum" affects the second component the most negatively). Moreover, it also directly provides a tool to detect groupings or outliers of the samples. For example, the samples in the top middle (i.e., center of the U-shape) in Fig.~\ref{fig:model_analysis} (a) could be investigated further as they behave different to the rest.

A further useful quantity is the CFI, which for cirrhosis is given Fig.~\ref{fig:model_analysis} (b, left)  (for centralpark see Fig.~\ref{fig:cfi_centralparksoil} in App.~\ref{apd:remaining_results}). They explicitly take the compositional structure into account and have an easy interpretation. For example, ``Prevotella\_timonensis" has a CFI of 0.07 which implies that on average solely increasing ``Prevotella\_timonensis"  will lead to a larger predicted response. We therefore believe that CFIs are more trustworthy than relying on for example Gini-importance for random forests, which does not have a clear interpretation due to the compositional constraint.

Lastly, one can also use the connection between kernels and distances to construct useful scalar summary statistics. In Fig.~\ref{fig:model_analysis} (b, right), we use the kernel-distance to the geometric median in the healthy subpopulation as a scalar indicator for the healthiness of the microbiome. In comparison with more standard scalar summary statistics such as the Simpson diversity, it is targeted to distinguish the two groups.

\subsection{Incorporating prior information}\label{sec:weighting_exp}

In many applications, in particular in biology, prior information about a system is available and should be incorporated into the data analysis. As we show in Sec.~\ref{sec:prior_info}, \KernelBiome allows for incorporating prior knowledge on the relation between individual components (e.g., taxa). To illustrate how this works in practice, we again consider the screened cirrhosis dataset. We apply KernelBiome with an Aitchison-kernel and $c$ equal to half the minimum non-zero relative abundance without weighting, with a phylum-weighting and with a UniFrac-weighting. The resulting scaled CFI values for each are visualized in Fig.~\ref{fig:model_analysis} (c). The phylum-weighting corresponds to giving all taxa within the same phylum the same weights and the UniFrac-weighting is a weighting that incorporates the phylogenetic structure based on the UniFrac-distance and is described in App.~\ref{apd:unifrac}. As can bee seen in Fig.~\ref{fig:model_analysis} (c), the phylum weighting assign approximately the same CFI to each variable in same phylum, this is expected given that the phylum weighting has exactly the structure given in Proposition~\ref{prop:weighted_aitchison}. 
Moreover, the UniFrac-weighting leads to CFI values that lie 
in-between the unweighted and phylum-weighted versions. Similar effects are seen for different kernels as well. The same plots for the generalized-JS kernel are provided in App.~\ref{apd:details_on_data}.

\section{Discussion and conclusions}\label{sec:discussion}

In this work, we propose the \KernelBiome framework for supervised learning with compositional covariates consisting of two main ingredients: data-driven model selection and model interpretation. Our approach is based on a flexible family of kernels targeting the structure of microbiome data, and is able to work with different kernel-based algorithms such as SVM and kernel ridge regression. It is also possible to incorporate prior knowledge, which is crucial in microbiome data analysis. We compare \KernelBiome with other state-of-the-art approaches on $33$ microbiome datasets and show that \KernelBiome achieves improved or comparable results. Moreover, \KernelBiome provides multiple ways to extract interpretable information from the fitted model. Two novel measures, CFI and CPD, can be used to analyze how each component affects the response. We prove the consistency of these two measures and demonstrate them via simulated and real datasets. \KernelBiome also leverages the connection between kernels and distances to conduct distance-based analysis in a lower-dimensional space. 

\section*{Acknowledgements}
The authors would like to thank Christian M\"uller for detailed feedback and suggestions on this work, Johannes Ostner for help creating the circle plots, Jeroen Raes and Doris Vandeputte for making their raw data available and the anonymous reviewer for helpful feedback.

\section*{Funding}

SH and NP are supported by a research grant (0069071) from Novo Nordisk Fonden. EA is supported by the Helmholtz Association under the joint research school ``Munich School for Data Science - MUDS''.\vspace*{-12pt}

\clearpage

\begin{appendices}
\defcitealias{human2012structure}{HMP (2012)}

% ---- Table of content ----

\noindent The supplementary material is divided into the following sections.

\begin{enumerate}
    \item[\ref{sec:add_defs}.] Details on CFI and CPD\\
        \textit{Formal definitions of perturbations and estimators related to CFI and CPD.}
    \item[\ref{apd:summary_kernels_and_metrics}] Details on kernels included in \KernelBiome\\
        \textit{Overview of different kernel types, details on how they connect to distances and description of weighted kernels.}
    \item[\ref{apd:experiment_details}] Details and additional results for experiments in Sec.~\ref{sec:experiments}\\
        \textit{Datasets pre-processing, parameter setup, construction of the weighting matrices with UniFrac-distance and further experiment results based on the cirrhosis and centralpark datasets.}
    \item[\ref{apd:additional_experiments}] Additional experiments with simulated data\\
        \textit{Consistency of CFI and CPD and comparison of CFI and CPD with their non-simplex counterparts.}
    \item[\ref{app:kernels}] Background on kernels\\
        \textit{Mathematical background on kernels and details on dimensionality and visualization with kernels.}
    \item[\ref{apd:proofs}] Proofs
    \item[\ref{apd:list_kernels}] List of kernels implemented in \KernelBiome
\end{enumerate}

% ---- Background definitions ----
\newpage

\section{Details on CFI and CPD}\label{sec:add_defs}

\subsection{Perturbations}
Formally, the multiplicative perturbation $\psi$ and the fixed coordinate perturbation $\phi$ are defined as follows.
\begin{itemize}
    \item For all $j\in\{1,\ldots,p\}$, $x \in \simp^{p-1}$ with $x^j\neq 1$ and $c\in [0,\infty)$, define 
    \begin{equation*}
        \psi_j(x, c) \coloneqq s_c (x^1,\cdots,x^{j-1}, cx^j, x^{j+1},\cdots,x^p)\in\simp^{p-1},
    \end{equation*}
    where $s_c = 1/(\sum_{\ell\neq j}^p x^{\ell}+cx^j)$.
    \item For all $j\in\{1,\ldots,p\}$, $x \in \simp^{p-1}$ with $\sum_{\ell\neq j}^p x^{\ell}> 0$ and $c\in [0,1]$,     define the intervened composition by
    \begin{equation*}
        \phi_j(x, c) \coloneqq (s x^1,\cdots,sx^{j-1}, c, s x^{j+1},\cdots,s x^p)\in\simp^{p-1},
    \end{equation*}
    where $s = (1-c)/(\sum_{\ell\neq j}^p x^{\ell})$.
\end{itemize}

\subsection{Estimators}
We propose to estimate CFI and CPD with the following two estimators.
\begin{itemize}
    \item For i.i.d.\ observations $X_1,\ldots,X_n$ and a differentiable function $f:\simp^{p-1}\rightarrow\bR$, we estimate the CFI for all $j\in\{1,\ldots,p\}$ as
    \begin{equation*}
        \hat{I}_f^j=\frac{1}{n}\sum_{i=1}^{n}\tfrac{d}{dc}f(\psi(X_i,c))\big\vert_{c=1}.
    \end{equation*}
    \item For i.i.d.\ observations $X_1,\ldots,X_n$ and a function function $f:\simp^{p-1}\rightarrow\bR$, we estimate the CPD for all $j\in\{1,\ldots,p\}$ and $z\in[0,1]$ as
    \begin{equation*}
        \hat{S}_f^j(z)=\frac{1}{n}\sum_{i=1}^{n}f(\phi(X_i,z))-\frac{1}{n}\sum_{i=1}^n f(X_i).
    \end{equation*}
\end{itemize}

% ---- Summary of metrics and kernels ----

\section{Details on kernels included in KernelBiome} \label{apd:summary_kernels_and_metrics}

\subsection{Overview of kernels}

In this section, we give additional details on the kernels used in
\KernelBiome. A full list of all kernels and their corresponding
metrics together with a visualization on $\simp^2$ is given in
App.~\ref{apd:list_kernels}.

As discussed in the main paper, we consider four types of kernels.
\begin{itemize}
\item \textbf{Euclidean} These are kernels that are used on Euclidean
  space but restricted to the simplex. This includes the \emph{linear kernel}
  and the \emph{RBF kernel}.
\item \textbf{Probability distribution} These are kernels that are
  constructed from metrics between probability
  distributions. \KernelBiome includes two parametric classes of
  kernels, the \emph{Hilbertian kernel} and the \emph{generalized-JS
    kernel}. These kernels correspond to multiple well-known metrics
  on probabilities such as the \emph{chi-squared metric}, the
  \emph{total-variation metric}, the \emph{Hellinger metric} and the
  \emph{Jensen-Shannon metric}.
\item \textbf{Aitchison geometry} These are kernels that are
  constructed by using the centered log-ratio transform to project
  data on the simplex into Euclidean space and then combining it with
  a Euclidean kernel. \KernelBiome includes the \emph{Aitchison kernel} and the
  \emph{Aitchison RBF kernel}. In order to allow for zeros, a small positive number $c$ is added to each coordinate for all observations before applying the centered log-ratio transformation.
\item \textbf{Riemannian manifold} These kernels are connected to the simplex
  via multinomial distributions and have been shown to
  empirically perform well on sparse text data mapped into the
  simplex. \KernelBiome contains the \emph{heat-diffusion kernels}.
\end{itemize}
For each type of kernel there are multiple parameter settings. Although users of the \KernelBiome package can freely change the parameters, the
default settings for \KernelBiome for each type of kernel are provided by the package and are given in Table~\ref{tab:default_param_grid}.

\begin{table}[h]
\begin{tabular}{p{0.12\textwidth}p{0.12\textwidth}p{0.5\textwidth}p{0.1\textwidth}}
    \toprule
    \textbf{Geometry} & \textbf{Kernel} & \textbf{Parameters} & \textbf{Number of}\newline \textbf{kernels}\\ \midrule
    Euclidean & linear & none & $1$ \\
    & RBF & $\sigma^2\in\{10^{-2}\cdot m_1, 10^{-1}\cdot m_1, m_1, 10\cdot m_1, $ \newline \hspace*{1.05cm} $10^{2}\cdot m_1,10^3 \cdot m_1,10^{4}\cdot m_1\}$ & $7$ \\ \midrule
    Probability & generalized-JS & $(a, b) \in \{(1, 0.5),(1, 1),(10, 0.5), (10, 1),(10, 10),$ \newline \hspace*{1.3cm} $(\infty, 0.5),(\infty, 1),(\infty, 10),(\infty, \infty)\}$ & $9$ \\
    distributions & Hilbertian &  $(a,b) \in \{(1, -1), (1, -10), (1, -\infty), (10, -1), $ \newline \hspace*{1.3cm} $(10, -10), (10, -\infty), (\infty, -1), (\infty, -10)\}$ & $8$ \\\midrule
    Aitchison &  Aitchison & $c\in\{\mu_X/2\cdot 10^{-4},\ldots, \min(\mu_X/2\cdot 10^4, 10^{-2})\}$ & $9$
    \\
    geometry &  Aitchison-RBF & $c\in\{\mu_X/2\cdot 10^{-4},\ldots, \min(\mu_X/2\cdot 10^4, 10^{-2})\}, $ \newline $\sigma\in\{c\cdot m_2\cdot 10^{-1}, c\cdot m_2, c\cdot m_2 \cdot 10\}$ & $15$ 
    \\ \midrule
    Riemannian manifold &  heat-diffusion & $t=x^{\frac{2}{n-1}}\frac{1}{4\pi}$ for $x\in\{10^{-20},\ldots, 10\}$ & $6$ \\ \bottomrule
\end{tabular}
\caption{Default parameter grid in \KernelBiome. $m_1$ and $m_2$ are the median heuristic for the RBF and Aitchison-RBF kernel, respectively, which depend on the data. $\mu_X$ is the minimal non-zero value in $X$. The zero grids for the Aitchison geometry kernels have an even logarithmic spacing and contain $9$ and $5$ parameters for the Aitchison and Aitchison-RBF, respectively. Similarly, the grid for $x$ for the heat-diffusion kernel has an even logarithmic spacing with $6$ values. There are a total of $55$ kernels.} \label{tab:default_param_grid}
\end{table}

\subsubsection{Connecting positive definite kernels to metrics}\label{app:short_connection_dist_kernel}

Any fixed kernel $k$ on $\cX$ induces a semi-metric\footnote{A semi-metric $d$ satisfies all properties of a metric, except that $d(x, y)=0$ does not imply $x=y$. This can happen because a kernel can map two different points in $\mathcal{X}$ to the same point in
$\mathcal{H}_k$.} $d_k$ on $\cX$ defined for all $x,y\in\cX$ by
\begin{equation}
    \label{eq:kernel_distance}
     d^2_k(x,y) = k(x,x) + k(y,y) - 2k(x,y).
\end{equation}
This holds for all positive-definite kernels by Theorem~\ref{thm:kernel_and_metric}.
In particular, this corresponds to the distance between the embedded points in the RKHS $\cH_k$, that is,
\begin{equation*}
  \|k(x,\cdot) -k(y, \cdot)\|_{\cH_k}=d_k(x,y).
\end{equation*}
The feature embedding $x\mapsto k(x,\cdot)$ induced by a kernel
therefore preserves the distances $d_k$. A useful aspect of kernel
methods, is that they allow a post-analysis based on the embedded
features, see also App.~\ref{app:kernel_mds}.

A partial reverse implication is also true. For a particular type of semi-metric $d$ on $\cX$ (these metrics are called Hilbertian, see App.~\ref{apd:background_kernels_metrics}) it is possible to construct a kernel $k$ on $\cX$ defined for all $x,y\in\cX$ by 
\begin{equation*}
    k(x,y) = -\tfrac{1}{2}d^2(x, y) + \tfrac{1}{2}d^2(x,x_0) + \tfrac{1}{2}d^2(x_0,y),
\end{equation*}
where $x_0\in\cX$ is an arbitrary reference point, such that the distance in the corresponding RKHS $\cH_k$ is $d$.

Kernels can be shifted in such a way that the origin in the induced
RKHS changes but the metric in \eqref{eq:kernel_distance} remains
fixed (see Lemma~\ref{lemma:pd_and_cpd}). A natural origin in the
simplex is given by the point $u=(\frac{1}{p},\ldots,\frac{1}{p})$,
therefore we have shifted all kernels such that $k(u, \cdot)\equiv 0$
and hence correspond to the origin in $\cH_k$. In
App.~\ref{apd:background_kernels_metrics}, we provide a short
overview of the mathematical results that connect kernels and metrics.

\subsection{Weighted kernels - including prior information}\label{apd:weighted_kernels_details}

In this section, we discuss how to include prior knowledge, e.g.\ phylogenetic information, into the simplex kernels. We assume the information is encoded in a matrix $W\in\bR^{p\times p}$ where each element corresponds to a measure of similarity between components. That is, $W_{i,j}$ is large if components $i$ and $j$ are similar (or related) and small otherwise. We assume that $W$ is symmetric, positive semi-definite and all entries in $W$ are non-negative.

The linear kernel and all kernels based on probability distributions have the form 
\begin{equation}
\label{eq:linear_expansion_kernel}
    k(x,y) = \textstyle\sum_{i=1}^p k_0(x^i, y^i)
\end{equation}
and we therefore define the weighted version by
\begin{equation}\label{eq:weightedkernel}
    k_W(x,y) = \sum_{j,\ell=1}^p W_{j,\ell} \cdot k_0(x^j, y^\ell).
\end{equation}
The weighted versions of the remaining kernels are defined individually. A full list of the weighted kernels is given in App.~\ref{apd:list_weighted_kernels}.

\subsubsection{Validity of weighted kernels}\label{apd:validity_weighted_kernels}
In order to use the proposed weighted kernels, we need to ensure that they are indeed positive definite. In the following, we prove this for the weighted versions of the \textit{linear kernel}, the \textit{Hilbertian kernel}, the \textit{Generalized-JS kernel}, the \textit{RBF kernel} and the \textit{Aitchison kernel}. We do not prove it for the \textit{Aitchison RBF kernel} and the \textit{Heat Diffusion kernel} and only note that they appear to be positive definite from our empirical evaluations.

We begin by showing that the kernel defined in \eqref{eq:weightedkernel} is positive definite whenever $k_0:[0,1]\times [0,1]\rightarrow \bR$ is positive definite. To see this, fix $x_1,\ldots,x_n\in\simp^{p-1}$ and $\alpha\in\bR^n$ and denote by $K_W\in\bR^{n\times n}$ the kernel Gram-matrix based on $x_1,\ldots,x_n$ and kernel $k_W$. Then,
\begin{align*}
    \alpha^{\top}K_W\alpha
    &=\sum_{i,r=1}^n\sum_{j,\ell=1}^p\alpha_{i}\alpha_{r}W_{j,\ell}k_0(x_{i}^{j},x_{r}^{\ell})\\
    &=\sum_{j,\ell=1}^p W_{j,\ell}\left(\sum_{i,r=1}^n\alpha_{i}\alpha_{r}k_0(x_{i}^{j},x_{r}^{\ell})\right).
\end{align*}
Since $k_0$ is positive definite, it holds that $\sum_{i,r}\alpha_{i}\alpha_{r}k_0(x_{i}^{j},x_{r}^{\ell})\geq 0$ and hence $\alpha^{\top}K_W\alpha\geq 0$ since all entries in $W$ are non-negative.

We now go over the individual weighted kernels and argue that they are positive definite.

\begin{itemize}
\item\textbf{Linear kernel }
Since $\bR$ is a Hilbert space with the inner product $xy$ which induces the $|x-y|$ it follows that the squared distance $d^2_{\text{Linear}}(x, y) \coloneqq (x-y)^2$ is Hilbertian as well. Applying Theorem~\ref{thm:hilbert_d_negativetype} we know that the distance is of negative type.
Thus, based on the one-dimensional squared linear distance $d^2_{\text{Linear}}$, we apply Theorem~\ref{thm:kernel_and_metric} with $x_0 = \frac 1 p$ to construct the following positive definite kernel $k_0$ defined for all $x,y \in [0,1]$ by
\begin{align*}
    k_0(x, y) \coloneqq - \tfrac{1}{2}(x-y)^2 + \tfrac{1}{2} (x - \tfrac 1 p)^2 + \tfrac{1}{2} (\tfrac 1 p -y)^2 = xy - \tfrac{x}{p} - \tfrac{y}{p} + \tfrac{1}{p^2}.
\end{align*}
Comparing this with our weighted linear kernel in App.~\ref{apd:list_weighted_kernels}, we see that the weighted linear kernel has the form \eqref{eq:weightedkernel} and is therefore positive definite by the above argument.

\item\textbf{Hilbertian kernel } As shown by \citet{hein2005hilbertian} the distance $d_{\text{Hilbert}}: \bR_+ \times \bR_+ \to \bR$ defined for all $x,y\in\bR_+$ by
\begin{equation*}
    d^2_{\text{Hilbert}}(x,y) = \frac{2^{\frac{1}{b}}\Big[x^{a} +  y^{a}\Big]^{\frac{1}{a}} - 2^{\frac{1}{a}}\Big[x^{b} + y^{b}\Big]^{\frac{1}{b}}}{2^{\frac{1}{a}} - 2^{\frac{1}{b}}}
\end{equation*}
is a Hilbertian metric on $\bR_+$. Applying Theorem~\ref{thm:kernel_and_metric} with $x_0 = \frac{1}{p}$ results in a positive definite kernel $k_0$ that when combined as in \eqref{eq:weightedkernel} results in the proposed weighted Hilbertian kernels in App.~\ref{apd:list_weighted_kernels}. Therefore, we have shown that the weighted Hilbertian kernels are positive definite as long as $W$ has non-negative entries.

\item\textbf{Generalized-JS kernel } Similarly the weighted Generalized-JS kernels in App.~\ref{apd:list_weighted_kernels} can all be decomposed as in \eqref{eq:weightedkernel} with a one-dimensional kernels $k_0$ on $[0,1]$. \citet{topsoe2003jenson} show that all these $k_0$ can be generated using Theorem~\ref{thm:kernel_and_metric} with $x_0 = \frac{1}{p}$ based on Hilbertian metrics. Hence, all weighted Generalized-JS kernels are positive definite as long as $W$ has non-negative entries.

\item \textbf{Aitchison kernel } To show that the weighted Aitchison kernel (defined in App.~\ref{apd:list_weighted_kernels} is positive definite, we first define the mapping $\Phi:\simp^{p-1}\rightarrow\bR^p$ by $\Phi(x)\coloneqq \frac{x+c}{g(x+c)}$. Then, the weighted Aitchison kernel is given by
\begin{equation*}
    k(x,y)=\Phi(x)^{\top}W\Phi(y).
\end{equation*}
Since $W$ is symmetric and positive semi-definite there exists $M\in\bR^{p\times p}$ such that $W=M^{\top}M$. Therefore, for any $\alpha\in\bR^{n}$ and $x_1,\ldots,x_n\in\simp^{p-1}$ it holds that
\begin{equation*}
    \sum_{i,r}\alpha_i\alpha_r k(x_i, x_r)=\sum_{i,r}\alpha_i\alpha_r(M\Phi(x_i))^{\top}M\Phi(x_r)\geq 0.
\end{equation*}
Hence, $k$ is positive definite.
\item \textbf{RBF kernel } Using the symmetry of $W$ the weighted RBF kernel can be expressed as follows
\begin{align*}
    k(x,y) &=
    \exp\Big(-\frac{1}{\sigma^2}\sum_{j, \ell=1}^p W_{j, \ell}(x^j-y^\ell)^2\Big) \\
    &=\underbrace{\exp\Big(-\frac{1}{\sigma^2}\sum_{j, \ell=1}^p W_{j, \ell}(x^j)^2\Big)\exp\Big(-\frac{1}{\sigma^2}\sum_{j, \ell=1}^p W_{j, \ell}(y^j)^2\Big)}_{\eqqcolon A(x,y)}\\
    &\quad \cdot\underbrace{\exp\Big(\frac{2}{\sigma^2}\sum_{j, \ell=1}^p W_{j, \ell}x^j y^\ell\Big)}_{\eqqcolon B(x,y)}
\end{align*}
The function $A$ is a positive definite kernel since it is the inner-product of a feature mapping. The function $B$ can be shown to be a kernel by considering the Taylor expansion of the exponential function and using that sums and limits of positive definite kernels are again positive definite together with the fact that $W$ is positive semi-definite. Therefore, the weighted RBF kernel is positive definite.
\end{itemize}

\subsection{UniFrac-Weighting}\label{apd:unifrac}

In this section, we show how prior information based on the UniFrac-Distance \citep{lozupone2005unifrac} can be encoded into a weight matrix $W\in\bR^{p\times p}$. Depending on the application at hand different distances can be used in a similar way. The UniFrac-Distance is a $\beta$-diversity measure that uses phylogenetic information to compare two compositional samples $x,y\in\simp^{p-1}$. Each element of the sample is hereby placed on a phylogenetic tree. The distance between both samples is computed via quantification of overlapping branch length, that is,
\begin{equation*}
    \text{UniFrac-Distance}(x,y) = \frac{\text{sum of unshared branch length of $x$ and $y$}}{\text{sum of all tree branch length of $x$ and $y$}}\in[0,1].
\end{equation*}
Based on the UniFrac-Distance, we define two similarity matrices $M^A,M^B\in[0,1]^{p\times p}$ for all $i,j\in\{1,\ldots,p\}$ by
\begin{align*}
    M^{A}_{i,j}&\coloneqq 1-\text{UniFrac-Distance}(e_i, e_j),\\
    M^{B}_{i,j}&\coloneqq \sum_{\ell=1}^{p}\text{UniFrac-Distance}(e_i, e_\ell)\cdot\text{UniFrac-Distance}(e_j, e_\ell),
\end{align*}
where $e_i,e_j\in\simp^{p-1}$ with $1$ on the $i$-th and $j$-th coordinate, respectively. $M^A$ and $M^B$ are two options of encoding the UniFrac-Distance as a similarity. $M^B$ is positive semi-definite by construction, while this is not true for $M^A$ and should be checked empirically. We recommend using $M^A$ whenever it is positive semi-definite.

We then construct the weight matrix $W^{\text{UniFrac}}\in\bR^{p\times p}$ by scaling $M^{*}$ such that the diagonal entries are one, that is,
\begin{equation*}
    W^{\text{UniFrac}}\coloneqq D M^{*} D,
\end{equation*}
where $D=\text{diag}(\sigma_1,\ldots,\sigma_p)$, with $\sigma_i=1/\sqrt{M^*_{i,i}}$. Since by construction the matrix $M^*$ has its largest values on the diagonal, this weight matrix takes values in $[0,1]$. Moreoever, by construction it remains symmetric and positive semi-definite.

% ---- Additional experiment results ----

\section{Details and additional results for experiments in Sec.~\ref{sec:experiments}} \label{apd:experiment_details}

\subsection{List of datasets}\label{apd:details_on_data}

\FloatBarrier

\begin{longtable}{p{0.2\textwidth}p{0.15\textwidth}p{0.2\textwidth}p{0.12\textwidth}p{0.2\textwidth}}
    \toprule
    \textbf{Name} & \textbf{Reference} & \textbf{Prediction tasks} & \textbf{Dim}\newline ($n\times d$) & \textbf{Additional}\newline \textbf{preprocessing} \\ \midrule
    americangut & \cite{mcdonald2018american} & classification:\newline - diabetes vs healthy & $882\times 327$ & UK subpopulation \& prev./abun. \newline filtering\\
    camp & \cite{berry2020natural} & classification:\newline - parasite infected vs healthy & $270\times 622$ & none\\
    centralpark & \cite{ramirez2014biogeographic} & regression:\newline - ph level of soil & $580\times 1498$ & prev./abun. \newline filtering \\
    cirrhosis & \cite{qin2014alterations} & classification:\newline - cirrhosis vs healthy & $130\times 444$ & aggregated to \newline species \& \newline prev./abun. \newline filtering \\
    genomemed & \cite{baxter2016microbiota} & classification:\newline - cancer vs non-cancer & $490\times 335$ & none\\
    hiv & \cite{rivera2018balances} & regression:\newline - CD4+ cell counts & $152\times 282$ & none\\
    rmp & \cite{vandeputte2017quantitative} & classification:\newline - Crohn's disease vs healthy & $95\times 351$ & none\\
    tara & \cite{sunagawa2020tara} & regression:\newline - ocean salinity & $136\times 2407$ & prev./abun. \newline filtering \\
    \midrule
    gevers\_ileum & \cite{gevers2014treatment} & classification:\newline - Crohn's disease vs healthy & $140\times 446$ & none\\
    gevers\_pcdai-ileum & \cite{gevers2014treatment} & regression: \newline - PCDAI scores & $67\times 446$ & none\\
    gevers\_pcdai-rectum & \cite{gevers2014treatment} & regression: \newline - PCDAI scores & $51\times 446$ & none\\
    hmp\_gastro-oral & \citetalias{human2012structure} & classification: \newline  - gastrointestinal vs oral & $2070\times 1218$ & none\\
    hmp\_sex & \citetalias{human2012structure} & classification: \newline - female vs male & $180\times 1218$ & none\\
    hmp\_stool-tongue-paired & \citetalias{human2012structure} & classification: \newline - stool vs tongue dorsum & $404\times 1218$ & none\\
    hmp\_sub-supra-paired & \citetalias{human2012structure} & classification: \newline - sub vs supragingival plaque & $408\times 1218$ & none\\
    karlsson\_impaired-diabetes & \cite{karlsson2013gut} & classification: \newline - impaired vs type 2 diabetes & $101\times 3758$ & none\\
    karlsson\_normal-diabetes & \cite{karlsson2013gut} & classification: \newline - normal vs type 2 diabetes & $96\times 3758$ & none\\
    kostic & \cite{kostic2012genomic} & classification: \newline - healthy vs tumor & $172\times 409$ & none\\
    qin2012 & \cite{qin2012metagenome} & classification: \newline - healthy vs type 2 diabetes & $124\times 2526$ & none\\
    qin2014 & \cite{qin2014alterations} & classification: \newline - cirrhosis vs healthy & $130\times 2579$ & none\\
    ravel\_black-hispanic & \cite{ravel2011vaginal} & classification: \newline - black vs Hispanic & $199\times 305$ & none\\
    ravel\_nugent-category & \cite{ravel2011vaginal} & classification: \newline - nugent score high vs low & $342\times 305$ & none\\
    ravel\_nugent-score & \cite{ravel2011vaginal} & regression: \newline - nugent score & $388\times 305$ & none\\
    ravel\_ph & \cite{ravel2011vaginal} & regression: \newline - vaginal pH & $388\times 305$ & none\\
    ravel\_white-black & \cite{ravel2011vaginal} & classification: \newline - white vs black & $200\times 305$ & none\\
    sokol\_healthy-cd & \cite{morgan2012dysfunction} & classification: \newline - healthy vs Crohn's disease & $74\times 367$ & none\\
    sokol\_healthy-uc & \cite{morgan2012dysfunction} & classification: \newline - healthy vs Ulcerative colitis  & $59\times 367$ & none\\
    turnbaugh & \cite{turnbaugh2007human} & classification: \newline - lean vs obese & $142\times 232$ & none\\
    yatsunenko\_baby-age & \cite{yatsunenko2012human} & regression: \newline - infant age & $49\times 1544$ & none\\
    yatsunenko\_malawi-venezuela & \cite{yatsunenko2012human} & classification: \newline - Malawi vs Venezuela & $54\times 1544$ & none\\
    yatsunenko\_sex & \cite{yatsunenko2012human} & classification: \newline male vs female & $129\times 1544$ & none\\
    yatsunenko\_usa-malawi & \cite{yatsunenko2012human} & classification: \newline - US vs Malawi & $150\times 1544$ & none
    \\
    \bottomrule
    \caption{List of microbiome datasets used to benchmark \KernelBiome. Datasets in below vertical line are taken from the MLRepo \citep{vangay2019microbiome} without further processing. The cirrhosis dataset is a processed version of qin2014. Whenever prevalence/abundance filtering (prev./abun. filtering) is applied it means that only taxa that appear in $25\%$ of the samples and with a median non-zero count of $5$.}\label{tab:datasets}
\end{longtable}

\FloatBarrier

\subsection{Weighting matrix for weighted \KernelBiome}

The weight matrix $W^{\text{UniFrac}}$ for the cirrhosis dataset \citep{qin2014alterations} and the centralpark dataset \citep{ramirez2014biogeographic} are presented as heatmaps in Fig.~\ref{fig:unifrac_visualization}. Using our proposed weighted kernels (see App.~\ref{apd:list_weighted_kernels}) with the UniFrac-based weight matrix $W^{\text{UniFrac}}$ is different from incorporating the UniFrac-distance via kernel convolution as proposed by \citet{Zhao2015}.

\begin{figure}[hbt!]
\centering
\includegraphics[width=\textwidth]{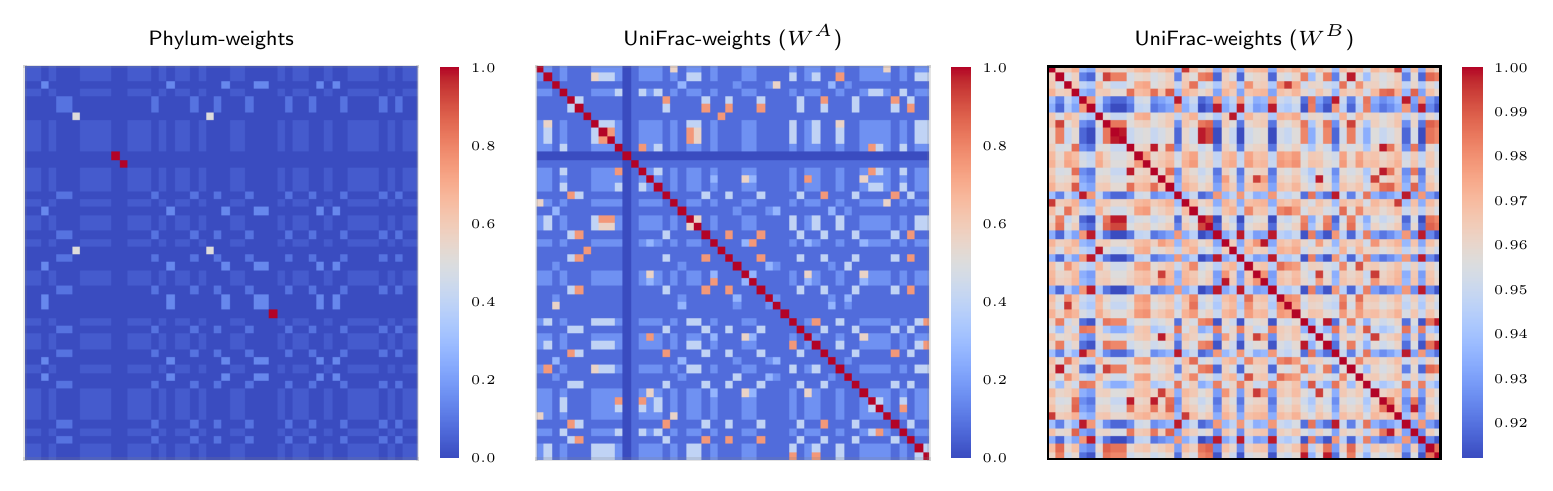}
\includegraphics[width=\textwidth]{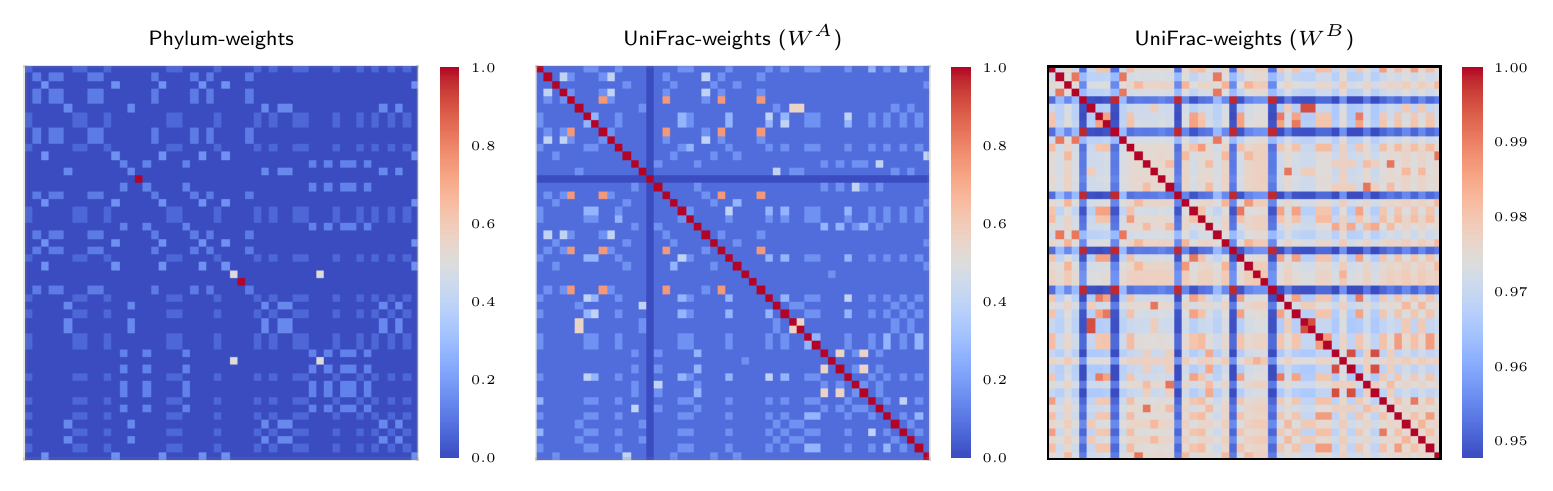}
\caption{Visualization of the phylum-weights and the two UniFrac-weights $W^{A} = DM^AD$ for $W^{B} = DM^BD$, based on the $50$ pre-screened species (see App.~\ref{apd:experiment_details}). Upper panel: cirrhosis dataset. Lower panel: centralpark dataset.}
\label{fig:unifrac_visualization}
\end{figure}

\FloatBarrier

\subsection{Remaining experiment results on public datasets} \label{apd:remaining_results}

\subsubsection{Prediction performance}

Here we provide the remaining results for the prediction experiments. ROC curves for the classification tasks accompanying the boxplot results are given in Fig.~\ref{fig:real_datasets_scores}. Prediction scores for the MLRepo datasets in Fig.~\ref{fig:mlrepo_scores} are given in Fig.~\ref{fig:curated_roc}, and ROC curves accompanying the boxplot results can be found in Fig.~\ref{fig:mlrepo_roc}. We can see that \KernelBiome achieves the best results for most of the tasks. For classification tasks, \KernelBiome performs competitively both in terms of accuracy and ROC curves.\footnote{For \texttt{SVM-RBF}, \texttt{KB-Aitchison} and \KernelBiome, the ROC curves are based on the estimated probabilities computed in the sklearn-package. We observed a slight mismatch between these predicted probabilities and predicted classes in some of the examples, which is due to a bug \url{https://github.com/scikit-learn/scikit-learn/issues/13211}. We therefore recommend putting more emphasis on the accuracy plots.}

\begin{figure}[htb]
    \centering
    \includegraphics[width=\textwidth]{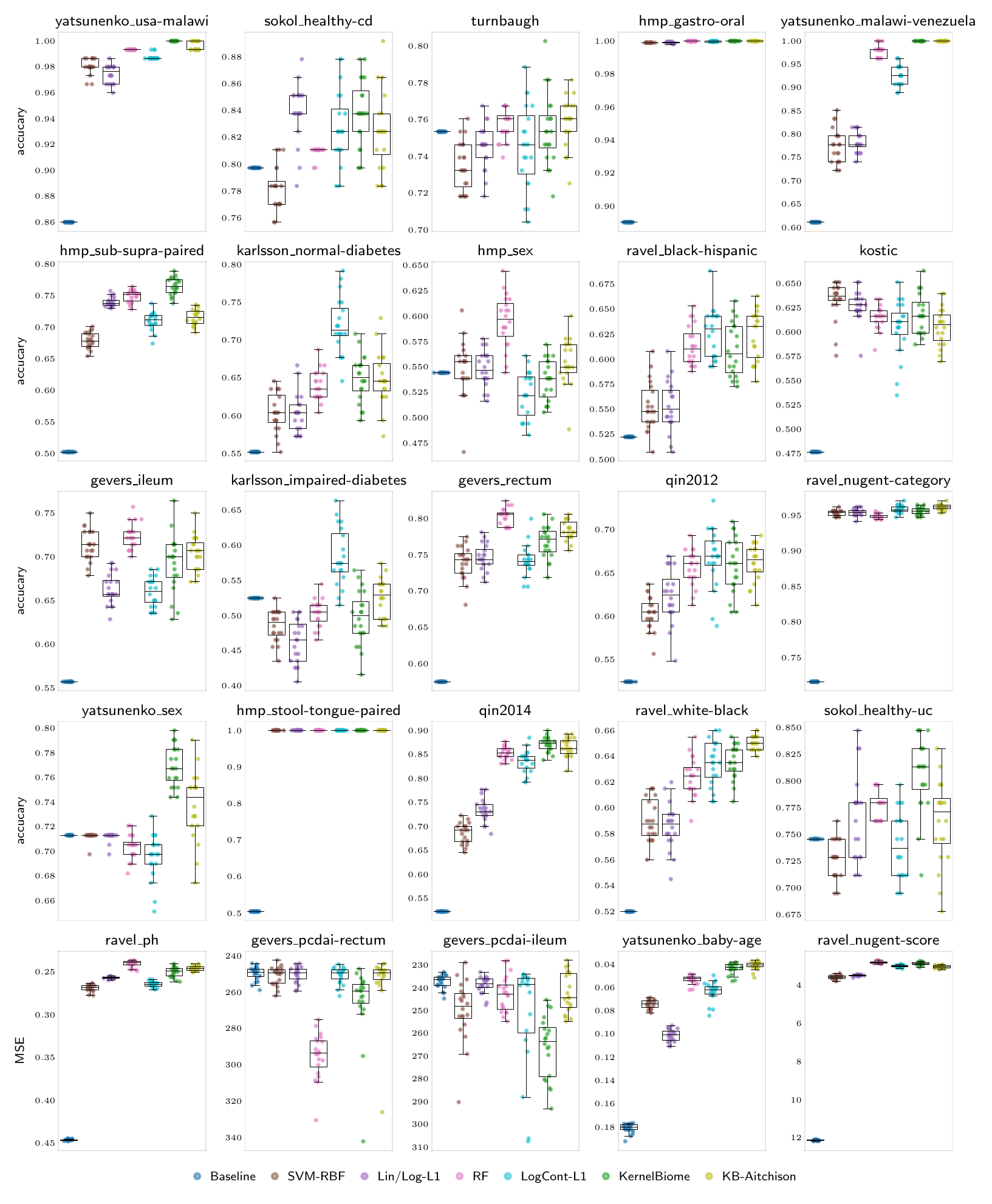}
    \caption{Comparison of predictive performance on the MLRepo datasets (20 classification tasks and 5 regression tasks) based on a 10-fold train/test split.}
    \label{fig:mlrepo_scores}
\end{figure}

\begin{figure}[htb]
    \centering
    \includegraphics[width=\textwidth]{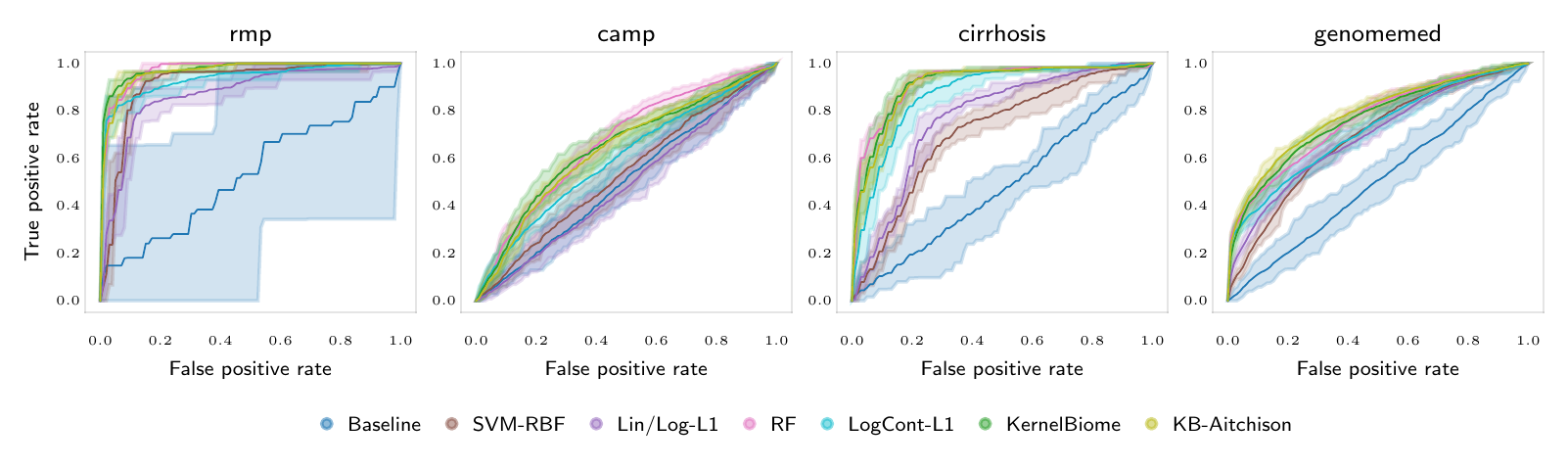}
    \caption{ROC curves for the 4 public classification datasets shown in the main paper. The solid curve is the average curve from the $20$ different 10-fold CV, and the shaded area is the $95\%$ confidence band.}
    \label{fig:curated_roc}
\end{figure}

\begin{figure}[htb]
    \centering
    \includegraphics[width=\textwidth]{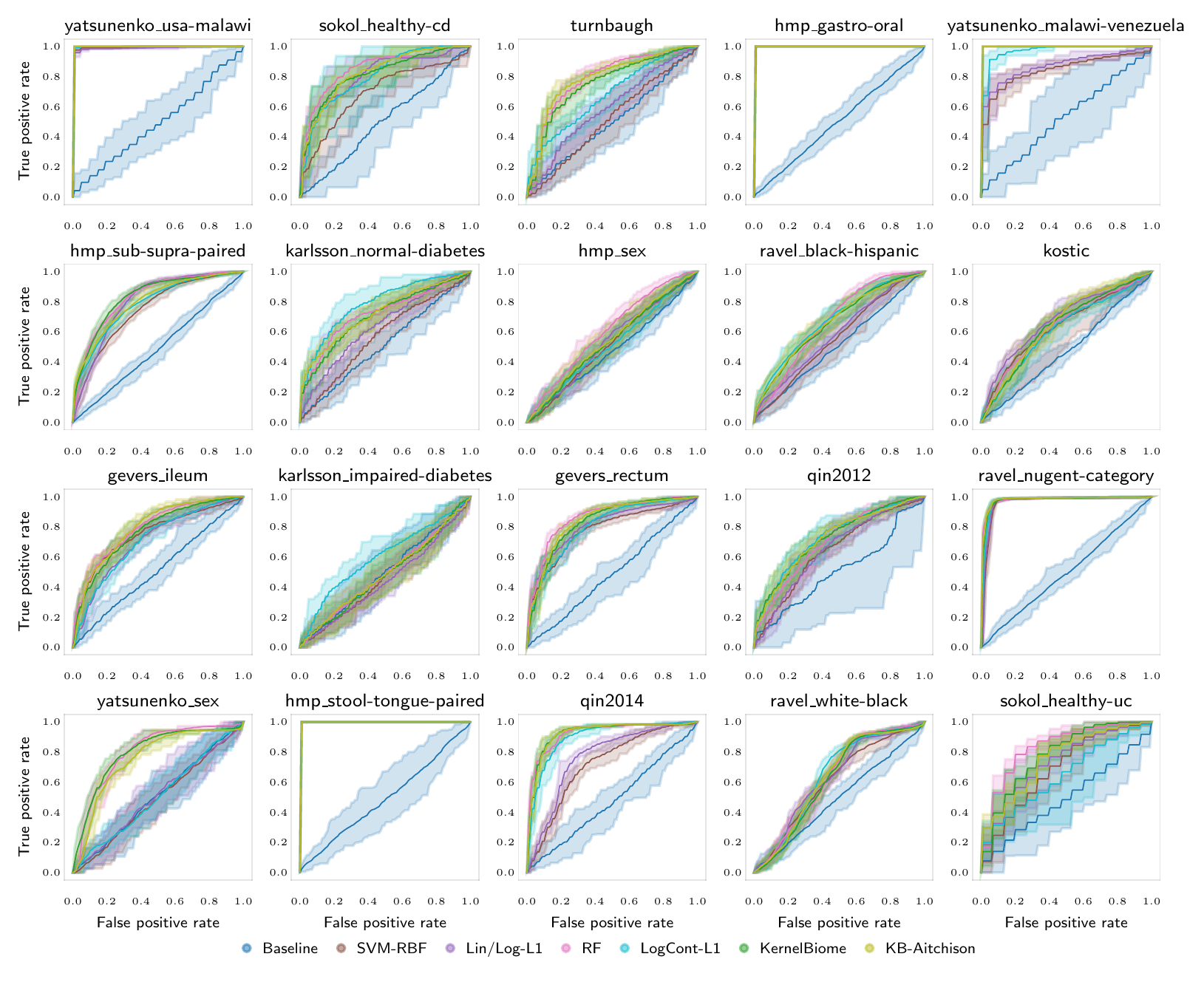}
    \caption{ROC curves for the 20 classification datasets from MLRepo. The solid curve is the average curve from the $20$ different 10-fold CV, and the shaded area is the $95\%$ confidence band.}
    \label{fig:mlrepo_roc}
\end{figure}

\FloatBarrier

\subsubsection{Model analysis}

Here we include the remaining model analysis results for the cirrhosis and centralpark datasets. As in the main paper, we screened both data sets to only include the $50$ taxa with the highest absolute CFIs by \KernelBiome with Aitchison kernel. Kernel PCA plots for the cirrhosis dataset is given in Fig.~\ref{fig:kpca_cirrhosis} and the CFI values for the centralpark dataset are given in Fig.~\ref{fig:cfi_centralparksoil}.

Furthermore, we also provide the missing circle plots here.
The extended version of Fig.~\ref{fig:model_analysis} (c) in the main text with long labels is given in Fig.~\ref{fig:cfi_cirrhosis_circle_aitchison}. Fig.~\ref{fig:cfi_cirrhosis_circle_generalizedJS} is the circle plot for the cirrhosis dataset based on the generlized-JS kernel. Fig.~\ref{fig:cfi_centralparksoil_circle_aitchison} is the circle plot for the centralpark dataset based on the Aitchison kernel.

\begin{figure}[htb]
    \centering
    \includegraphics[width=\textwidth]{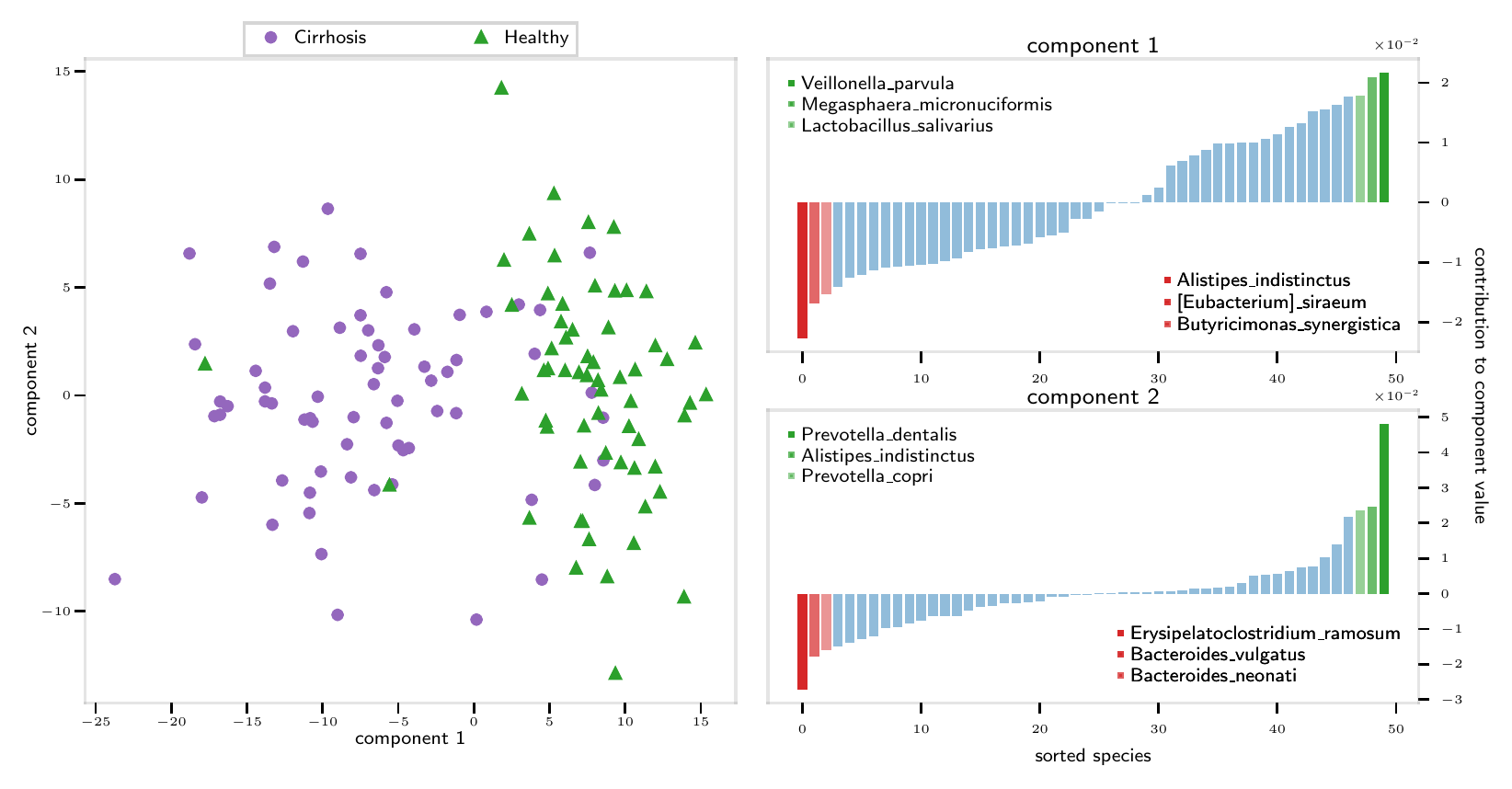}
    \caption{Kernel PCA plot and contributions of the $50$ taxa to component 1 and 2 sorted from the most negative contribution to the most positive contribution.}
    \label{fig:kpca_cirrhosis}
\end{figure}

\begin{figure}[htb]
    \centering
    \includegraphics{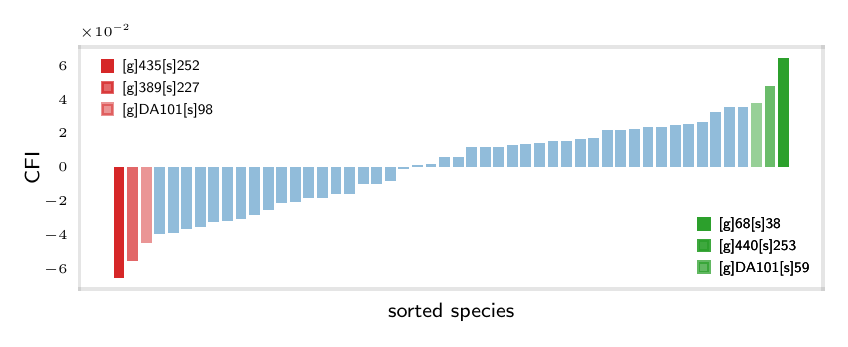}
    \caption{CFI values for the $50$ taxa sorted from the most negative contribution to the most positive contribution to the response.}
    \label{fig:cfi_centralparksoil}
\end{figure}

\begin{figure}[htb]
\resizebox{\textwidth}{!}{
    \begin{tikzpicture}
    \node (E) at (4.5, -10.5) {\includegraphics[width=0.3\linewidth]{figures/legend_cirrhosis.pdf}};
    \node (D) at (4, -7) {
    \includegraphics[width=.37\linewidth]{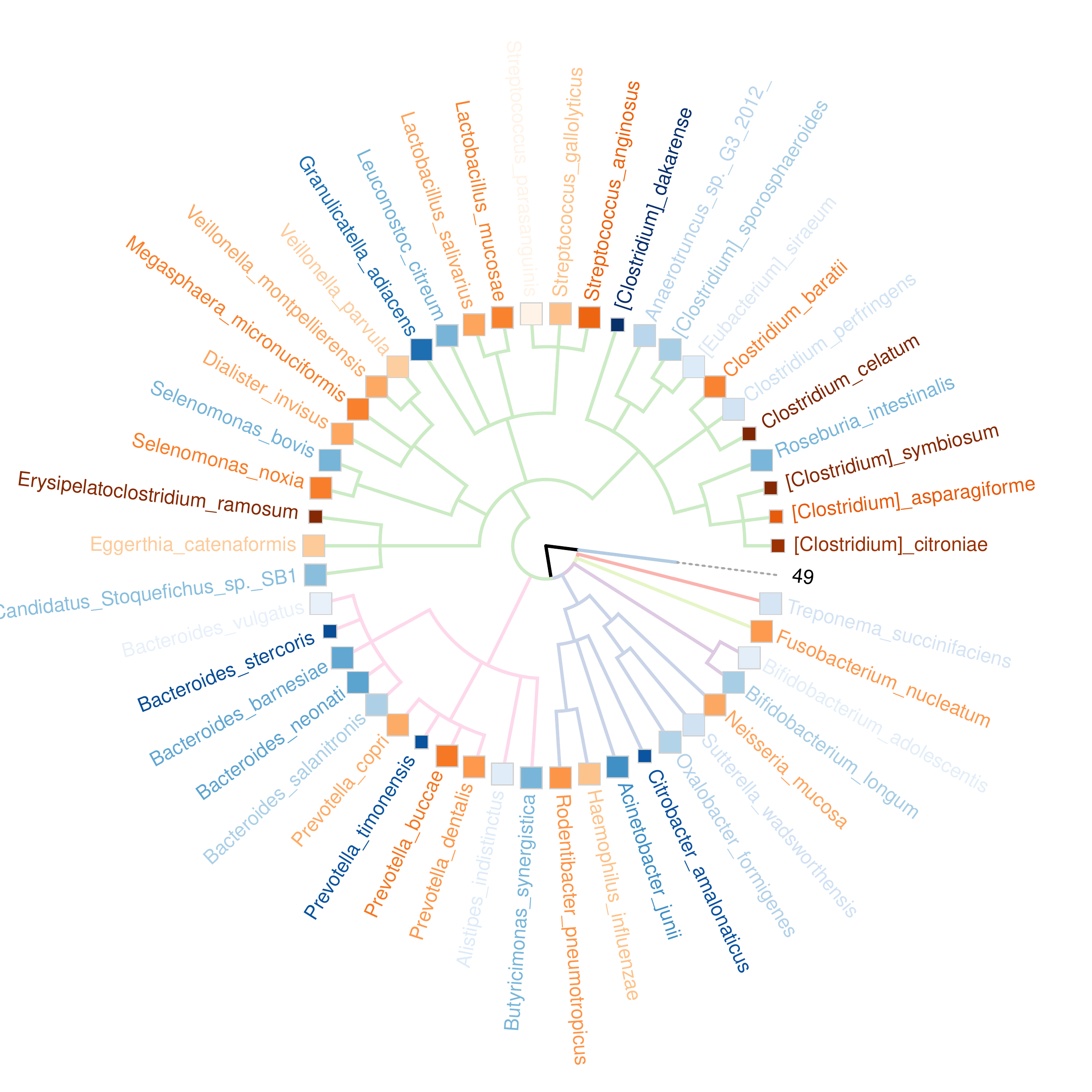} 
    \hspace{1cm}
    \includegraphics[width=.37\linewidth]{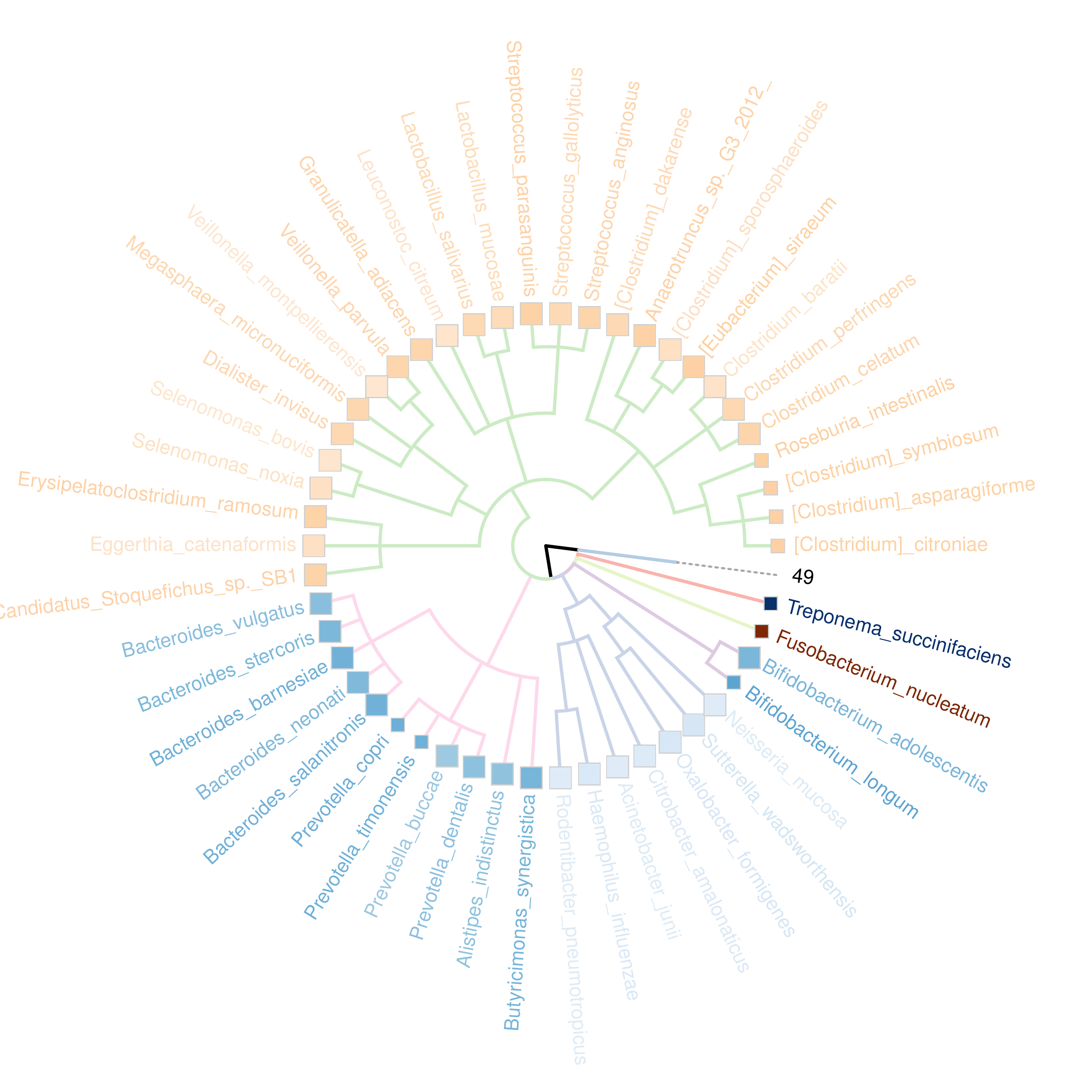}
    };
    \node (F) at (4, -14) {
    \includegraphics[width=.37\linewidth]{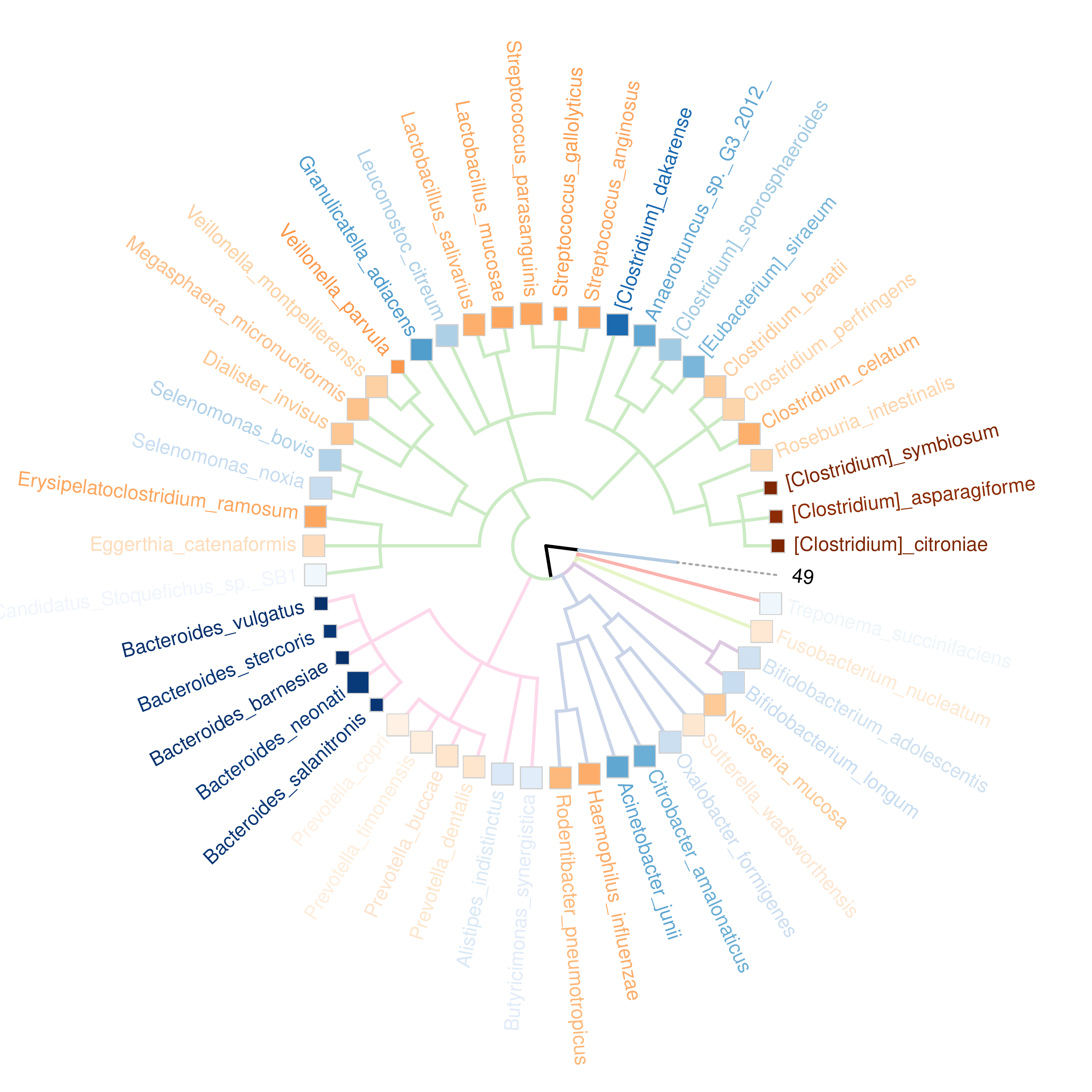}
    \hspace{1cm}
    \includegraphics[width=.37\linewidth]{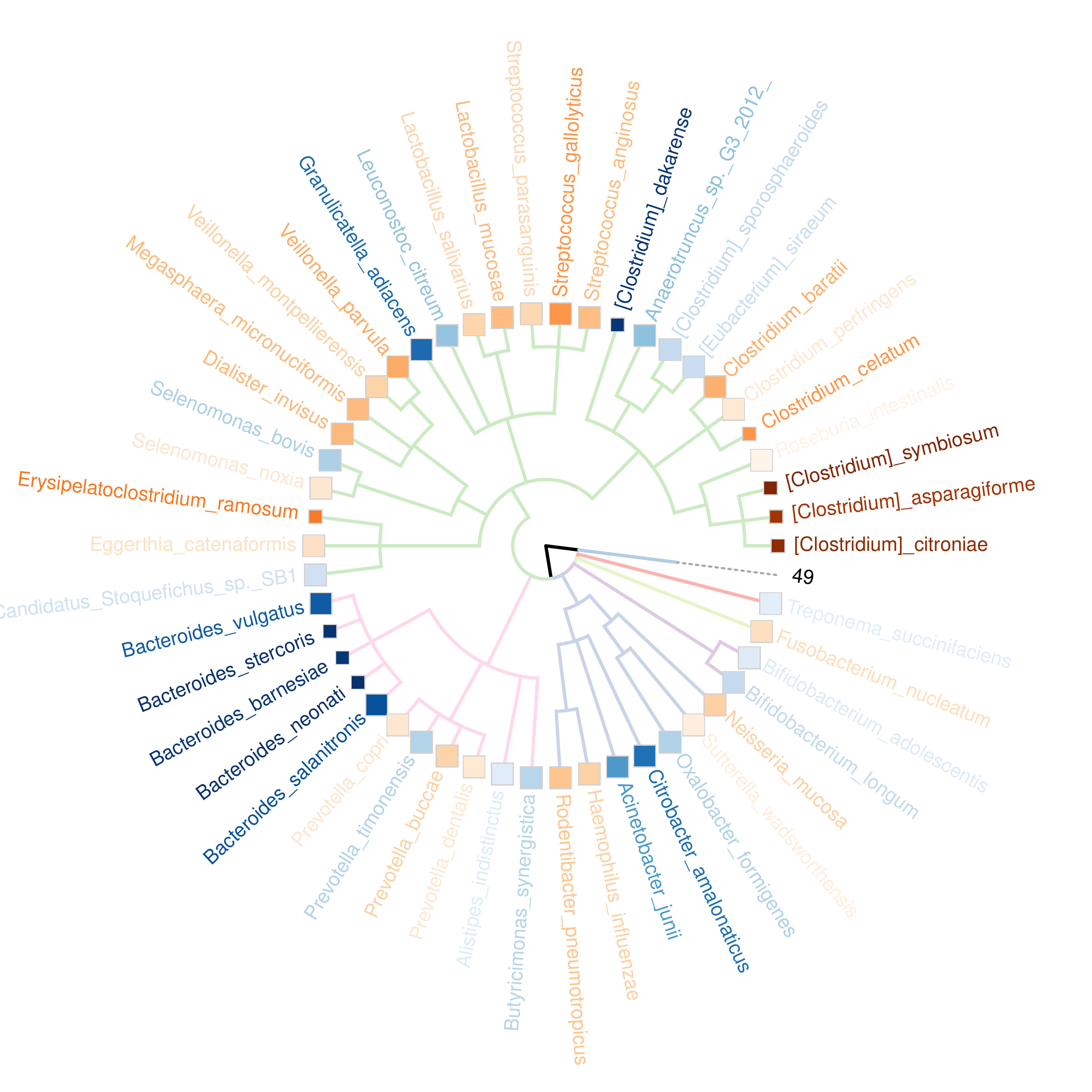}
    };
    \node (clab1) at (-2,-4) {\small unweighted};
    \node (clab2) at (11,-4) {\small phylum-weighted};
    \node (clab3) at (-2,-11) {\small UniFrac-weighted W\textsuperscript{B}};
    \node (clab3) at (11,-11) {\small UniFrac-weighted W\textsuperscript{A}};
    \end{tikzpicture}}
\caption{Scaled CFI values for cirrhosis dataset where a darker color shade of the name of the microbiota signifies a stronger (positive resp. negative) feature influence (Aitchison Kernel).}
\label{fig:cfi_cirrhosis_circle_aitchison}
\end{figure}

\begin{figure}[htb]
\resizebox{\textwidth}{!}{
\begin{tikzpicture}
    \node (E) at (4.5, -10.5) {\includegraphics[width=0.3\linewidth]{figures/legend_cirrhosis.pdf}};
    \node (D) at (4, -7) {
    \includegraphics[width=.37\linewidth]{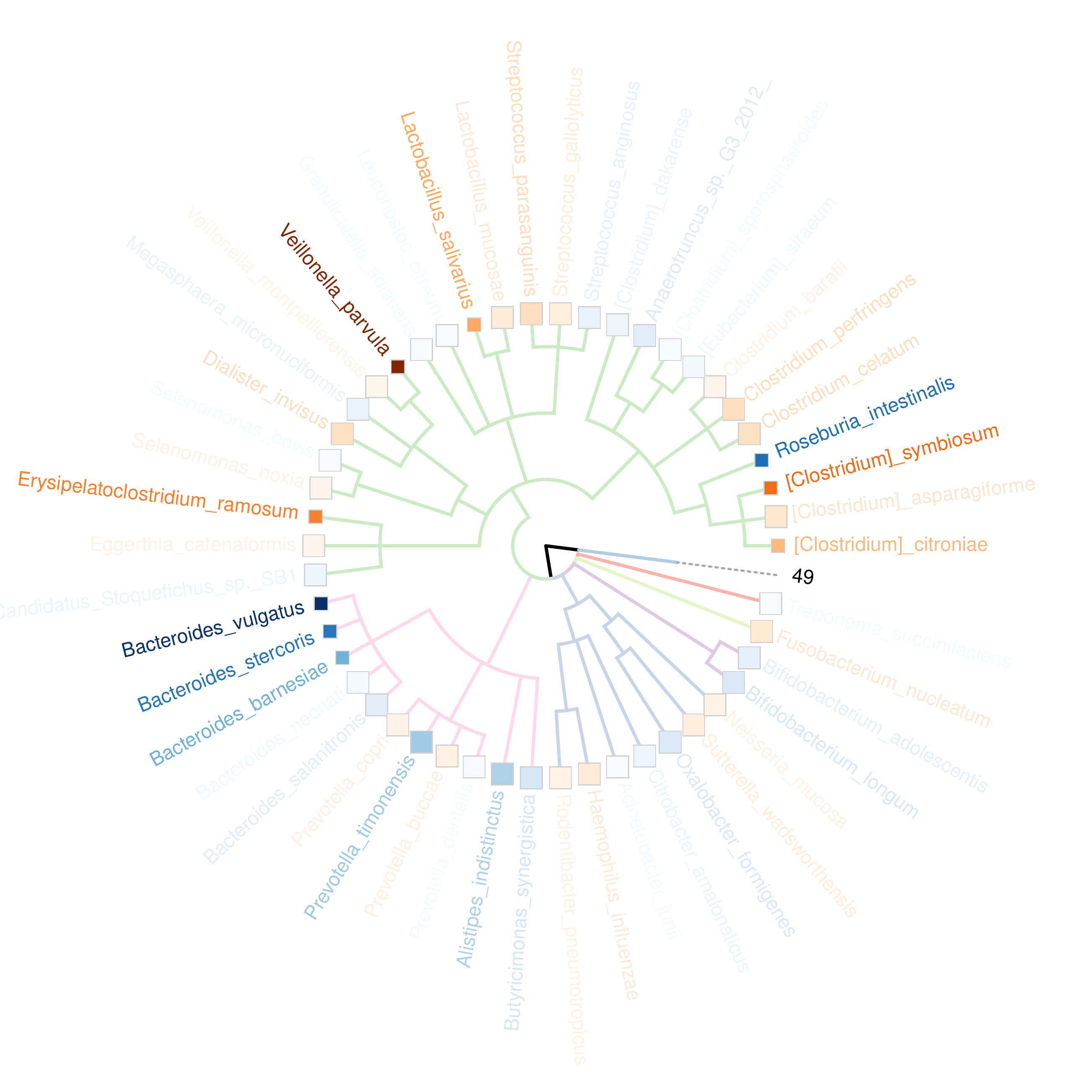} 
    \hspace{1cm}
    \includegraphics[width=.37\linewidth]{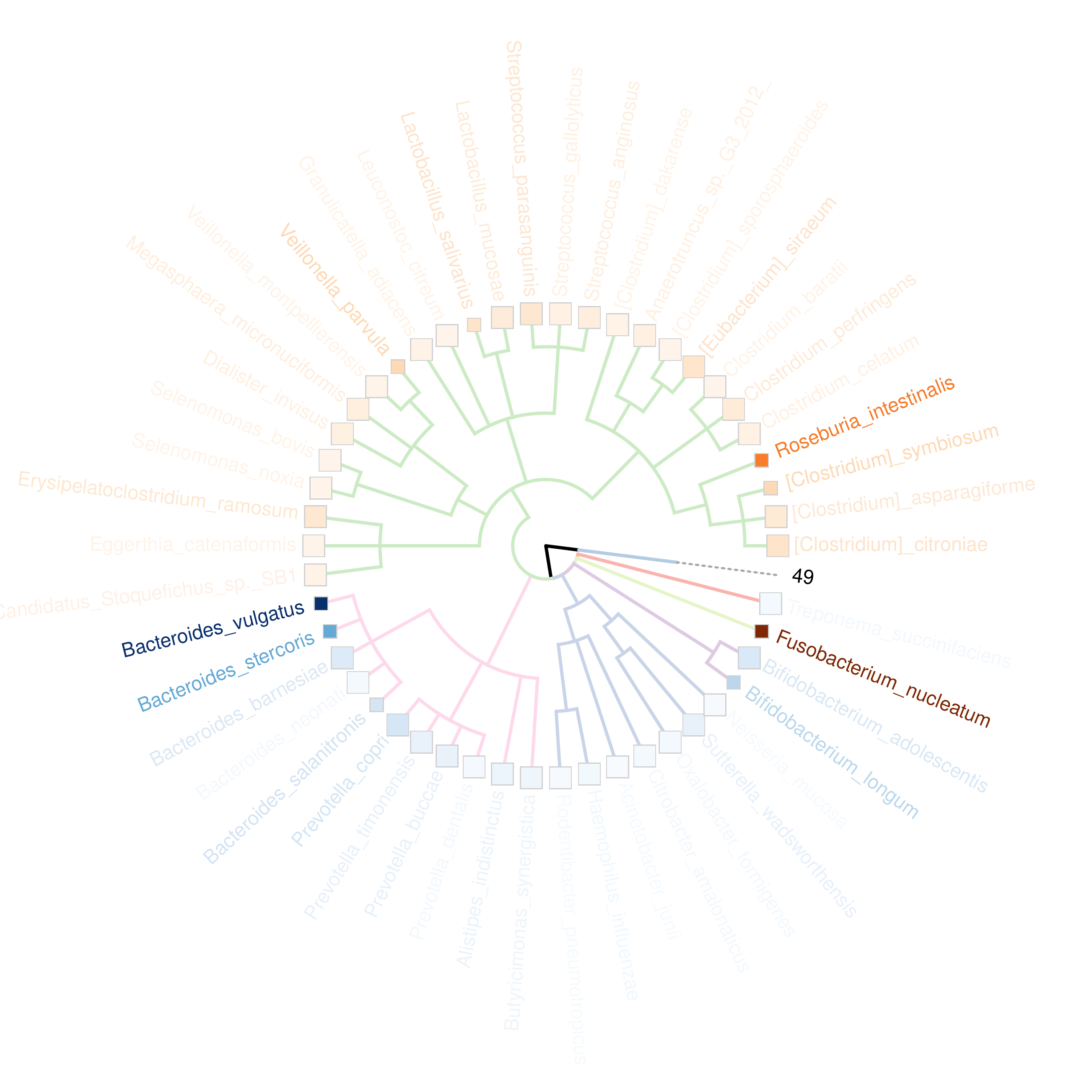}
    };
    \node (F) at (4, -14) {
    \includegraphics[width=.37\linewidth]{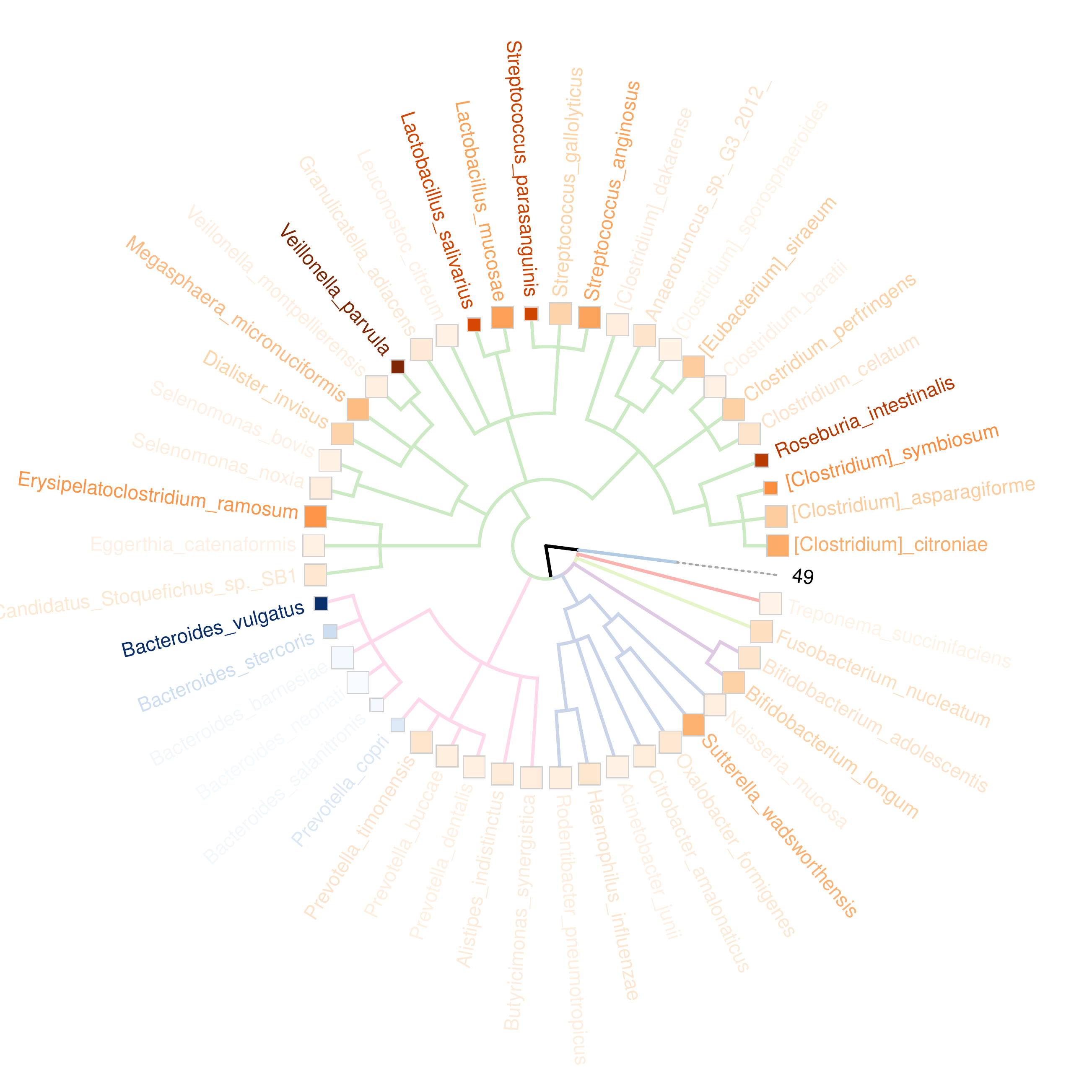}
    \hspace{1cm}
    \includegraphics[width=.37\linewidth]{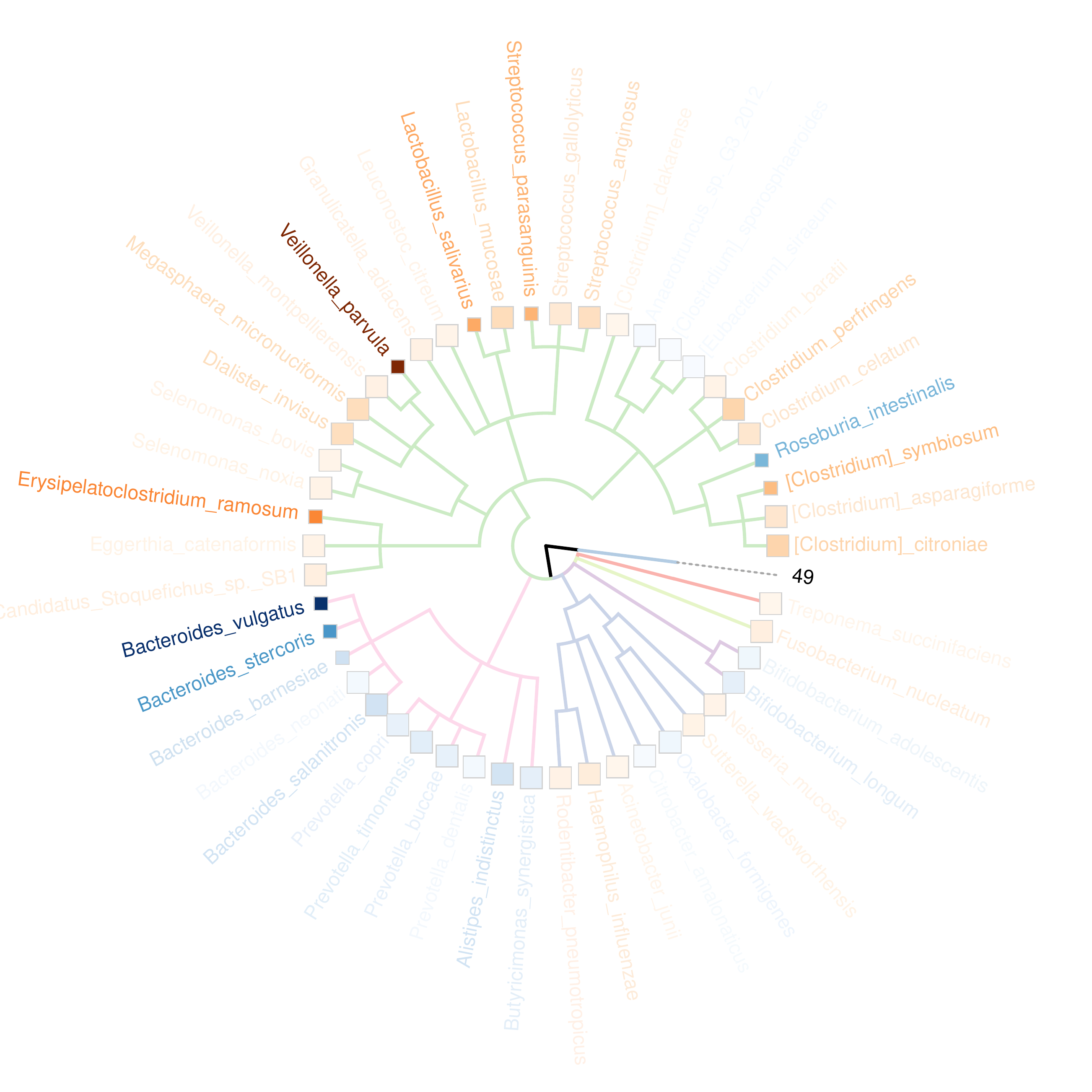}
    };
    \node (clab1) at (-2,-4) {\small unweighted};
    \node (clab2) at (11,-4) {\small phylum-weighted};
    \node (clab3) at (-2,-11) {\small UniFrac-weighted W\textsuperscript{B}};
    \node (clab3) at (11,-11) {\small UniFrac-weighted W\textsuperscript{A}};
\end{tikzpicture}}
\caption{Scaled CFI values for cirrhosis dataset where a darker color shade of the name of the microbiota signifies a stronger (positive resp. negative) feature influence (Generalized JS Kernel).}\label{fig:cfi_cirrhosis_circle_generalizedJS}
\end{figure}

\begin{figure}[htb]
\resizebox{\textwidth}{!}{
    \begin{tikzpicture}
    \node (E) at (4.5, -10.5) {\includegraphics[width=0.3\linewidth]{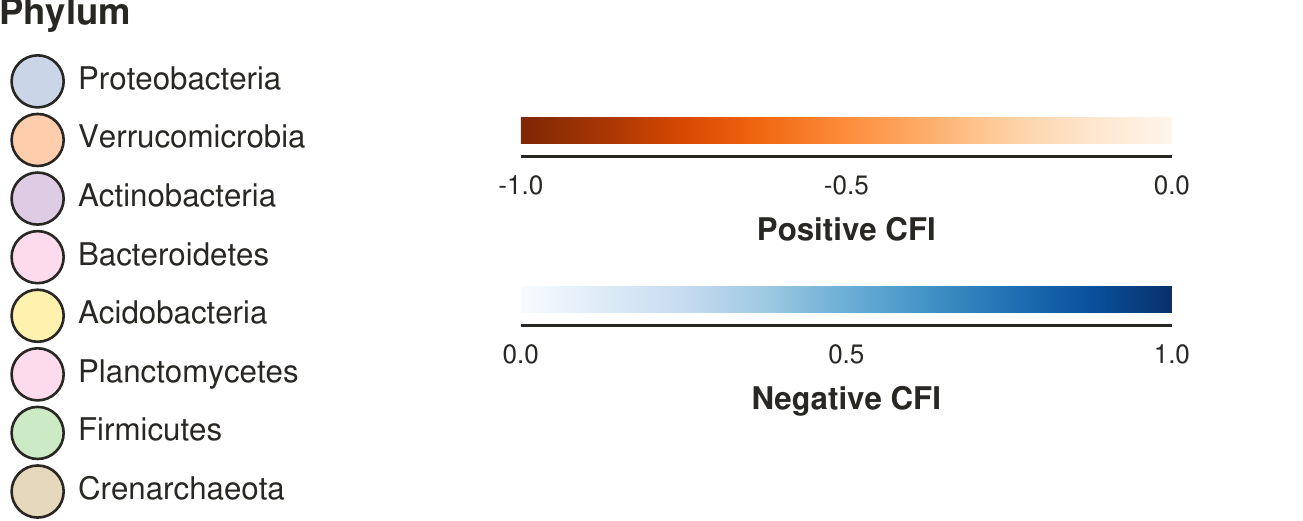}};
    \node (D) at (4, -7) {
    \includegraphics[width=.37\linewidth]{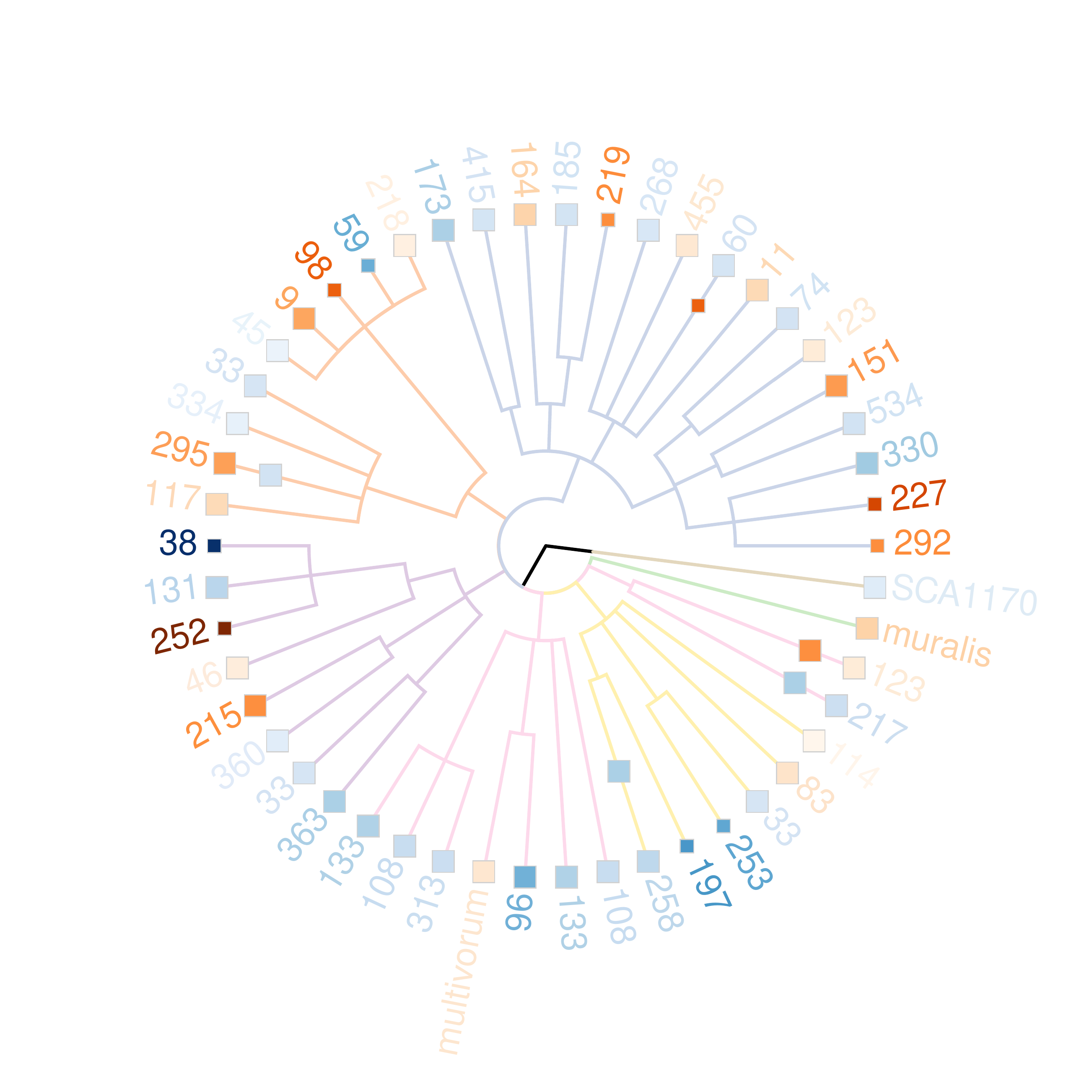} 
    \hspace{1cm}
    \includegraphics[width=.37\linewidth]{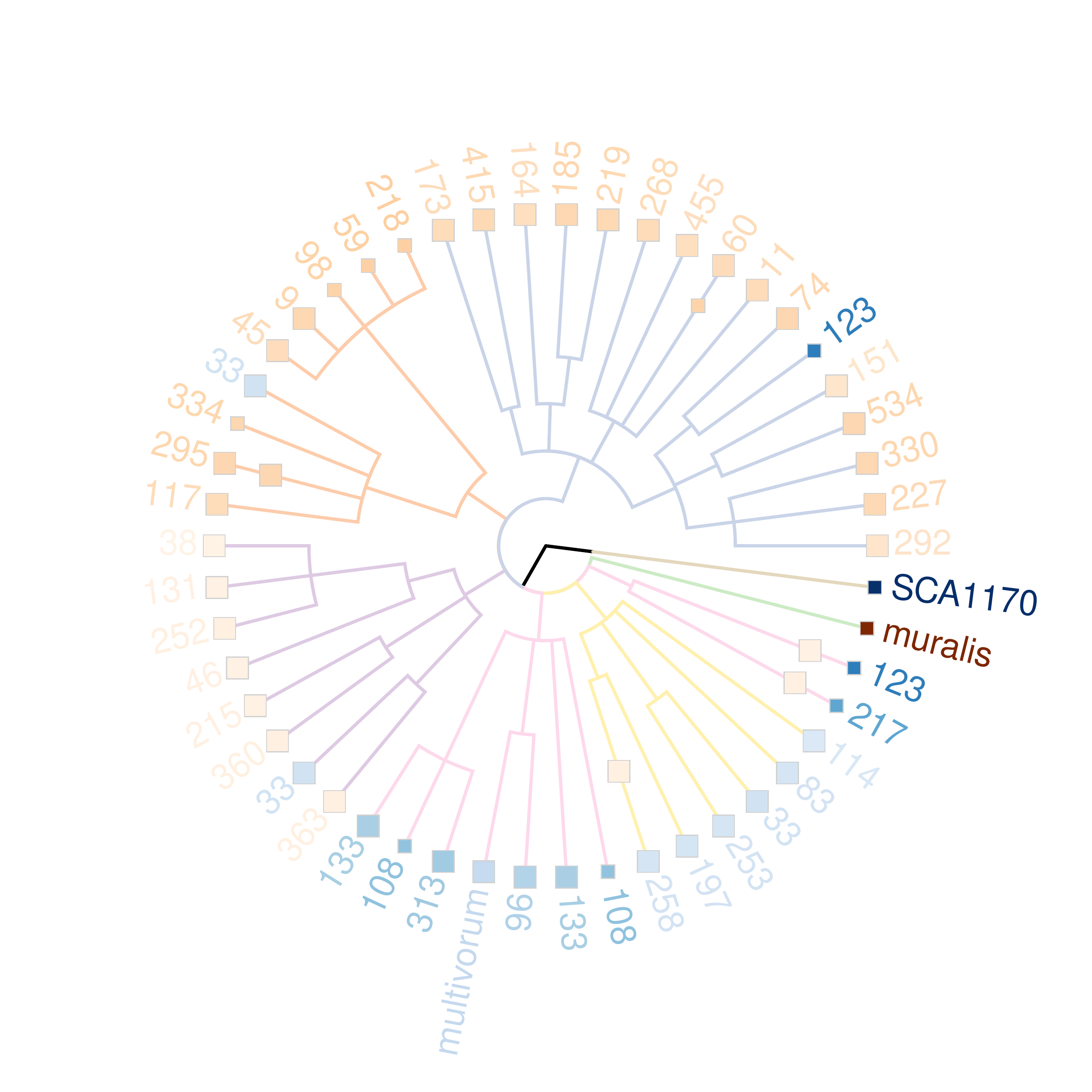}
    };
    \node (F) at (4, -14) {
    \includegraphics[width=.37\linewidth]{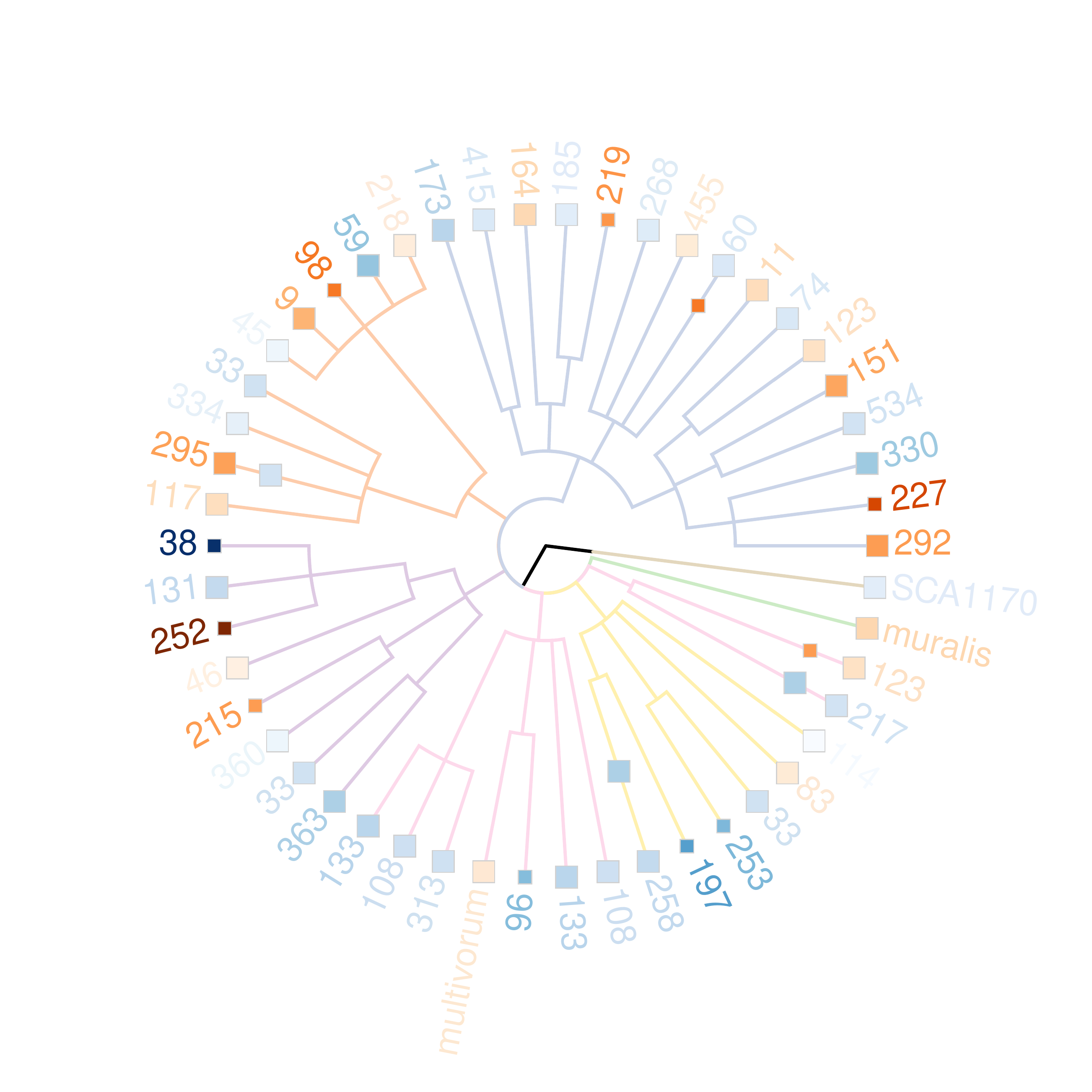}
    \hspace{1cm}
    \includegraphics[width=.37\linewidth]{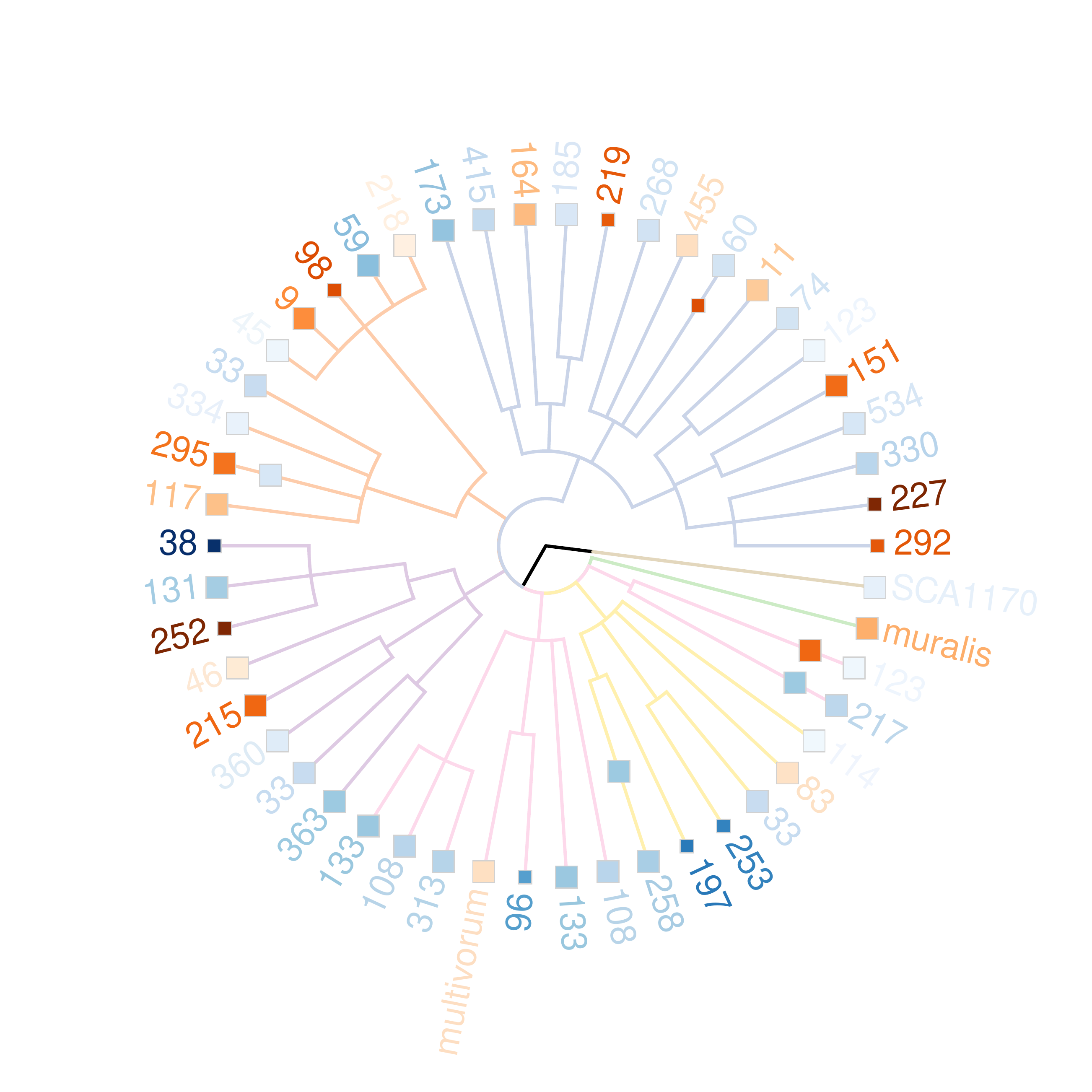}
    };
    \node (clab1) at (-2,-4) {\small unweighted};
    \node (clab2) at (11,-4) {\small phylum-weighted};
    \node (clab3) at (-2,-11) {\small UniFrac-weighted W\textsuperscript{B}};
    \node (clab3) at (11,-11) {\small UniFrac-weighted W\textsuperscript{A}};
    \end{tikzpicture}
}
\caption{Scaled CFI values for centralpark dataset where a darker color shade of the name of the microbiota signifies a stronger (positive resp. negative) feature influence (Aitchison Kernel).}\label{fig:cfi_centralparksoil_circle_aitchison}
\end{figure}

\FloatBarrier

\section{Additional experiments with simulated data}\label{apd:additional_experiments}

\subsection{Consistency of CPD and CFI}\label{apd:consistency_simulation}

We illustrate the consistency of CPD and CFI from Theorem~\ref{thm:consistency} based on \KernelBiome with the following example. Let $k_{\text{tv}}$ be the total variation kernel and consider the function
$$f: x\mapsto 100 \cdot k_{\text{tv}}(z, x)$$ with 
\begin{align*}
    z &=(0.06544714, 0.08760064, 0.17203408, 0.07502236, 0.1642615, \\
    &\qquad 0.03761901, 0.18255478, 0.13099514, 0.08446536) \in\simp^{8}
\end{align*}
being a fixed and randomly selected point. 
Furthermore, we generate an i.i.d.\ dataset $(X_1,Y_1),$ $\ldots, (X_n,Y_n)$ based on the following 2 step generative model.
\begin{itemize}
    \item \textbf{Step 1:} Generate a random variable $\tilde{X}=(\tilde{X}^1,\ldots,\tilde{X}^9)$ such that the three blocks $(\tilde{X}^1, \tilde{X}^2, \tilde{X}^3)$, $(\tilde{X}^4, \tilde{X}^5, \tilde{X}^6)$, and $(\tilde{X}^7, \tilde{X}^8, \tilde{X}^9)$ are i.i.d.\ from $\text{LogNormal}(0, \Sigma)$, where 
    $\Sigma = \begin{pmatrix}
    1 & 0.25 & -0.25 \\
    0.25 & 1 & 0.25 \\
    -0.25 & 0.25 & 1
    \end{pmatrix}$. Then, $X_i$ is constructed by normalizing $\tilde{X}$, that is, $X_i=\tilde{X}/\sum_{j=1}^{9}\tilde{X}^j$. The block structure adds non-trivial correlation structure between the compositional components.
    \item \textbf{Step 2:} Generate $Y_i$ based on $X_i$ by
    $$Y_i = f(X_i) + \epsilon_i$$ with $\epsilon_i \iid \mathcal{N}(0,1)$.
\end{itemize}
Based on one such dataset, we estimate the CFI and CPD for a fitted \KernelBiome estimator (using kernel ridge regression and default settings), and compare the estimates against the population CFI and CPD calculated from the true function $f$. In Fig.~\ref{fig:consistency}, we report the mean squared deviations (MSD) for both CFI and CPD based on $100$ such datasets for each sample size.

\begin{figure}[hbt!]
    \centering
    \includegraphics{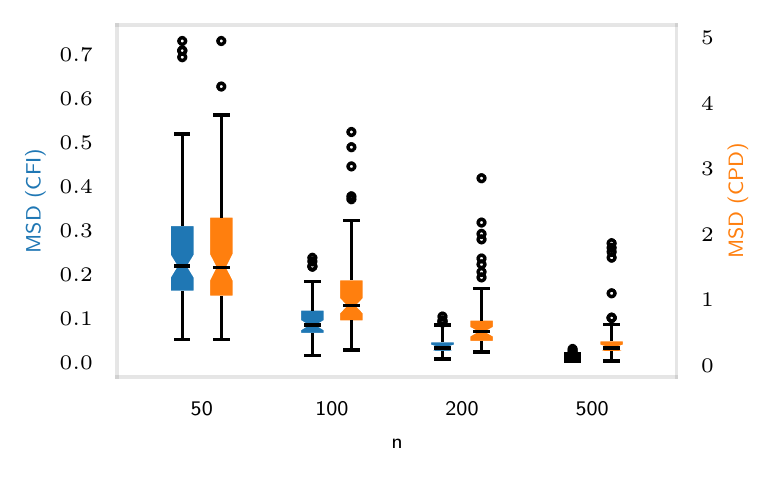}
    \caption{MSD of estimated CFI and CPD using \KernelBiome estimator based on $100$ random datasets for each sample size. For CPD, we calculate the true and estimated CPD based on $100$ evenly spaced grid points within the range of $[0.001, 0.999]$ and the reported MSD is the average MSD over the $9$ components. As the sample size $n$ increases the CFI and CPD estimates based on \KernelBiome converge to the true population quantities.}
    \label{fig:consistency}
\end{figure}

\subsection{Comparing CFI and CPD with permutation importance and partial dependence plots}\label{apd:cfi_cpd_pi_pdp}

Two common approaches to assess the importance of individual features are permutation importance (PI) and partial dependency plot (PDP). PI of the $j$-th feature is defined as the mean difference between the baseline mean squared error of a fitted model and the average mean squared error after permuting the $j$-th feature column a certain number of times. PDP is used to describe how individual features contribute to a fitted model. For the $j$-th feature, it describes its contribution by the function $z\mapsto \bE[\hat{f}(X^1,\ldots,X^{j-1}, z, X^j,\ldots,X^p)]$, where $\hat{f}$ is the fitted model. Both PI and PDP can be misleading when used with compositional covariates. 

In this section, we illustrate this based on two examples. In both cases, the proposed adjusted measures CFI and CPD remain correct, while the PI and PDP are incorrect. Consider the two functions
\begin{equation}\label{eq:twoexamples}
\begin{split}
    &f_1: x\mapsto 10 x^1 + 10 x^2 \\
    &f_2: x\mapsto \frac{1-x^2-x^3}{1-x^3}.
\end{split}
\end{equation}
For $f_1$, changes in all coordinates affect the function value due to the simplex constraint. For $f_2$, only changes in $x^1$ and $x^2$ affect the function value but not changes in $x^3$. This is because on the simplex $f_2(x)=\frac{x^1}{x^1+x^2}$. An importance measure should therefore associate a non-zero value to $x^3$ for $f^1$ and zero to $x^3$ for $f^2$.

We generate $200$ i.i.d.\ observations  $X_1\ldots,X_{100}$ with $X_i\overset{d}{=}\tilde{X}_i/\sum_{j=1}^3\tilde{X}_i^{j}$ for $\tilde{X}_i\overset{\text{i.i.d.}}{\sim}\text{LogNormal}(0, \text{Id}_3)$\footnote{$\text{LogNormal}(\mu, \Sigma)$ denotes the log-normal distribution with location parameter $\mu$ and scale parameter $\Sigma$. $\text{Id}_3$ denotes the 3-dimensional identity matrix.} and compute PI, PDP, CFI and CPD for each of the two functions. The results are given in Table~\ref{tab:cfi_vs_pi} and Fig.~\ref{fig:cpd_vs_pdp}.

As expected, the CFI and also CPD correctly capture the behavior of the two functions. However, PI and PDP are incorrect in both cases: For $f_1$ the variable $x^3$ shows no effect both with PI and PDP and for $f_2$ the variable $x^3$ is falsely assigned a strong negative effect even though it does not affect the function value at all.
In Table~\ref{tab:cfi_vs_pi}, we have additionally computed the relative influence (RI) given by $\bE[\tfrac{d}{dx^j}\hat{f}(X)]$ due to \citet{friedman2001greedy}. It has the same problems as PI as it does not take into account the simplex structure.

\begin{table}
\centering
\begin{tabular}{crrrrrr}
\toprule
    & \multicolumn{3}{c}{$f_1$} & \multicolumn{3}{c}{$f_2$} \\ \cmidrule(lr){2-4}\cmidrule(lr){5-7} 
    & $x^1$ & $x^2$ & $x^3$ & $x^1$ & $x^2$ & $x^3$ \\ \midrule
CFI & 0.85 & 0.87 & \textbf{-1.72} & 1.94 & -1.94 & \textbf{0.00} \\ 
RI & 3.76 & 2.99 & \textbf{0.00} & 0.00 & -4.72 & \textbf{-4.40} \\
PI  & 11.66 & 5.76 & \textbf{0.00} & 0.00 & 28.98 & \textbf{24.72} \\ \bottomrule
\end{tabular}
\caption{CFI, RI and PI for the two functions $f_1$ and $f_2$ defined in \eqref{eq:twoexamples}. Only CFI correctly attributes the effect of $x^3$ (marked in bold).} \label{tab:cfi_vs_pi} 
\end{table}

\begin{figure}
\centering
\includegraphics[width=0.6\textwidth]{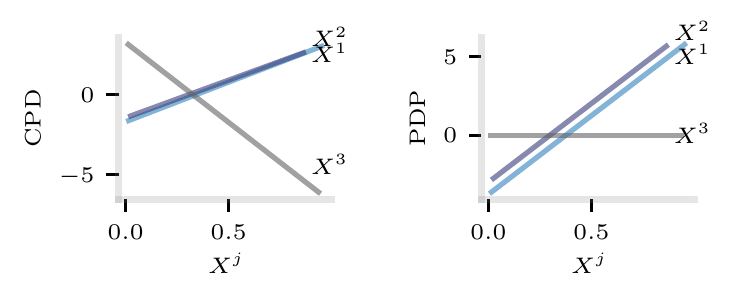}
\hfill
\includegraphics[width=0.6\textwidth]{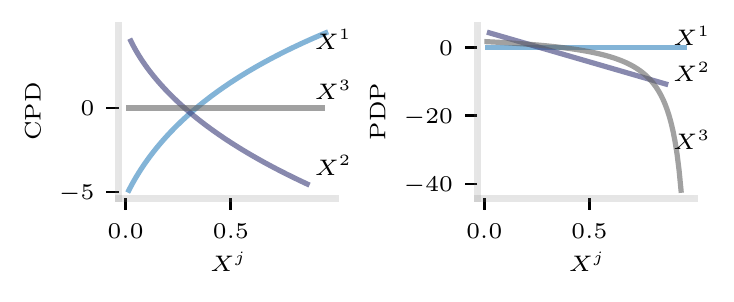}
\captionof{figure}{Top row: CPD and PDP plot based on $f_1$. Bottom row: CPD and PDP plot based on $f_2$. CPD reflects the true feature importance on the simplex while PDP does not.}\label{fig:cpd_vs_pdp}
\end{figure}

\FloatBarrier

% ---- Background on kernels ----

\section{Background on kernels}\label{app:kernels}

\subsection{Connection between metrics and kernels}\label{apd:background_kernels_metrics}

\begin{definition}[Metric, semi-metric, and quasi-metric] 
A function $d: \cX\times \cX\to\bR$ is called a \textit{metric} if it satisfies
\begin{enumerate}
    \item[(a)] $d(x,x) = 0$,
    \item[(b)] $d(x,y) = d(y,x) \geq 0$,
    \item[(c)] $d(x,y)\leq d(x,z) + d(y,z)$,
    \item[(d)] $d(x,y) = 0 \Rightarrow x=y$.
\end{enumerate} 
It is called a \textit{semi-metric} if it satisfies (a)-(c), and a \textit{quasi-metric} if it satisfies (a)-(b).
\end{definition}

\begin{definition}[Function of negative type and Hilbertian metric] 
A quasi-metric $d: \cX\times\cX\to\bR$ is called of \textit{negative-type} if for all $n\in\bN$, all $x_1,\cdots, x_n \in \cX$, and all $c_1,\cdots,c_n\in\bR$ with $\sum_{i=1}^n c_i = 0$, it holds that
\begin{equation}\label{cond:embed}
    \sum_{i,j=1}^n c_ic_jd^2(x_i,x_j)\leq 0.
\end{equation}
If $d$ is a (semi-)metric, then $d$ is also called \textit{Hilbertian}.
\end{definition}

\begin{theorem}[Sufficient and necessary conditions for isometric embeddings] \label{thm:hilbert_d_negativetype}
A quasi-metric space $(X,d)$ can be isometrically embedded in a Hilbert space if and only if $d$ is of negative type.
\end{theorem}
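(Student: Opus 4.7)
The plan is to prove the two directions separately, with the forward direction being essentially a one-line computation and the reverse direction requiring an explicit construction of the Hilbert space embedding via a positive-definite kernel built from $d$.

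For the forward direction, suppose $\iota\colon X\to H$ is an isometric embedding into a Hilbert space $H$, so that $d^2(x,y)=\|\iota(x)-\iota(y)\|^2 = \|\iota(x)\|^2+\|\iota(y)\|^2-2\langle \iota(x),\iota(y)\rangle$. For any $x_1,\ldots,x_n\in X$ and $c_1,\ldots,c_n\in\mathbb{R}$ with $\sum_i c_i=0$, the two norm-squared terms vanish when summed against $c_ic_j$, leaving
\begin{equation*}
\sum_{i,j=1}^n c_ic_j d^2(x_i,x_j) \;=\; -2\Big\|\sum_{i=1}^n c_i\,\iota(x_i)\Big\|^2 \;\leq\; 0.
\end{equation*}

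For the reverse direction, fix a reference point $x_0\in X$ and define
\begin{equation*}
k(x,y)\;\coloneqq\; \tfrac{1}{2}\bigl(d^2(x,x_0)+d^2(y,x_0)-d^2(x,y)\bigr).
\end{equation*}
The central step is to show that $k$ is a positive-definite kernel on $X$. Given arbitrary $x_1,\ldots,x_n$ and coefficients $c_1,\ldots,c_n$ whose sum $S=\sum_i c_i$ is now unconstrained, I would augment the configuration by adjoining $x_0$ with coefficient $c_0\coloneqq -S$, so that the extended coefficients sum to zero. Applying the negative-type condition to the extended configuration and using $d(x_0,x_0)=0$ yields
\begin{equation*}
\sum_{i,j=1}^n c_ic_j d^2(x_i,x_j) \;\leq\; 2S\sum_{i=1}^n c_i d^2(x_i,x_0),
\end{equation*}
which rearranges precisely into $\sum_{i,j}c_ic_j k(x_i,x_j)\geq 0$. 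Hence $k$ is positive definite.

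Once $k$ is known to be positive definite, the standard Moore--Aronszajn construction produces an RKHS $H_k$ with a canonical feature map $\Phi\colon X\to H_k$ satisfying $\langle\Phi(x),\Phi(y)\rangle_{H_k}=k(x,y)$. A direct computation using $d(x,x)=0$ gives $k(x,x)=d^2(x,x_0)$, and therefore
\begin{equation*}
\|\Phi(x)-\Phi(y)\|_{H_k}^2 \;=\; k(x,x)+k(y,y)-2k(x,y) \;=\; d^2(x,y),
\end{equation*}
so $\Phi$ is the desired isometric embedding. The main obstacle is the kernel-positivity step: the negative-type hypothesis is only directly useful for coefficient vectors summing to zero, and the trick of adjoining the reference point $x_0$ with coefficient $-S$ is what converts the zero-sum hypothesis into unrestricted positive definiteness of $k$. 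Everything else is either the standard RKHS construction or bookkeeping.
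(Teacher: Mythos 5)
Your proof is correct. Note that the paper does not actually prove this theorem itself --- its ``proof'' is a one-line citation to Wells and Williams (Theorem 2.4) --- so your self-contained argument is a genuine addition rather than a variation on the paper's route. What you give is the classical Schoenberg argument: the forward direction is the observation that the norm-squared terms in $d^2(x_i,x_j)=\|\iota(x_i)\|^2+\|\iota(x_j)\|^2-2\langle\iota(x_i),\iota(x_j)\rangle$ cancel against zero-sum coefficients, leaving $-2\|\sum_i c_i\iota(x_i)\|^2\le 0$; the reverse direction defines $k(x,y)=\tfrac12(d^2(x,x_0)+d^2(y,x_0)-d^2(x,y))$ and converts the zero-sum hypothesis into unrestricted positive definiteness by adjoining $x_0$ with coefficient $-S$. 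Both steps check out (I verified the inequality $\sum_{i,j}c_ic_jd^2(x_i,x_j)\le 2S\sum_i c_i d^2(x_i,x_0)$ and its rearrangement), and the final identity $\|\Phi(x)-\Phi(y)\|^2=d^2(x,y)$ follows from $k(x,x)=d^2(x,x_0)$. It is worth pointing out that your two key ingredients are exactly the ones the paper proves separately in its Lemma~\ref{lemma:pd_and_cpd} (the adjoin-$x_0$ trick relating conditionally positive definite and positive definite kernels) and Theorem~\ref{thm:kernel_and_metric} (the kernel--metric correspondence), so your proof in effect makes explicit how the cited result follows from machinery already present in the appendix. Two minor caveats you could flag: since $d$ is only a quasi-metric, distinct points with $d(x,y)=0$ are mapped to the same element of $\mathcal{H}_k$, so the ``embedding'' need not be injective (this is the usual convention for semi-metrics and matches the paper's footnote about semi-metrics); and your construction incidentally shows that a quasi-metric of negative type automatically satisfies the triangle inequality, being the pullback of a Hilbert-space norm distance.
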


\begin{proof}
See \citet[Theorem 2.4]{wells2012embeddings}.
\end{proof}

\begin{definition}[(conditionally) positive definite kernels]
A symmetric function $k: \cX \times \cX \to \bR$ (i.e., $\forall x,y \in \cX, k(x,y) = k(y,x)$) is called a \textit{positive definite kernel} if and only if for all $n\in\bN$, all $x_1,\cdots,x_n \in\cX$, and all $c_1,\cdots, c_n\in\bR$, it holds that 
\begin{equation} \label{cond:cpd}
    \sum_{i,j=1}^n c_ic_j k(x_i,x_j)\geq 0
\end{equation}
It is called a \textit{conditional positive definite kernel} if instead of for all $c_1,\cdots, c_n\in\bR$ condition \eqref{cond:cpd} only holds for all $c_1,\cdots, c_n\in\bR$ with $\sum_{i=1}^n c_i = 0$.
\end{definition}

\begin{lemma} \label{lemma:pd_and_cpd}
Let $\cX$ be a non-empty set, fix $x_0 \in \cX$ and let $k, \tilde{k}: \cX \times \cX \to\bR$ be symmetric functions satisfying for all $x,y\in\cX$ that
\begin{equation}
    k(x,y) = \tilde{k}(x,y) - \tilde{k}(x,x_0) - \tilde{k}(y,x_0) + \tilde{k}(x_0,x_0)
\end{equation}
Then $k$ is positive definite if and only if $\tilde{k}$ is conditionally positive definite.
\end{lemma}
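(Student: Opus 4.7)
The plan is to prove both directions simultaneously by establishing a single algebraic identity that links the two quadratic forms. Given any $n \in \bN$, points $x_1, \ldots, x_n \in \cX$ and coefficients $c_1, \ldots, c_n \in \bR$, I would augment the family by $x_0$ (the fixed reference point from the hypothesis) with the associated coefficient $c_0 \coloneqq -\sum_{i=1}^n c_i$, so that $\sum_{i=0}^n c_i = 0$. The key step is then to expand $\sum_{i,j=0}^n c_i c_j \tilde{k}(x_i, x_j)$ by splitting off the $i=0$ and $j=0$ terms, substituting $c_0 = -\sum_i c_i$, and using the defining relation $k(x,y) = \tilde{k}(x,y) - \tilde{k}(x,x_0) - \tilde{k}(y,x_0) + \tilde{k}(x_0,x_0)$, to arrive at the identity
\begin{equation*}
    \sum_{i,j=0}^n c_i c_j \tilde{k}(x_i, x_j) \;=\; \sum_{i,j=1}^n c_i c_j\, k(x_i, x_j).
\end{equation*}

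From this identity both implications follow immediately. For the direction ``$\tilde{k}$ conditionally positive definite $\Rightarrow k$ positive definite'', I would start with arbitrary $c_1, \ldots, c_n$, form $c_0$ as above to enforce the zero-sum constraint, invoke conditional positive definiteness of $\tilde{k}$ on the augmented family to bound the left-hand side by $0$ from below, and read off that $\sum_{i,j=1}^n c_i c_j k(x_i,x_j) \ge 0$ via the identity. For the converse ``$k$ positive definite $\Rightarrow \tilde{k}$ conditionally positive definite'', I would restrict to coefficients $c_1, \ldots, c_n$ with $\sum_i c_i = 0$; then $c_0 = 0$ and the augmented sum collapses to $\sum_{i,j=1}^n c_i c_j \tilde{k}(x_i,x_j)$, while the identity shows this equals $\sum_{i,j=1}^n c_i c_j k(x_i, x_j)$, which is nonnegative by assumption.

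There is no real obstacle here: the lemma is purely algebraic and the only care needed is bookkeeping when separating the $i=0$, $j=0$, and $i=j=0$ contributions and correctly substituting $c_0 = -\sum_i c_i$ into the three cross terms so that they collectively produce the three correction terms defining $k$ from $\tilde{k}$. The symmetry assumption on $\tilde{k}$ is what allows the two cross sums to combine cleanly.
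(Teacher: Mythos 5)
Your proposal is correct and follows essentially the same route as the paper's proof: both augment the family with $x_0$ and coefficient $c_0=-\sum_{i=1}^n c_i$, establish the identity $\sum_{i,j=0}^n c_ic_j\tilde{k}(x_i,x_j)=\sum_{i,j=1}^n c_ic_jk(x_i,x_j)$, and read off both implications from it. Your explicit observation that $c_0=0$ in the converse direction is a slightly cleaner articulation of a step the paper leaves implicit, but the argument is the same.
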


\begin{proof}
Fix $n\in\bN$, $c_1,\cdots, c_n\in\bR$, and $x_0, x_1,\cdots,x_n\in\cX$. Let $c_0 = -\sum_{i=1}^n c_i$, then we have
\begin{equation} \label{eq:pd_and_cpd}
\begin{split}
    \sum_{i,j=0}^n c_i c_j \tilde{k}(x_i,x_j) &= \sum_{i,j=1}^n c_i c_j \tilde{k}(x_i,x_j) + 
        \sum_{i=1}^n c_i c_0 \tilde{k}(x_i,x_0) \\
        &\quad +\sum_{j=1}^n c_0 c_j \tilde{k}(x_j,x_0) + 
        c_0 c_0 \tilde{k}(x_0,x_0) \\
        &= \sum_{i,j=1}^n c_i c_j \tilde{k}(x_i,x_j) - 
        \sum_{i,j=1}^n c_i c_j \tilde{k}(x_i,x_0) \\
        &\quad -\sum_{i,j=1}^n c_i c_j \tilde{k}(x_j,x_0) + 
        \sum_{i,j=1}^n c_i c_j\tilde{k}(x_0,x_0) \\
        &= \sum_{i,j=1}^n c_i c_j [\tilde{k}(x,y) - \tilde{k}(x,x_0) - \tilde{k}(y,x_0) + \tilde{k}(x_0,x_0)] \\
        &= \sum_{i,j=1}^n c_i c_j k(x_i,x_j).
\end{split}
\end{equation}
Now, if $\tilde{k}$ is conditionally positive definite, then (\ref{eq:pd_and_cpd}) implies that $\sum_{i,j=1}^n c_i c_j k(x_i,x_j)\geq 0$, so $k$ is positive definite; if $k$ is positive definite, (\ref{eq:pd_and_cpd}) implies that $\sum_{i,j=0}^n c_i c_j \tilde{k}(x_i,x_j \geq 0$ so $\tilde{k}$ is conditionally positive definite. This completes the proof of Lemma~\ref{lemma:pd_and_cpd}.
\end{proof}

\begin{lemma}[Shifted conditionally positive definite] \label{lem:shifted_cpd}
Let $\cX$ be a non-empty set and let $k: \cX\times\cX\to\bR$ be a positive definite kernel, then 
$$\tilde{k}(x,y) = k(x,y) + f(x) + f(y)$$
is a conditionally positive definite kernel for all $f: \cX\to\bR$.
\end{lemma}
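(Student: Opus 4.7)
The plan is a direct unpacking of the conditional positive-definiteness condition, exploiting the constraint $\sum_i c_i = 0$ to kill the $f$-terms. Fix $n \in \bN$, points $x_1,\ldots,x_n \in \cX$, and coefficients $c_1,\ldots,c_n \in \bR$ with $\sum_{i=1}^n c_i = 0$. I would then expand
\begin{equation*}
\sum_{i,j=1}^n c_i c_j \tilde{k}(x_i,x_j) = \sum_{i,j=1}^n c_i c_j k(x_i,x_j) + \sum_{i,j=1}^n c_i c_j f(x_i) + \sum_{i,j=1}^n c_i c_j f(x_j).
\end{equation*}

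Next I would factor the two $f$-terms: the second equals $\bigl(\sum_j c_j\bigr)\sum_i c_i f(x_i)$ and the third equals $\bigl(\sum_i c_i\bigr)\sum_j c_j f(x_j)$. The zero-sum hypothesis $\sum_i c_i = 0$ makes both vanish, leaving only $\sum_{i,j} c_i c_j k(x_i,x_j)$, which is nonnegative by positive definiteness of $k$ (this holds without any restriction on the $c_i$, so certainly under the zero-sum restriction). This establishes the inequality in the definition of conditional positive definiteness. Symmetry of $\tilde{k}$ follows immediately from symmetry of $k$ and the symmetric form $f(x)+f(y)$.

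There is no real obstacle here; the whole content of the lemma is the observation that the added rank-one-style term $f(x)+f(y)$ is annihilated by any zero-sum test functional, which is precisely what distinguishes conditional positive definiteness from positive definiteness. I would present it in three lines without invoking anything beyond the definitions already given in the excerpt.
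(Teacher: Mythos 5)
Your proof is correct and is essentially the argument the paper intends: the paper's own proof of this lemma simply states that it ``follows the exact same argument as the proof of Lemma~\ref{lemma:pd_and_cpd}'', i.e.\ the same expand-and-cancel computation, which you have written out explicitly by factoring the $f$-terms as $\bigl(\sum_j c_j\bigr)\sum_i c_i f(x_i)$ and using the zero-sum constraint. No gaps; your version is just the spelled-out form of what the paper leaves implicit.
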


\begin{proof}
The proof follows the exact same argument as the proof of Lemma \ref{lemma:pd_and_cpd}.
\end{proof}

\begin{theorem}[Connection between Hilbertian semi-metrics and positive definite kernels]\label{thm:kernel_and_metric} Let $k: \cX\times \cX\to\bR$
and $d: \cX\times\cX\to[0,\infty)$ be functions. If $k$ is a positive definite
kernel and $d$ satisfies $d^2(x,y) = k(x,x) + k(y,y) - 2k(x,y)$,
then $d$ is a Hilbertian semi-metric. On the other hand, for any $x_0\in\cX$,  if $d$ is a
Hilbertian semi-metric and and $k$ satisfies
$k(x,y) = -\frac{1}{2}d^2(x, y) + \frac{1}{2}d^2(x,x_0) +
\frac{1}{2}d^2(x_0,y)$, then $k$ is a pd kernel.
\end{theorem}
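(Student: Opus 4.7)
The plan is to prove the two directions of the equivalence by exploiting the infrastructure already developed in the preceding lemmas, specifically Theorem~\ref{thm:hilbert_d_negativetype} (negative type characterises Hilbertian embeddability) and Lemma~\ref{lemma:pd_and_cpd} (translation between positive definite and conditionally positive definite kernels via a reference point). Both directions reduce to showing that the squared distance $d^2$ behaves like a negative-type function precisely when a corresponding kernel is positive definite.

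For the forward direction (kernel $\Rightarrow$ Hilbertian semi-metric), I would first invoke the Moore--Aronszajn construction to obtain the RKHS $\mathcal{H}_k$ with canonical feature map $\phi(x) = k(x,\cdot)$, which satisfies $\langle \phi(x),\phi(y)\rangle_{\mathcal{H}_k} = k(x,y)$. Expanding the Hilbert space norm then gives
\begin{equation*}
    \|\phi(x) - \phi(y)\|_{\mathcal{H}_k}^2 = k(x,x) + k(y,y) - 2k(x,y) = d^2(x,y),
\end{equation*}
so $d$ is the pullback of a genuine Hilbert space (semi-)metric. Symmetry, non-negativity, and $d(x,x)=0$ are immediate; the triangle inequality follows from the triangle inequality for the Hilbert space norm. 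This exhibits an isometric embedding of $(\mathcal{X},d)$ into a Hilbert space, which is the definition of Hilbertian.

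For the backward direction (Hilbertian semi-metric $\Rightarrow$ kernel), I would first set $\tilde{k}(x,y) := -\tfrac{1}{2} d^2(x,y)$ and verify that $\tilde{k}$ is conditionally positive definite. This is exactly the content of Theorem~\ref{thm:hilbert_d_negativetype}: since $d$ is Hilbertian it is of negative type, meaning $\sum_{i,j} c_i c_j d^2(x_i,x_j) \leq 0$ whenever $\sum_i c_i = 0$, which is precisely $\sum_{i,j} c_i c_j \tilde{k}(x_i,x_j) \geq 0$ under the zero-sum constraint. Next, I would note that because $d$ is a semi-metric, $d(x_0,x_0)=0$, hence $\tilde{k}(x_0,x_0)=0$. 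Substituting into the identity of Lemma~\ref{lemma:pd_and_cpd} gives
\begin{equation*}
    \tilde{k}(x,y) - \tilde{k}(x,x_0) - \tilde{k}(y,x_0) + \tilde{k}(x_0,x_0) = -\tfrac{1}{2}d^2(x,y) + \tfrac{1}{2}d^2(x,x_0) + \tfrac{1}{2}d^2(x_0,y),
\end{equation*}
which coincides with the $k$ specified in the theorem. Lemma~\ref{lemma:pd_and_cpd} then yields that $k$ is positive definite, completing the argument.

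The main obstacle, such as it is, is conceptual rather than technical: one must correctly match the reference-point shift of Lemma~\ref{lemma:pd_and_cpd} against the formula in the theorem statement and verify that $\tilde{k}(x_0,x_0)=0$ so that the shift collapses to the stated expression. Everything else amounts to repackaging the negative-type condition as conditional positive definiteness of $-\tfrac{1}{2}d^2$ and using the RKHS feature map in the opposite direction; no further computation is required.
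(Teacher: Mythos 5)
Your proposal is correct and follows essentially the same route as the paper: the forward direction uses the RKHS feature map to realize $d$ as a Hilbert-space semi-metric (the paper verifies the negative-type condition by a direct Gram-matrix computation where you instead invoke Theorem~\ref{thm:hilbert_d_negativetype}, but this is the same content), and the backward direction identifies $-\tfrac{1}{2}d^2$ as conditionally positive definite and applies Lemma~\ref{lemma:pd_and_cpd} with the observation that $\tilde{k}(x_0,x_0)=0$, exactly as in the paper.
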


The result is due to \citet{schoenberg1938metric}.

\begin{proof}
We start with the first part. Assume that $k$ is a positive definite kernel and $d$ satisfies $d^2(x,y) = k(x,x) + k(y,y) - 2k(x,y)$. Then, $d$ is indeed a semi-metric by the following arguments:
\begin{enumerate}
\item[(a)] $d(x,x) = \sqrt{k(x,x) + k(x,x) - 2k(x,x)} = 0$,
\item[(b)] $d(x,y) = d(y,x)$, and since $k$ is positive definite, let $c_1 = 1$, $c_2=-1$, $x_1 = x$, and $x_2 = y$,
\begin{equation*}
    \begin{split}
        0\leq \sum_{i,j=1}^n c_i c_j k(x_i,x_j) &= k(x_1, x_1) - k(x_1,x_2) - k(x_2, x_1) + k(x_2,x_2) \\ 
        &= k(x,x) + k(y,y) - 2k(x,y) \\
        & = d(x,y)
    \end{split}
\end{equation*}
\item[(c)] Since $k$ is a positive definite kernel, there exists a feature map $\phi_k$ from $\cX$ to an RKHS $\cH_k$, and we have 
\begin{equation*}
    \begin{split}
        ||\phi_k(x) - \phi_k(y)||^2_{\cH_k} &= \langle \phi_k(x) - \phi_k(y), \phi_k(x) - \phi_k(y) \rangle_{\cH_k} \\
        &= \langle \phi_k(x), \phi_k(x)\rangle_{\cH_k}  + 
        \langle \phi_k(y), \phi_k(y)\rangle_{\cH_k} - 
        2 \langle \phi_k(x), \phi_k(y)\rangle_{\cH_k} \\
        &= k(x,x) + k(y,y) - 2k(x,y) \\
        &= d^2(x,y)
    \end{split}
\end{equation*}
Therefore, $d(x,z) \leq d(x, y) + d(y,z)$ follows from the triangle inequality of a norm.
\end{enumerate}
To show $d$ is also Hilbertian, take any $n\in\bN$, any $x_1,\cdots, x_n \in\cX$, and any $c_1,\cdots, c_n\in\bR$, we have 
\begin{equation*}
\begin{split}
    \sum_{i,j=1}^n c_i c_j d(x_i,x_j) &= \sum_{i=1}^n c_i k(x_i,x_i) \sum_{j=1}^n c_j + \sum_{j=1}^n c_i k(x_j,x_j) \sum_{i=1}^n c_i \\
    &\quad - 2\sum_{i,j=1}^n c_i c_j k(x_i,x_j) \\
    &= -2\sum_{i,j=1}^n c_i c_j k(x_i,x_j) \leq 0 \quad \text{(since $k$ is positive definite)}.
\end{split}
\end{equation*}
This proves the first part of the theorem.

For the second part, assume that $d$ is a Hilbertian semi-metric and $k$ satisfies $k(x,y) = -\frac{1}{2}d^2(x, y) + \frac{1}{2}d^2(x,x_0) +
\frac{1}{2}d^2(x_0,y)$. Then, since $d$ is Hilbertian, $-d^2$ satisfies the requirement of a conditionally positive definite kernel (with the additional property that $-d^2(x,x) = 0$). Hence, by Lemma \ref{lemma:pd_and_cpd}, $k$ is indeed positive definite. This completes the proof of Theorem~\ref{thm:kernel_and_metric}.
\end{proof}
  
\subsection{Dimensionality reduction and visualization with kernels}\label{app:kernel_mds}

One important benefit of using the kernel approach is that we can leverage the kernels for dimensionality reduction and visualization, so that one can identify outliers in the data and further investigate them. In this section, we provide a short introduction on how to use kernels for multi-dimensional scaling and connect it to kernel PCA \citep{scholkopf2002learning}.

Kernel methods project the compositional data into a (potentially) high-dimensional RKHS $\cH_k$, which we now want to project into the low dimensional Euclidean space $\bR^{\ell}$ (with $\ell \ll p$) such that the lower dimensional representation preserves information that helps separate the observations of different traits in the RKHS. That is, given observations $x_1,\cdots, x_n \in\bS^{p-1}$ and a kernel $k$, we would like to define a map $\Phi: \cH_k\to\bR^{\ell}$ such that 
\begin{equation*}
    \sum_{i,j=1}^n\|\langle k(x_i, \cdot), k(x_j,\cdot)\rangle_{\cH_k} - \langle\Phi(k(x_i,\cdot)), \Phi(k(x_j,\cdot))\rangle_{\bR^{\ell}}\|^2
\end{equation*}
is minimized. In matrix notation, this corresponds to solving
\begin{equation*}
    \argmin_{Z\in\bR^{n\times \ell}} \|K - ZZ^{\top}\|^2,
\end{equation*}
where the rows of $Z$ are $z_i = \Phi(k(x_i,\cdot)) \in \bR^{\ell}$ for all $i\in\{1\cdots,n\}$ and $K\in\bR^{n\times n}$ is the kernel Gram-matrix.
This is similar to the classical multidimensional scaling (MDS) but measuring the similarity in the RKHS instead of in Euclidean space. By the Eckart-Young theorem \citep{eckart1936approximation}, this minimization problem can be solved via the eigendecomposition of the matrix $K = V\Sigma V^{\top}$, and the optimal solution is $$Z_{\text{opt}} = (V_1,\ldots, V_{\ell})(\Sigma_{:\ell})^{\frac{1}{2}},$$ where $V_{1},\ldots,V_{\ell}$ are the first $\ell$ columns of $V$ and $\Sigma_{:\ell}$ is the upper-left ($\ell\times\ell$)-submatrix of $\Sigma$. The optimal projection $\Phi_{\text{opt}}$ is then given for all $f\in\cH_k$ by
\begin{equation}
    \label{eq:kernel_projection}
    \Phi_{\text{opt}}(f)=(\Sigma_{:\ell})^{-\frac{1}{2}}(V_1,\ldots, V_{\ell})^{\top}\begin{pmatrix} \langle f, k(x_1,\cdot)\rangle_{\cH_k} \\ \vdots \\ \langle f, k(x_n,\cdot)\rangle_{\cH_k}
    \end{pmatrix}.
\end{equation}
This in particular allows to project a new observations $w\in\simp^{p-1}$ with the same projection that is $w\mapsto \Phi_{\text{opt}}(k(w,\cdot))$.

The projection in \eqref{eq:kernel_projection} depends on the origin of the RKHS $\cH_k$. To remove this dependence, it may therefore be desirable to consider a centered version of the optimal projection. This can be achieved by considering the RKHS $\tilde{\cH}_k$ consisting of the functions $\tilde{f}(\cdot) = f(\cdot)- \frac{1}{n}\sum_{i=1}^n k(x_i,\cdot)$ with $f\in\cH_k$. To compute the optimal centered projection \eqref{eq:kernel_projection} for the RKHS $\tilde{\cH}_k$, we only need to perform double centering on the kernel matrix $K$, i.e., $\tilde{K} = HKH$, where $H = I-\frac{1}{n}\mathbf{1}\mathbf{1}^T$ and replace $k(x,\cdot)$ by $\tilde{k}(x,\cdot)=k(x,\cdot)- \frac{1}{n}\sum_{i=1}^n k(x_i,\cdot)$. With the centering step, this procedure is equivalent to kernel PCA \citep{scholkopf2002learning}. The steps to obtain the lower-dimensional representation in matrix form are given in Algorithm~\ref{alg:dim_redu}.

\begin{center}

\begin{minipage}{.7\linewidth}
\begin{algorithm}[H]
\caption{Dimensionality reduction with kernels}\label{alg:dim_redu}
\begin{algorithmic}[1]
\Require{Training data $X_1,\ldots,X_n\in\simp^{p-1}$, visualization data $X^{\text{new}}_1,\ldots,X^{\text{new}}_m\in\simp^{p-1}$} (can be same as training data), kernel function $k$, dimension $l\in\{1,\ldots,p\}$, indicator whether to use centering $\text{CenterK}\in\{\texttt{True}, \texttt{False}\}$
\Ensure{$l$-dimensional representation $Z=(Z_1,\ldots,Z_m)^{\top}\in\bR^{m\times l}$}

\Statex
\State \textit{\# Define centering function}
\Function{CenterKernelMatrix}{$K$, $\tilde{K}$}
\State $K^{\text{center}} \gets \tilde{K} - \frac{1}{n}\mathbf{1}\mathbf{1}^T K - \frac{1}{n} \tilde{K}\mathbf{1}\mathbf{1}^T + 
    \frac{1}{n^2}\mathbf{1}\mathbf{1}^T K \mathbf{1}\mathbf{1}^T$
\State \Return $K^{\text{center}}$
\EndFunction
\Statex

\State \textit{\# Compute kernel matrix for training data}
\For{$i,j=1,\cdots,n$}
    \State $K_{ij} \gets k(X_i, X_j)$
\EndFor
\Statex
\State \textit{\# Compute kernel matrix for visualization data}
\For{$i=1,\cdots,m$ and $j=1,\cdots,n$}
    \State $K^{\text{new}}_{ij} \gets k(X^{\text{new}}_i, X_j)$
\EndFor
\Statex

\State \textit{\# Center kernel matrices}
\If{$\text{CenterK}$}
    \State $K^{\text{new}} \gets \text{CenterKernelMatrix}(K, K^{\text{new}})$
    \State $K \gets \text{CenterKernelMatrix}(K, K)$
\EndIf
\Statex

\State \textit{\# Compute $l$-dimensional representation}
\State $V, \Sigma \gets \text{eigenvalue decomposition of }K$
\State $Z \gets K^{\text{new}}(V_1,\ldots,V_l)(\Sigma_{:l})^{-\frac{1}{2}}$
\Statex

\State \Return $Z$

\end{algorithmic}
\end{algorithm}
\end{minipage}
\end{center}

\FloatBarrier
  
\subsubsection{Compositionally adjusted coordinate-wise contribution to each principle component}
  
Given the optimal projection function $\Phi_{\text{opt}}$, define the function $F:\simp^{p-1}\rightarrow\mathbb{R}^{\ell}$ for all $x\in\simp^{p-1}$ by $F(x)=\Phi_{\text{opt}}(k(x,\cdot))$. We then call the components $F^1,\ldots,F^{\ell}$ the principle components. Our goal is now to understand how each principle component is affected by changes in the different components of its arguments. For this, fix a principle component $r\in\{1,\ldots,\ell\}$ and consider for each $j\in\{1,\ldots,p\}$ the quantities
\begin{equation*}
    \E[F^r(\psi_j(X, c))-F^r(X)],
\end{equation*}
where $c\in(0,1)$ and $\psi_j$ the perturbation defined in App.~\ref{sec:add_defs}. This is very similar in spirit as the CFI but with the derivative replaced by a difference and measures how much a perturbation of size $c$ in the $j$-th component effects the value of the $r$-th principle component. It is easily estimated by
\begin{equation*}
    \frac{1}{n}\sum_{i=1}^nF^r(\psi_j(X_i, c))-F^r(X_i).
\end{equation*}

% ---- Proofs ----

\section{Proofs}\label{apd:proofs}

\subsection{Proof of Proposition~\ref{prop:cfi_cpd_log_contrast_model}} \label{apd:cfi_cpd_log_contrast_model_proof}

\begin{proof}
We start with the CFI. Fix $j\in\{1,\ldots,p\}$ and $x\in\simp^{p-1}$, then we can compute the derivative using the chain rule and the explicit form of the perturbation $\psi$ as follows
\begin{align}
    \tfrac{d}{dc}f(\psi_j(x, c))
    &=\left\langle \nabla f(\psi_j(x, c)), \tfrac{d}{dc}\psi_j(x, c)\right\rangle \nonumber\\
    &=\left\langle\nabla f(\psi_j(x, c)), \tfrac{d}{dc} s_c (x^1,\cdots,x^{j-1}, cx^j, x^{j+1},\cdots,x^p)^{\top}\right\rangle\nonumber\\
    &=\left\langle\nabla f(\psi_j(x, c)), \tfrac{d}{dc} \frac{1}{\textstyle\sum_{\ell\neq j}^p x^{\ell}+cx^j}
    (x^1,\cdots,x^{j-1}, cx^j, x^{j+1},\cdots,x^p)^{\top}\right\rangle\nonumber\\
    &=\Bigg\langle\nabla f(\psi_j(x, c)), \nonumber \\ 
    &\quad \frac{-x^j}{(\textstyle\sum_{\ell\neq j}^p x^{\ell}+cx^j)^2}\left(x^1,\cdots,x^{j-1},cx^jx^j- x^j(\textstyle\sum_{\ell\neq j}^p x^{\ell}+cx^j), x^{j+1},\cdots,x^p\right)^{\top}\Bigg\rangle.\nonumber
\end{align}
Evaluating, the derivative at $c=1$ leads to
\begin{equation}
\label{eq:CFI_part1}
   \tfrac{d}{dc}f(\psi_j(x, c))\vert_{c=1}
   =\left\langle\nabla f(x), x^j(e_j- x)\right\rangle,
\end{equation}
where we used that $\psi_j(x, 1)=x$. Moreover, the gradient of $f$ in the case of the log-contrast model is given by
\begin{equation}
\label{eq:CFI_part2}
    \nabla f(x)=\left(\frac{\beta_1}{x^1},\ldots, \frac{\beta_p}{x^p}\right)^{\top}.
\end{equation}
Combining \eqref{eq:CFI_part1} and \eqref{eq:CFI_part2} together with the constraint $\sum_{k=1}^p\beta_k=0$ implies that
\begin{equation*}
    \tfrac{d}{dc}f(\psi_j(x, c))\vert_{c=1}=-x^j\sum_{k\neq j}\beta_k + \beta^j(1-x^j)=\beta_j.
\end{equation*}
Hence, taking the expectation leads to
\begin{equation*}
    I^j_j=\bE[\tfrac{d}{dc}f(\psi_j(X, c))\vert_{c=1}]=\beta_j,
\end{equation*}
which proves the first part of the proposition.

Next, we show the result for the CPD. Fix $j\in\{1,\ldots,p\}$ and $z\in[0,1]$. Then $S_f^j(z)$ for the log-contrast model can be computed as follows
\begin{align*}
    S_f^j(z)
    &=\bE[f(\phi_j(X,z))]-\bE[f(X)]\\
    &=\sum_{\ell=1}^p\beta_{\ell}\bE[\log(\phi_j(X,z)^{\ell})]-\bE[f(X)]\\
    &=\sum_{\ell\neq j}^p\beta_{\ell}\bE[\log(sX^{\ell})]+\beta_j\log(z)-\bE[f(X)]\\
    &=\beta_j\log(z)+\sum_{\ell\neq j}^p\beta_{\ell}\bE[\log(s)]+\sum_{\ell\neq j}^p\beta_{\ell} \bE[\log(X^{\ell})]-\bE[f(X)],
\end{align*}
where $s=(1-z)/(\sum_{\ell\neq j}^p X^{\ell})$. Using $\beta^j=-\sum_{\ell\neq j}^p\beta_{\ell}$ (which follows from the log-contrast model constraint on $\beta$) we can simplify this further and get
\begin{align*}
    S_f^j(z)
    &=\beta_j\log(z)+\sum_{\ell\neq j}^p\beta_{\ell}\bE[\log(1-z)]-\sum_{\ell\neq j}^p\beta_{\ell}\bE[\log(\textstyle\sum_{k\neq j}^p X^{k})]+\sum_{\ell\neq j}^p\beta_{\ell} \bE[\log(X^{\ell})]-\bE[f(X)]\\
    &=\beta_j\log(z)-\beta_{j}\bE[\log(1-z)]+\beta_{j}\bE[\log(\textstyle\sum_{k\neq j}^p X^{k})]+\sum_{\ell\neq j}^p\beta_{\ell} \bE[\log(X^{\ell})]-\bE[f(X)]\\
    &=\beta_j\log\left(\frac{z}{1-z}\right)+\beta_{j}\bE[\log(\textstyle\sum_{k\neq j}^p X^{k})]+\sum_{\ell\neq j}^p\beta_{\ell} \bE[\log(X^{\ell})]-\sum_{\ell=1}^p\beta_{\ell}\bE[\log(X^{\ell})]\\
    &=\beta_j\log\left(\frac{z}{1-z}\right)+c,
\end{align*}
with $c=\beta_{j}\bE[\log(\textstyle\sum_{k\neq j}^p X^{k})]+\sum_{\ell\neq j}^p\beta_{\ell} \bE[\log(X^{\ell})]-\sum_{\ell=1}^p\beta_{\ell}\bE[\log(X^{\ell})].$ Finally, assume $\beta^j=0$, then it holds that
\begin{equation*}
    c=\sum_{\ell\neq j}^p\beta_{\ell} \bE[\log(X^{\ell})]-\sum_{\ell\neq j}^p\beta_{\ell}\bE[\log(X^{\ell})]
    =0.
\end{equation*}
This completes the proof of Proposition~\ref{prop:cfi_cpd_log_contrast_model}.
\end{proof}

\subsection{Proof of Theorem~\ref{thm:consistency}} \label{apd:conistency_proof}

\begin{proof}
We first prove (i). To see this, we apply the triangle inequality to get that
\begin{equation}
\label{eq:basic_ineq1}
    |\hat{I}^j_{\hat{f}_n}-I^j_{f^*}|
    \leq \underbrace{|\hat{I}^j_{\hat{f}_n}-\hat{I}^j_{f^*}|}_{\eqqcolon A_n} + \underbrace{|\hat{I}^j_{f^*}-I^j_{f^*}|}_{\eqqcolon B_n}.
\end{equation}
Next, we consider the two terms $A_n$ and $B_n$ separately. We begin with $A_n$, by using the definition of the CFI together with \eqref{eq:CFI_part1} from the proof of Proposition~\ref{prop:cfi_cpd_log_contrast_model}. This leads to
\begin{align*}
    A_n
    &=\left\vert \frac{1}{n}\sum_{i=1}^n\left(\tfrac{d}{dc}\hat{f}_n(\psi(X_i, c)\vert_{c=1}-\tfrac{d}{dc}f^*(\psi(X_i, c)\vert_{c=1}\right)\right\vert\\
    &=\left\vert \frac{1}{n}\sum_{i=1}^n \left\langle\nabla \hat{f}_n(X_i)-\nabla f^*(X_i), X^j_i(e_j- X_i)\right\rangle\right\vert\\
    &\leq\frac{1}{n}\sum_{i=1}^n \left\vert \left\langle\nabla \hat{f}_n(X_i)-\nabla f^*(X_i), X^j_i(e_j- X_i)\right\rangle\right\vert\\
    &\leq\frac{1}{n}\sum_{i=1}^n \big\|\nabla \hat{f}_n(X_i)-\nabla f^*(X_i)\big\|_2 \big\|X^j_i(e_j- X_i)\big\|_2\\
    &\leq\frac{1}{n}\sum_{i=1}^n \big\|\nabla \hat{f}_n(X_i)-\nabla f^*(X_i)\big\|_2,
\end{align*}
where for the last three steps we used the triangle inequality, the Cauchy-Schwartz inequality and that $\|X^j_i(e_j- X_i)\|_2\leq 1$ since $X_i\in\simp^{p-1}$, respectively. By assumption, it therefore holds that $A_n\rightarrow 0$ in probability as $n\rightarrow\infty$.
For the $B_n$ term, observe that using the same bounds it holds that
\begin{equation*}
    \bE\left[\left(\tfrac{d}{dc}f^*(\psi(X_i, c)\vert_{c=1}\right)^2\right]
    =\bE\left[\left(\left\langle\nabla f^*(X_i), X^j_i(e_j- X_i)\right\rangle\right)^2\right]
    \leq \bE\left[\left\|\nabla f^*(X_i)\right\|_2^2\right].
\end{equation*}
By assumption that $\bE\left[\left\|\nabla f^*(X_i)\right\|_2^2\right]<\infty$ this implies we can apply the weak law of large numbers to get for $n\rightarrow\infty$ that
\begin{equation*}
    \hat{I}^j_{f^*}=\frac{1}{n}\sum_{i=1}^n\tfrac{d}{dc}f^*(\psi(X_i, c)\vert_{c=1}\overset{P}{\longrightarrow} \bE\left[\tfrac{d}{dc}f^*(\psi(X_i, c)\vert_{c=1}\right]=I_{f^*}^j.
\end{equation*}
This immediately implies that $B_n\rightarrow 0$ in probability as $n\rightarrow\infty$. Combining the convergence of $A_n$ and $B_n$ in \eqref{eq:basic_ineq1} completes the proof of (i).

Next, we prove (ii). Fix $j\in\{1,\ldots,p\}$  and $z\in[0,1]$ such that $z/(1-z)\in\operatorname{supp}(X^j/\sum_{\ell\neq j}X^{\ell})$.
By the definition of the perturbation $\phi_j$ we get that
\begin{equation}
    \label{eq:nonextrapolation}
    \phi_j(X,z)=s(X^1,\cdots,X^{j-1}, \tfrac{z}{1-z} \textstyle\sum_{\ell\neq j}X^{\ell}, X^{j+1},\cdots,X^p)
\end{equation}
where $s = (1-z)/(\sum_{\ell\neq j}^p X^{\ell})$.
Next, using the assumption that $\operatorname{supp}(X)=\{x\in\simp^{p-1}\mid x=w/(\sum_j w^j) \text{ with }w\in \operatorname{supp}(X^1)\times\cdots\times \operatorname{supp}(X^p)\}$ and that 
$z/(1-z)\in\operatorname{supp}(X^j/\sum_{\ell\neq j}X^{\ell})$ we get that 
\begin{equation}
    \label{eq:nonextrapolation2}
    \phi_j(X,z)\in\operatorname{supp}(X^j)
\end{equation}
almost surely.

By the triangle inequality it holds that
\begin{equation}
\label{eq:basic_ineq1_part2}
    |\hat{S}^j_{\hat{f}_n}(z)-S^j_{f^*}(z)|
    \leq \underbrace{|\hat{S}^j_{\hat{f}_n}(z)-\hat{S}^j_{f^*}(z)|}_{\eqqcolon C_n} + \underbrace{|\hat{S}^j_{f^*}(z)-S^j_{f^*}(z)|}_{\eqqcolon D_n}.
\end{equation}
We now consider the two terms $C_n$ and $D_n$ separately. First, we apply the triangle inequality to bound the $C_n$ term as follows.
\begin{align*}
    C_n
    &=\left\vert \frac{1}{n}\sum_{i=1}^n(\hat{f}_n(\phi_j(X_i,z))-f^*(\phi_j(X_i,z)))+\frac{1}{n}\sum_{i=1}^n(\hat{f}_n(X_i)-f^*(X_i))\right\vert\\
    &\leq \frac{1}{n}\sum_{i=1}^n\left\vert\hat{f}_n(\phi_j(X_i,z))-f^*(\phi_j(X_i,z))\right\vert+\frac{1}{n}\sum_{i=1}^n\left\vert\hat{f}_n(X_i)-f^*(X_i)\right\vert\\
    &\leq 2\sup_{x\in\operatorname{supp}(X)}\left\vert\hat{f}_n(x)-f^*(x)\right\vert,
\end{align*}
where for the last step we used a supremum bound together with \eqref{eq:nonextrapolation}. Hence, using the assumption that $\sup_{x\in\operatorname{supp}(X)}\vert\hat{f}_n(x)-f^*(x)\vert\overset{P}{\rightarrow}0$ as $n\rightarrow\infty$, we get that $C_n\rightarrow\infty$ in probability as $n\rightarrow\infty$. Similarly, for the $D_n$ term we get that
\begin{align*}
    D_n
    &=\left\vert \frac{1}{n}\sum_{i=1}^n f^*(\phi_j(X_i,z))-\bE[f^*(\phi_j(X_i,z))]+\frac{1}{n}\sum_{i=1}^n f^*(X_i)-\bE[f^*(X_i)]\right\vert\\
    &\leq\left\vert\frac{1}{n}\sum_{i=1}^n f^*(\phi_j(X_i,z))-\bE[f^*(\phi_j(X_i,z))]\right\vert+\left\vert\frac{1}{n}\sum_{i=1}^n f^*(X_i)-\bE[f^*(X_i)]\right\vert.
\end{align*}
Since the $X_1,\ldots,X_n$ and hence $\phi_j(X_1,z),\ldots,\phi_j(X_n,z)$ are i.i.d.\ and bounded we can apply the weak law of large numbers to get that $D_n\rightarrow 0$ in probability as $n\rightarrow\infty$.

Finally, combining the convergence of $C_n$ and $D_n$ with \eqref{eq:basic_ineq1_part2} proves (ii) and hence completes the proof of Theorem~\ref{thm:consistency}.
\end{proof}

\subsection{Proof of Proposition~\ref{prop:weighted_aitchison}}\label{apd:prop:weighted_aitchison}

\begin{proof}
For this proof, we denote by $\simp^{p-1}$ the open instead of the closed simplex.

First, since $k_W$ is a positive definite kernel (see Sec.~\ref{apd:validity_weighted_kernels} for a proof), it holds that the RKHS $\mathcal{H}_{k_W}$ can be expressed as the closure of
\begin{equation*}
    \mathcal{F}\coloneqq\Big\{f:\simp^{p-1}\times\simp^{p-1}\rightarrow\mathbb{R} \,\Big\vert\, \exists n\in\mathbb{N}, z_1,\ldots,z_n\in\simp^{p-1}, \alpha_1,\ldots,\alpha_n\in\mathbb{R}: f(\cdot)=\textstyle\sum_{i=1}^n\alpha_i k_W(z_i,\cdot)\Big\}.
\end{equation*}
We now show that any function in $\mathcal{F}$ has the expression given in the statement of the proposition. Let $f\in\mathcal{F}$ be arbitrary with the expansion
\begin{equation*}
    f(\cdot)=\sum_{i=1}^n\alpha_i k_W(z_i,\cdot).
\end{equation*}
Then, for all $x\in\simp^{p-1}$ it holds that
\begin{align}
    f(x)
    &=\sum_{i=1}^n\alpha_i\sum_{j,\ell=1}^pW_{\ell,j}\log\Big(\frac{z_i^{\ell}}{g(z_i)}\Big)\log\Big(\frac{x^{j}}{g(x)}\Big)\nonumber\\
    &=\sum_{j=1}^p\bigg(\sum_{\ell=1}^p W_{\ell,j}\sum_{i=1}^n\alpha_i\log\Big(\frac{z_i^{\ell}}{g(z_i)}\Big)\bigg)\log\Big(\frac{x^{j}}{g(x)}\Big)\nonumber\\
    &=\sum_{j=1}^p\bigg(\sum_{\ell=1}^p W_{\ell,j}\widetilde{\beta}_{\ell}\bigg)\log\Big(\frac{x^{j}}{g(x)}\Big)\nonumber\\
    &=\sum_{j=1}^p\bigg(\sum_{\ell=1}^p W_{\ell,j}\widetilde{\beta}_{\ell}\bigg)\log\big(x^{j}\big)-\bigg(\sum_{j,\ell=1}^p W_{\ell,j}\widetilde{\beta}_{\ell}\bigg)\log\big(g(x)\big) \nonumber\\
    &=\sum_{j=1}^p\bigg(\sum_{\ell=1}^p W_{\ell,j}\widetilde{\beta}_{\ell}\bigg)\log\big(x^{j}\big)-\bigg(\sum_{\ell=1}^p \widetilde{\beta}_{\ell}\bigg)\log\big(g(x)\big),\label{eq:weighted_aitchison_function}
\end{align}
where in the third line we defined $\widetilde{\beta}_{\ell}\coloneqq \sum_{i=1}^n\alpha_i\log\Big(\frac{z_i^{\ell}}{g(z_i)}\Big)$ and in the last equation we used that $\sum_{j=1}^{p}W_{\ell,j}=1$ for all $\ell\in\{1,\ldots,p\}$ by construction of $W$. Furthermore, we get that
\begin{equation}
\label{eq:aitchison_proof_cond1}
    \sum_{j=1}^p\widetilde{\beta}_j
    =\sum_{i=1}^{n}\alpha_i\bigg(\sum_{j=1}^p\log\big(z_i^j\big)-p\log(g(z_i))\bigg)
    =\sum_{i=1}^{n}\alpha_i\bigg(\sum_{j=1}^p\log\big(z_i^j\big)-\sum_{j=1}^p\log\big(z_i^j\big)\bigg)
    =0.
\end{equation}
Now, combining this with \eqref{eq:weighted_aitchison_function} and setting $\beta_{j}\coloneqq \sum_{\ell=1}^p W_{\ell,j}\widetilde{\beta}_{\ell}$ implies that
\begin{equation*}
    f(x)=\beta^{\top}\log(x),
\end{equation*}
where $\beta$ does not depend on $x$.

It remains to show that $\beta$ satisfies (i) $\sum_{j=1}^p\beta_j=0$ and (ii) for all $\ell\in\{1,\ldots,m\}$ it holds for all $i,j\in P_{\ell}$ that $\beta_i=\beta_j$. For (i), we can use \eqref{eq:aitchison_proof_cond1} and directly compute
\begin{equation*}
    \sum_{j=1}^p\beta_j
    =\sum_{j=1}^p\sum_{\ell=1}^p W_{\ell,j}\widetilde{\beta}_{\ell}
    =\sum_{\ell=1}^p \widetilde{\beta}_{\ell}=0.
\end{equation*}
Finally for (ii), fix $k\in\{1,\ldots,m\}$ and $i,j\in P_{k}$, then it holds that
\begin{equation*}
    \beta_j
    =\sum_{\ell=1}^p W_{\ell,j}\widetilde{\beta}_{\ell}
    =\sum_{\ell=1}^p \sum_{r=1}^{m}\frac{1}{|P_r|}\mathds{1}_{\{\ell,j\in P_r\}}\widetilde{\beta}_{\ell}
    =\sum_{\ell=1}^p \frac{1}{|P_k|}\widetilde{\beta}_{\ell}
    =\sum_{\ell=1}^p \sum_{r=1}^{m}\frac{1}{|P_r|}\mathds{1}_{\{\ell,i\in P_r\}}\widetilde{\beta}_{\ell}
    =\sum_{\ell=1}^p W_{\ell,i}\widetilde{\beta}_{\ell}
    =\beta_i.
\end{equation*}
This completes the proof of Proposition~\ref{prop:weighted_aitchison}.
\end{proof}

% ---- List of kernels ----

\section{List of kernels implemented in KernelBiome}\label{apd:list_kernels}

In this section we summarize the all kernels implemented in \KernelBiome and visualize the metrics and kernels via heatmaps when $p=3$. The reference points are the neutral point $u = (\frac{1}{3}, \frac{1}{3}, \frac{1}{3})$, a vertex $v = (1,0,0)$, a midpoint on a boundary $m = (\frac{1}{2}, \frac{1}{2}, 0)$, and an interior point $z = (\frac{1}{4}, \frac{1}{4}, \frac{1}{2})$ of the simplex. For kernels we omit the neutral point, since $k(x,u) = 0$ for any $x \in\bS^2$, as we centered our kernels at $u$.

\begin{mybox}{Linear metric \& kernel}{blue!20}
\begin{equation*}
    d^2(x,y) = \sum_{j=1}^p (x^j - y^j)^2
\end{equation*}
\begin{equation*}
    k(x,y) = \Big(\sum_{j=1}^p x^j y^j\Big) - \tfrac{1}{p}
\end{equation*}
\centering\includegraphics{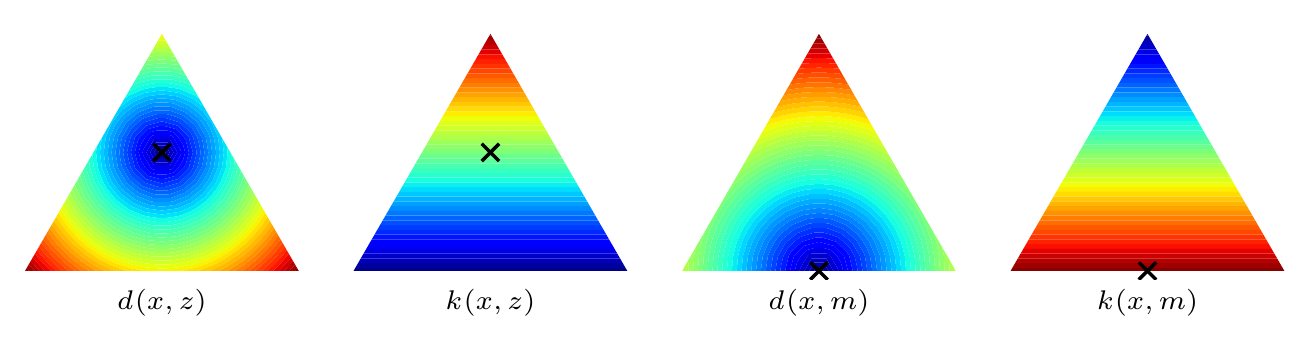}
\end{mybox}

\begin{mybox}{RBF metric \& kernel}{blue!20}
\begin{equation*}
    d^2(x,y) = 2-2\exp\Big(\sum_{j=1}^p \frac{(x^j - y^j)^2}{2\sigma^2}\Big)
\end{equation*}
\begin{equation*}
    k(x,y) = \exp\Big(-\sum_{j=1}^p \frac{(x^j-y^j)^2}{2\sigma^2}\Big)
\end{equation*}
\centering\includegraphics{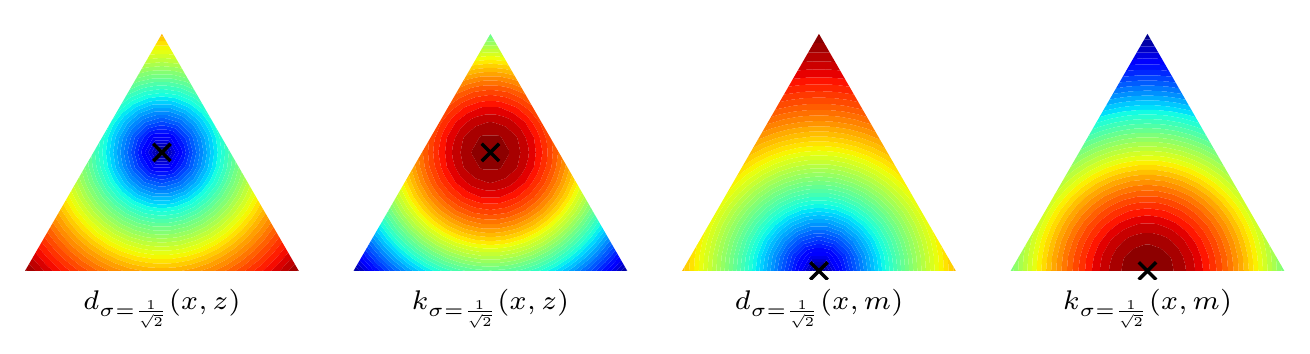}
\end{mybox}

\begin{mybox}{Generalized-JS metric \& kernel ($a < \infty, b \in [0.5, a)$)}{blue!20}
\begin{equation*}
    d^2(x,y) = \frac{ab}{a-b}\sum_{j=1}^p\frac{2^{\frac{1}{b}}\Big[(x^j)^{a}+(y^j)^{a}\Big]^{\frac{1}{a}} 
    - 2^{\frac{1}{a}}\Big[(x^j)^{b}+(y^j)^{b}\Big]^{\frac{1}{b}}}{2^{\frac{1}{a}+\frac{1}{b}}}
\end{equation*}
\begin{equation*}
\begin{split}
    k(x,y) &= -\frac{ab}{a-b}\cdot 2^{-(1+\frac{1}{a}+\frac{1}{b})} \sum_{j=1}^p 
    \Bigg\{2^\frac{1}{b} \Big(\Big[(x^j)^{a} + (y^j)^{a}\Big]^{\frac{1}{a}} - \Big[(x^j)^{a} + (\tfrac{1}{p})^{a}\Big]^{\frac{1}{a}} \\ 
    &\qquad - \Big[(\tfrac{1}{p})^{a} + (y^j)^{a}\Big]^{\frac{1}{a}}\Big) -2^\frac{1}{a} \Big(\Big[(x^j)^{b} + (y^j)^{b}\Big]^{\frac{1}{b}} - \Big[(x^j)^{b} + (\tfrac{1}{p})^{b}\Big]^{\frac{1}{b}} \\ 
    &\qquad - \Big[(\tfrac{1}{p})^{b} + (y^j)^{b}\Big]^{\frac{1}{b}}\Big)\Bigg\}
\end{split}
\end{equation*}
\centering\includegraphics{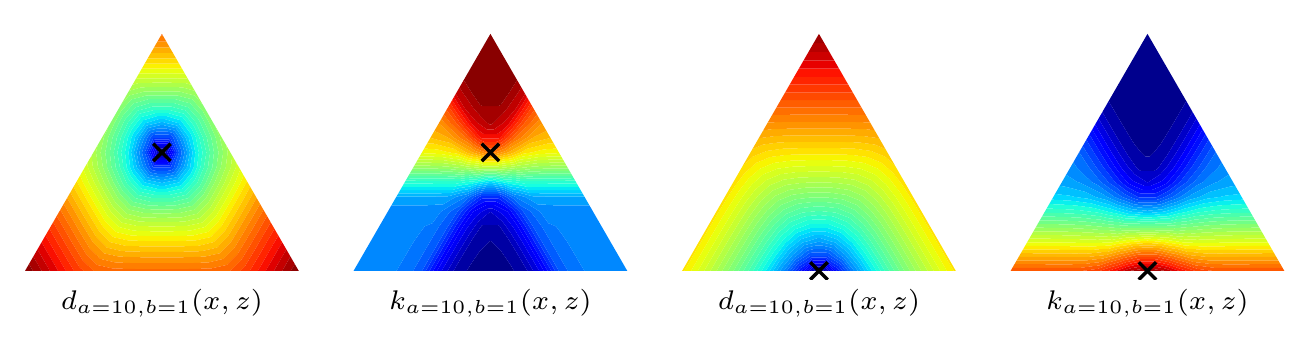}
\end{mybox}

\begin{mybox}{Generalized-JS metric \& kernel ($a \to \infty, b < \infty$)}{blue!20}
\begin{equation*}
    d^2(x,y) = \sum_{j=1}^p b\Bigg\{2^{\frac{1}{b}}\cdot\max\{x^j,y^j\} - \Big[(x^j)^{b} + (y^j)^{b}\Big]^{\frac{1}{b}}\Bigg\}
\end{equation*}
\begin{equation*}
\begin{split}
    k(x,y) &= -\frac{b}{2}\sum_{j=1}^p \Bigg\{
    2^{\frac{1}{b}}\Big(\max\{x^j,y^j\} - \max\{x^j,\tfrac{1}{p}\} - \max\{y^j,\tfrac{1}{p}\}\Big) \\
    &\quad - \Big[(x^j)^{b} + (y^j)^{b}\Big]^{\frac{1}{b}} 
    + \Big[(x^j)^{b} + (\tfrac{1}{p})^{b}\Big]^{\frac{1}{b}}
    + \Big[(y^j)^{b} + (\tfrac{1}{p})^{b}\Big]^{\frac{1}{b}}
    \Bigg\}
\end{split}
\end{equation*}
\centering\includegraphics{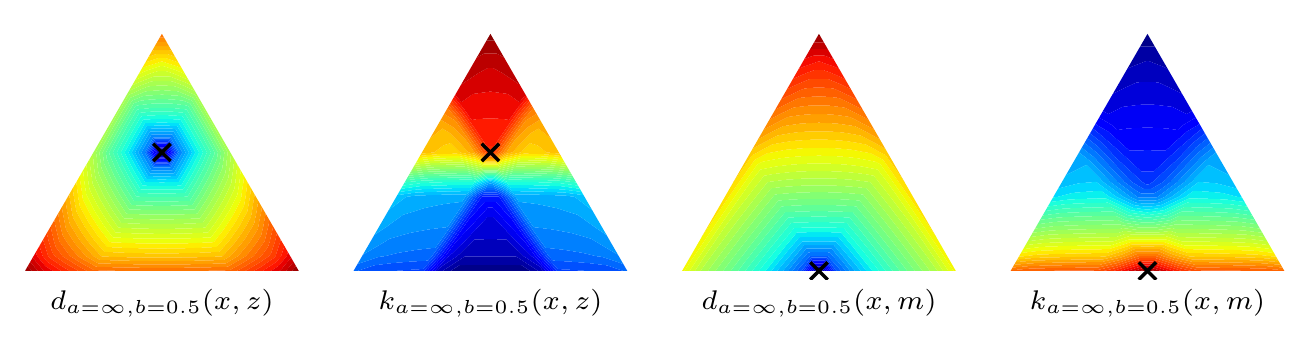}
\end{mybox}

\begin{mybox}{Generalized-JS metric \& kernel ($a < \infty, b \to a$)}{blue!20}
\begin{equation*}
\begin{split}
    d^2(x,y) &= \sum_{j=1}^p\bigg[\frac{(x^j)^{b} + (y^j)^{b}}{2}\bigg]^{\frac{1}{b}} \cdot \bigg[\frac{(x^j)^{b}}{(x^j)^{b}+(y^j)^{b}}\cdot\log\frac{2(x^j)^{b}}{(x^j)^{b}+(y^j)^{b}} \\ 
    &\qquad\qquad +\frac{(y^j)^{b}}{(x^j)^{b}+(y^j)^{b}}\cdot\log\frac{2(y^j)^{b}}{(x^j)^{b}+(y^j)^{b}}\bigg]
\end{split}
\end{equation*}
\begin{equation*}
\begin{split}
    k(x,y) &= -\frac{1}{2^{\frac{1}{b}+1}}\sum_{j=1}^p \Bigg\{
    \Big[(x^j)^{b}+(y^j)^{b}\Big]^{\frac{1}{b}-1} \cdot \Big((x^j)^{b}\cdot\log\Big[2(x^j)^{b}\Big] + (y^j)^{b}\log\Big[2(y^j)^{b}\Big] \\
    &\qquad\qquad\qquad - \Big[(x^j)^{b} + (y^j)^{b}\Big]\cdot\log\Big[(x^j)^{b} + (y^j)^{b}\Big]\Big)\\
    &\qquad\qquad - \Big[(x^j)^{b}+\tfrac{1}{p^{b}}\Big]^{\frac{1}{b}-1}\cdot \Big(-\Big[(x^j)^{b} + (\tfrac{1}{p^{b}})\Big]\cdot\log\Big[(x^j)^{b} + \tfrac{1}{p^{b}}\Big] \\ 
    &\qquad\qquad\qquad + (x^j)^{b}\cdot\log\Big[2(x^j)^{b}\Big] + \tfrac{1}{p^{b}}\cdot\log\Big[\tfrac{2}{p^{b}}\Big]\Big) \\
    &\qquad\qquad - \Big[(y^j)^{b}+\tfrac{1}{p^{b}}\Big]^{\tfrac{1}{b}-1}\cdot\Big(-\Big[(y^j)^{b} + (\tfrac{1}{p^{b}})\Big]\cdot\log\Big[(y^j)^{b} + \tfrac{1}{p^{b}}\Big] \\ 
    &\qquad\qquad\qquad + (y^j)^{b}\cdot\log\Big[2(y^j)^{b}\Big] + \tfrac{1}{p^{b}}\cdot\log\Big[\tfrac{2}{p^{b}}\Big]\Big)
    \Bigg\}
\end{split}
\end{equation*}
\centering\includegraphics{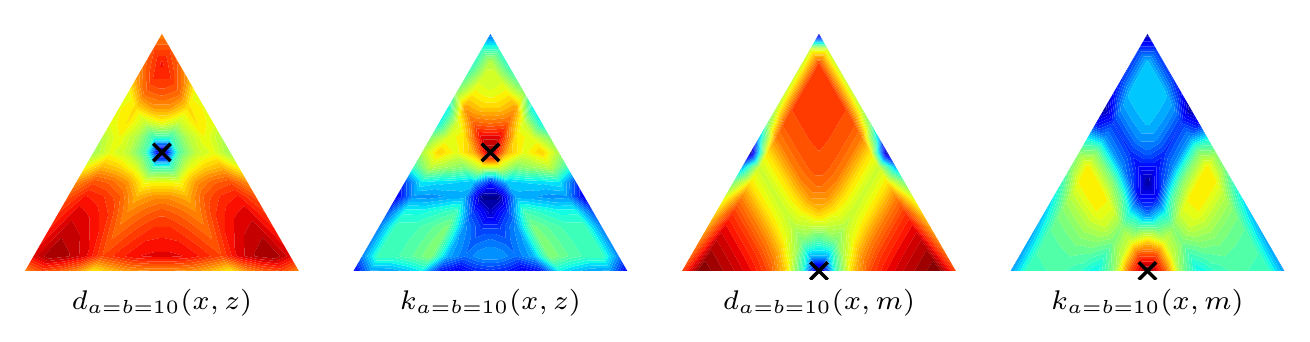}
\end{mybox}

\begin{mybox}{Generalized-JS metric \& kernel ($a = b \to \infty$)}{blue!20}
\begin{equation*}
    d^2(x,y) = \sum_{j=1}^p \max\{x^j,y^j\} \cdot \Big[\log(2)\mathds{1}{\{x^j \neq y^j\}}\Big]
\end{equation*}
\begin{equation*}
\begin{split}
    k(x,y) &= -\frac{\log(2)}{2}\cdot\sum_{j=1}^p \Bigg\{\max\{x^j,y^j\}\cdot\mathds{1}{\{x^j \neq y^j\}} \\ 
    &\qquad -\max\{x^j,\tfrac{1}{p}\}\cdot\mathds{1}{\{x^j \neq \tfrac{1}{p}\}} -\max\{y^j,\tfrac{1}{p}\}\cdot\mathds{1}{\{y^j \neq \tfrac{1}{p}\}}
    \Bigg\}
\end{split}
\end{equation*}
\centering\includegraphics{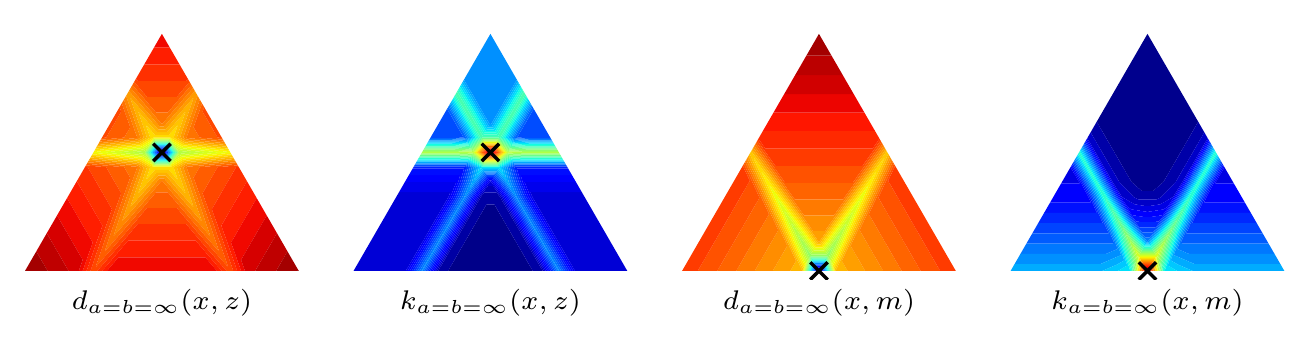}
\end{mybox}

\begin{mybox}{Special Case: Hellinger - Generalized-JS metric \& kernel ($a = 1, b = \frac{1}{2}$)}{blue!20}
\begin{equation*}
    d^2(x,y) 
    = \frac{\sqrt{2}}{2}\sum_{j=1}^p\big(\sqrt{x^j}-\sqrt{y^j}\big)^2
\end{equation*}
\begin{equation*}
    k(x,y) = \frac{\sqrt{2}}{4} + \frac{\sqrt{2}}{4}\sum_{j=1}^p \Bigg\{ \sqrt{x^j y^j}-\frac{\sqrt{x^j}+\sqrt{y^j}}{\sqrt{p}} \Bigg\}
\end{equation*}
This corresponds to $\frac{\sqrt{2}}{2}$ times the \textbf{Hellinger} metric and kernel.

\centering\includegraphics{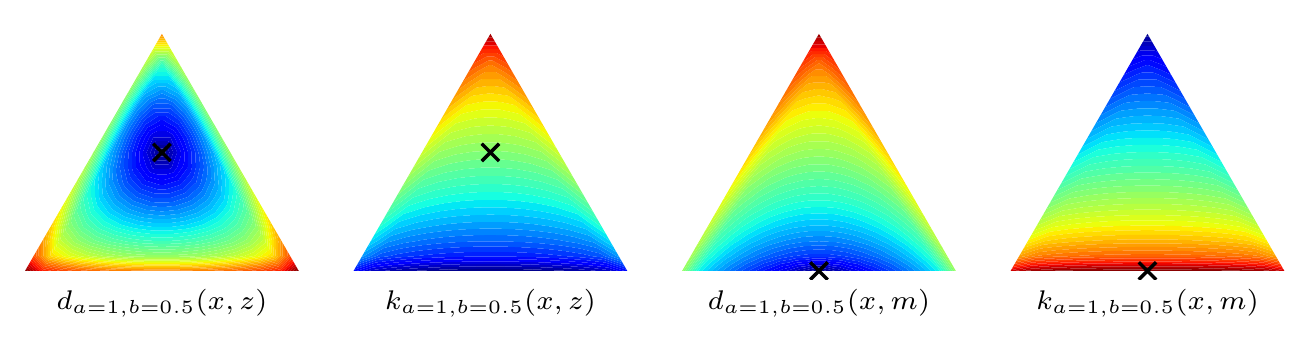}
\end{mybox}

\begin{mybox}{Special Case: Jenson-Shannon - Generalized-JS metric \& kernel ($a = 1, b = 1$)}{blue!20}
\begin{equation*}
\begin{split}
    d^2(x,y) 
    = \frac{1}{2}\sum_{j=1}^p x^j\log\frac{2x^j}{x^j+y^j} + y^j\log\frac{2y^j}{x^j+y^j}
\end{split}
\end{equation*}
\begin{equation*}
    k(x,y) = -\frac{1}{4}\sum_{j=1}^p \Bigg\{x^j\log\frac{x^j + \tfrac{1}{p}}{x^j+y^j} + y^j\log\frac{y^j + \tfrac{1}{p}}{x^j+y^j} - \tfrac{1}{p}\log\frac{4}{p^2(x^j + \tfrac{1}{p})(y^j+\tfrac{1}{p})}\Bigg\}
\end{equation*}
This corresponds to the \textbf{Jenson-Shannon} metric and kernel.

\centering\includegraphics{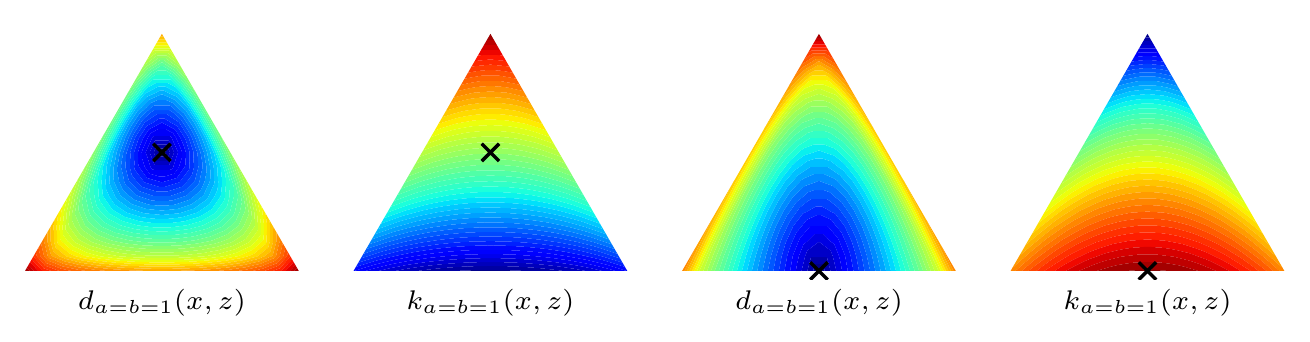}
\end{mybox}

\begin{mybox}{Special Case: Total Variation - Generalized-JS metric \& kernel ($a = \infty, b = 1$)}{blue!20}
\begin{equation*}
\begin{split}
    d^2(x,y) 
    = \sum_{j=1}^p |x^j-y^j|
\end{split}
\end{equation*}
\begin{equation*}
    k(x,y) = -\frac{1}{2}\sum_{j=1}^p \Bigg\{|x^j - y^j|-|x^j-\tfrac{1}{p}|-|y^j-\tfrac{1}{p}|\Bigg\}
\end{equation*}
This corresponds to $2$ times the \textbf{total variation} metric and kernel.

\centering\includegraphics{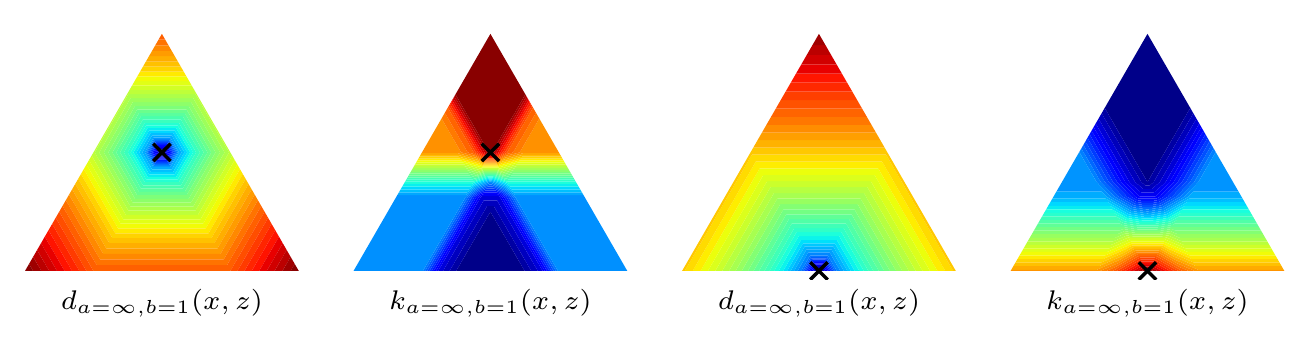}
\end{mybox}

\begin{mybox}{Hilbertian metric \& kernel ($a < \infty, b > -\infty$)}{blue!20}
\begin{equation*}
    d^2(x,y) = \sum_{j=1}^p\frac{2^{\frac{1}{b}}\Big[(x^j)^{a} +  (y^j)^{a}\Big]^{\frac{1}{a}} - 2^{\frac{1}{a}}\Big[(x^j)^{b} + (y^j)^{b}\Big]^{\frac{1}{b}}}{2^{\frac{1}{a}} - 2^{\frac{1}{b}}}
\end{equation*}
\begin{equation*}
\begin{split}
    k(x,y) &= -\frac{1}{2(2^{\frac{1}{a}} - 2^{\frac{1}{b}})}\sum_{j=1}^p \Bigg\{
        2^{\frac{1}{b}}\Big(\Big[(x^j)^{a}+(y^j)^{a}\Big]^{\frac{1}{a}} - \Big[(x^j)^{a}+(\tfrac{1}{p})^{a}\Big]^{\frac{1}{a}} \\ 
        &\qquad - \Big[(y^j)^{a}+(\tfrac{1}{p})^{a}\Big]^{\frac{1}{a}}\Big) - 2^{\frac{1}{a}}\Big(\Big[(x^j)^{b}+(y^j)^{b}\Big]^{\frac{1}{b}} - \Big[(x^j)^{b}+(\tfrac{1}{p})^{b}\Big]^{\frac{1}{b}} \\ 
        &\qquad - \Big[(y^j)^{b}+(\tfrac{1}{p})^{b}\Big]^{\frac{1}{b}}\Big)
    \Bigg\}
\end{split}
\end{equation*}

\centering\includegraphics{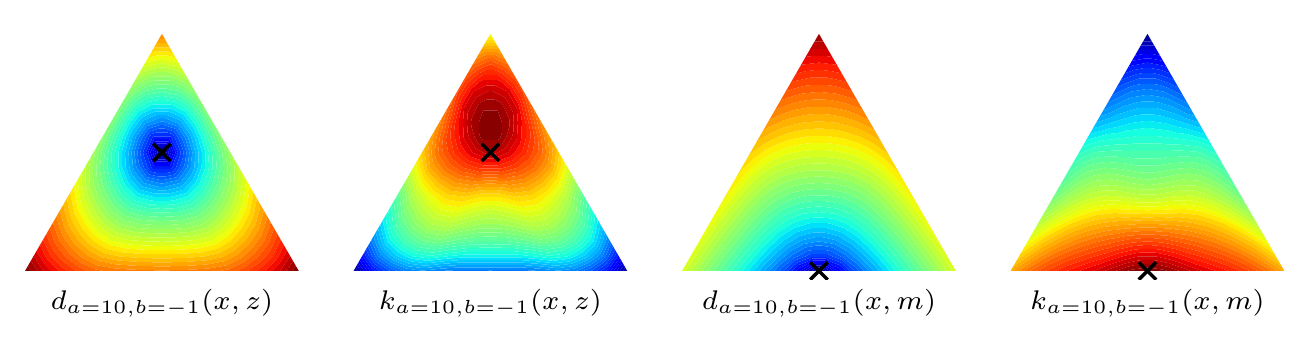}
\end{mybox}

\begin{mybox}{Hilbertian metric \& kernel ($a \to \infty, b > -\infty$)}{blue!20}
\begin{equation*}
    d^2(x,y) = \sum_{j=1}^p b\Bigg\{2^{\frac{1}{b}}\cdot\max\{x^j,y^j\} - \Big[(x^j)^{b} + (y^j)^{b}\Big]^{\frac{1}{b}}\Bigg\}
\end{equation*}
\begin{equation*}
\begin{split}
    k(x,y) &= -\frac{1}{2(1-2^{\frac{1}{b}})}\sum_{j=1}^p \bigg\{
    2^{\frac{1}{b}}\Big(\max\{x^j,y^j\}-\max\{x^j,\tfrac{1}{p}\}-\max\{y^j,\tfrac{1}{p}\}\Big) \\
    &\quad - \Big[(x^j)^{b} + (y^j)^{b}\Big]^{\frac{1}{b}}
    + \Big[(x^j)^{b} + (\tfrac{1}{p})^{b}\Big]^{\frac{1}{b}}
    + \Big[(y^j)^{b} + (\tfrac{1}{p})^{b}\Big]^{\frac{1}{\beta}} \bigg\}
\end{split}
\end{equation*}

\centering\includegraphics{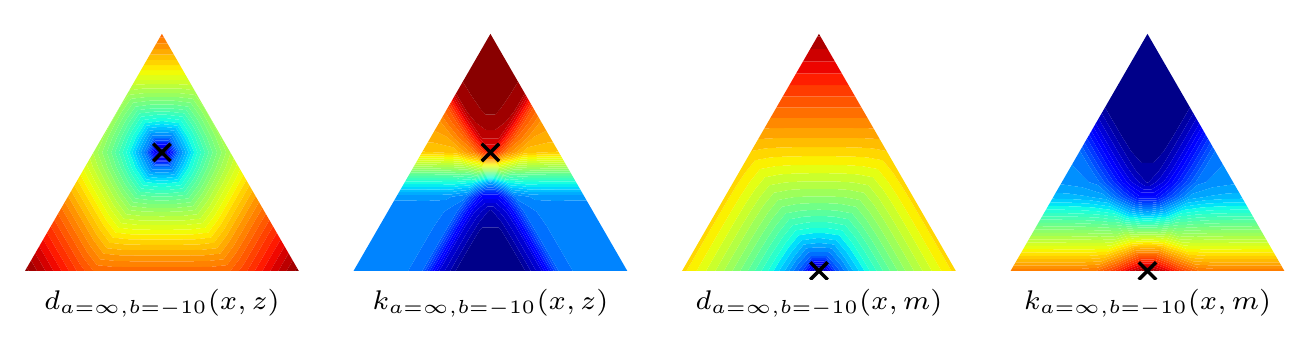}
\end{mybox}

\begin{mybox}{Hilbertian metric \& kernel ($a < \infty, b \to -\infty$)}{blue!20}
\begin{equation*}
    d^2(x,y) = \frac{1}{2^{\frac{1}{a}}-1}\sum_{j=1}^p \Bigg\{
    \Big[(x^j)^{a} + (y^j)^{a}\Big]^{\frac{1}{a}}
    - 2^{\frac{1}{a}}\cdot\min\{x^j,y^j\} 
    \Bigg\}
\end{equation*}
\begin{equation*}
\begin{split}
    k(x,y) &= -\frac{1}{2(2^{\frac{1}{a}}-1)}\sum_{j=1}^p \Bigg\{
        \Big[(x^j)^{a} + (y^j)^{a}\Big]^{\frac{1}{a}}
        - \Big[(x^j)^{a} + (\tfrac{1}{p})^{a}\Big]^{\frac{1}{a}}
        - \Big[(y^j)^{a} + (\tfrac{1}{p})^{a}\Big]^{\frac{1}{a}} \\
        &\qquad - 2^{\frac{1}{a}}\Big[\min\{x^j,y^j\} - \min\{x^j,\tfrac{1}{p}\}  - \min\{y^j,\tfrac{1}{p}\}\Big]
    \Bigg\}
\end{split}
\end{equation*}

\centering\includegraphics{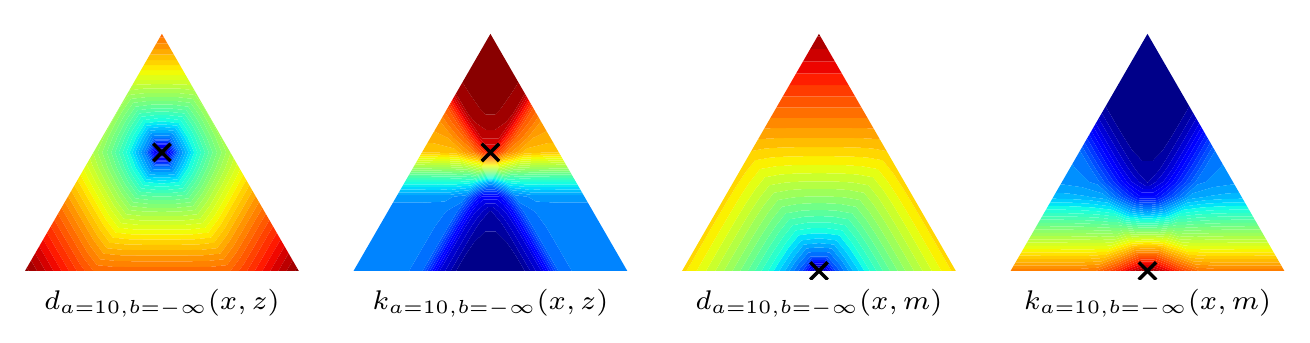}
\end{mybox}

\begin{mybox}{Special Case: Chi-square - Hilbertian metric \& kernel ($a = 1, b = -1$)}{blue!20}
\begin{equation*}
\begin{split}
    d^2(x,y) 
    = \frac{1}{3}\sum_{j=1}^p\frac{(x^j-y^j)^2}{x^j+y^j}
\end{split}
\end{equation*}
\begin{equation*}
    k(x,y) = -\frac{1}{6}\sum_{j=1}^p \Bigg\{\frac{(x^j-y^j)^2}{x^j+y^j} - \frac{(x^j-\tfrac{1}{p})^2}{x^j + \tfrac{1}{p}} - \frac{(y^j-\tfrac{1}{p})^2}{y^j + \tfrac{1}{p}}\Bigg\}
\end{equation*}
This corresponds to $\frac{1}{3}$ of the \textbf{chi-square} metric and kernel.

\centering\includegraphics{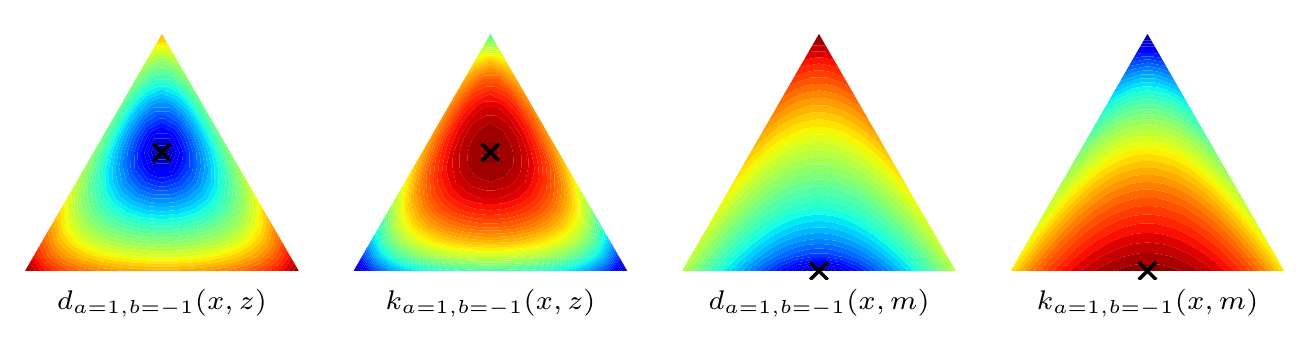}
\end{mybox}

\begin{mybox}{Special Case: Total Variation - Hilbertian metric \& kernel ($a = 1, b = -\infty$)}{blue!20}
\begin{equation*}
\begin{split}
    d^2(x,y) 
    = \frac{1}{2}\sum_{j=1}^p |x^j-y^j|
\end{split}
\end{equation*}
\begin{equation*}
    k(x,y) = -\frac{1}{4}\sum_{j=1}^p \Bigg\{|x^j - y^j|-|x^j-\tfrac{1}{p}|-|y^j-\tfrac{1}{p}|\Bigg\}
\end{equation*}
This corresponds to the \textbf{total variation} metric and kernel.

\centering\includegraphics{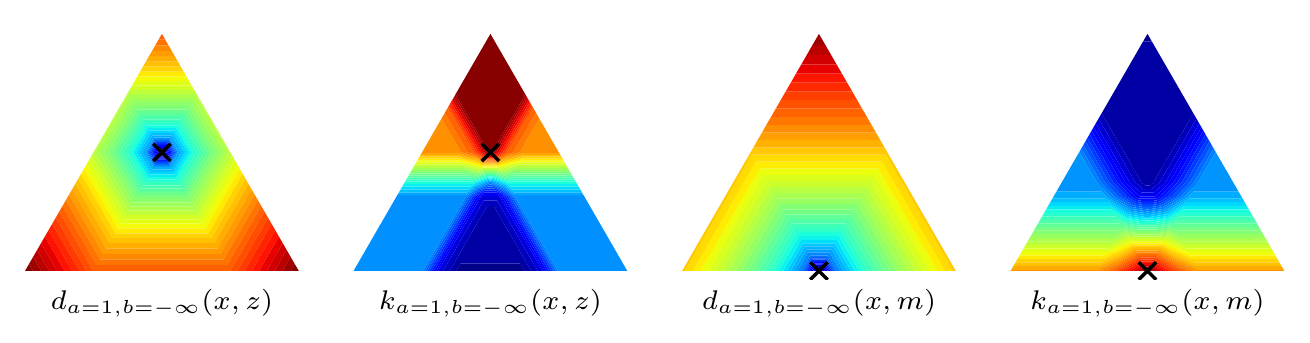}
\end{mybox}

\begin{mybox}{Aitchison metric \& kernel}{blue!20}
\begin{equation*}
\begin{split}
    d^2(x,y) 
    = \sum_{j=1}^p \Big(\log\frac{x^j+c}{g(x+c)} - \log\frac{y^j+c}{g(y+c)}\Big)^2
\end{split}
\end{equation*}
\begin{equation*}
\begin{split}
    k(x,y) = \sum_{j=1}^p\log\frac{x^j+c}{g(x+c)}\log\frac{y^j+c}{g(y+c)}
\end{split}
\end{equation*}
where $g(x) = \sqrt[p]{\prod_{j=1}^p} x^j$ is the geometric mean of $x$.

\centering\includegraphics{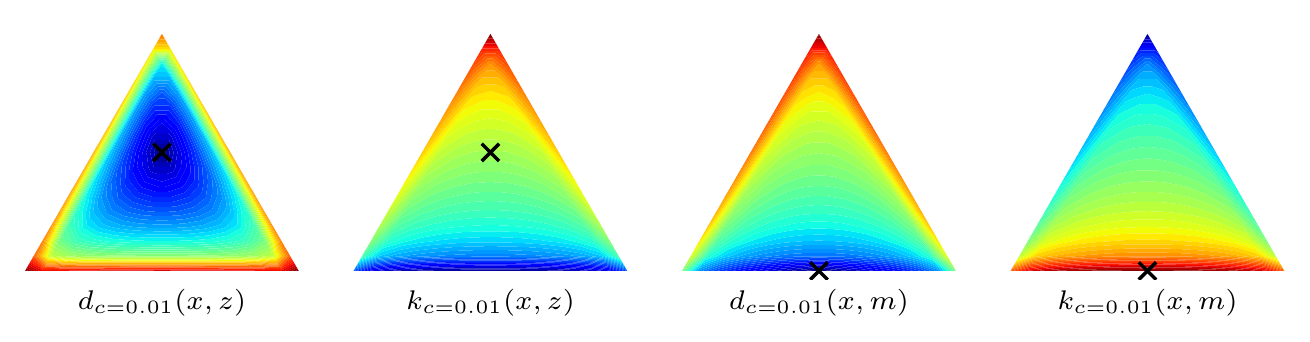}
\end{mybox}

\begin{mybox}{Aitchison-RBF metric \& kernel}{blue!20}
\begin{equation*}
    d^2(x,y) = 2 - 2\exp\Big(-\frac{1}{2\sigma^2}\sum_{j=1}^p\Big[\log\frac{x^j+c}{g(x+c)} - \log\frac{y^j+c}{g(y+c)}\Big]^2\Big)
\end{equation*}
\begin{equation*}
    k(x,y) = \exp\Big(-\frac{1}{2\sigma^2}\sum_{j=1}^p\Big[\log\frac{x^j+c}{g(x+c)} - \log\frac{y^j+c}{g(y+c)}\Big]^2\Big)
\end{equation*}
where $g(x) = \sqrt[p]{\prod_{j=1}^p} x^j$ is the geometric mean of $x$.

\centering\includegraphics{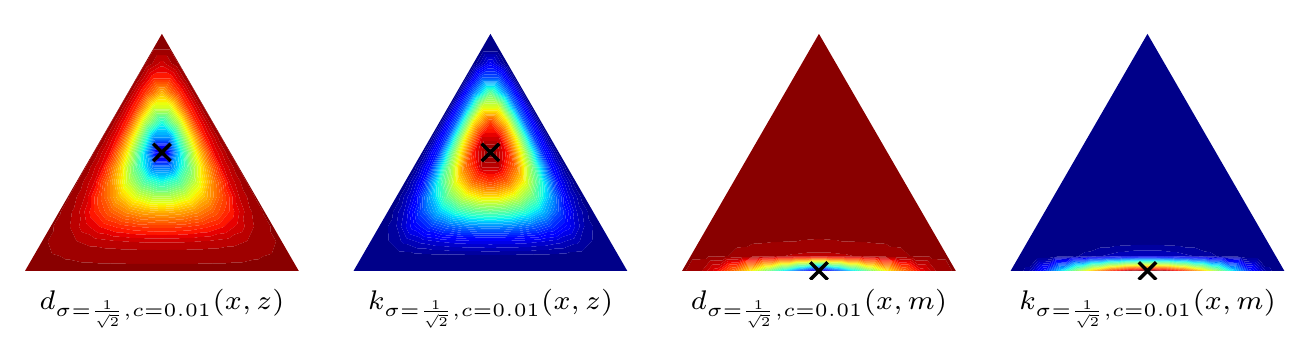}
\end{mybox}

\begin{mybox}{Heat diffusion metric \& kernel}{blue!20}
\begin{equation*}
    d^2(x,y) = 2\cdot(4\pi t)^{-\frac{p}{2}}\cdot \Big[1-\exp\Big(-\tfrac{1}{t}\arccos^2(\sum_{j=1}^p\sqrt{x^j y^j})\Big)\Big]
\end{equation*}
\begin{equation*}
    k(x,y) = (4\pi t)^{-p/2} \cdot\exp\Big(-\tfrac{1}{t}\arccos^2(\sum_{j=1}^p\sqrt{x^j y^j})\Big)
\end{equation*}

\centering\includegraphics{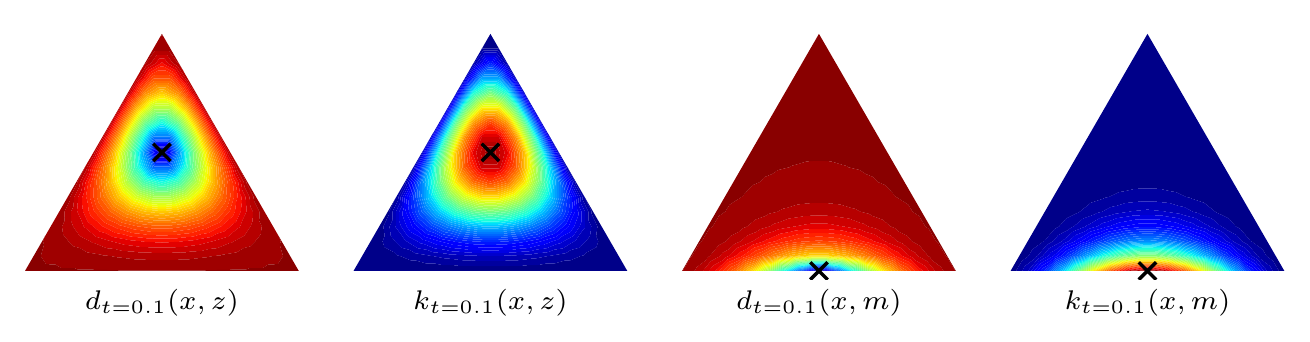}
\end{mybox}

\subsection{List of weighted kernels}\label{apd:list_weighted_kernels}

As discussed in Sec.~\ref{apd:summary_kernels_and_metrics}, all kernels can also be modified to include a weight matrix $W\in\bR^{p\times p}$. Below, we list the explicit forms of all weighted kernels as they are implemented in \KernelBiome package. As before, let $g(x) = \sqrt[p]{\prod_{j=1}^p} x^j$ be the geometric mean of $x$.

\noindent\textbf{Linear kernel}
\begin{equation*}
    k(x,y) = \sum_{j, \ell=1}^p W_{j, \ell} \Big (x^j y^\ell - \tfrac{x^j}{p}-\tfrac{y^\ell}{p} +\tfrac{1}{p^2}\Big)
\end{equation*}
\textbf{RBF kernel}
\begin{equation*}
    k(x,y) = \exp\Big(-\sum_{j, \ell=1}^p \frac{W_{j, \ell}(x^j-y^\ell)^2}{2\sigma^2}\Big)
\end{equation*}
\textbf{Generalized-JS kernel ($a< \infty, b \in [0.5, a]$)}
\begin{equation*}
\begin{split}
    k(x,y) &= -\frac{ab}{a-b}\cdot 2^{-(1+\frac{1}{a}+\frac{1}{b})} \sum_{j, \ell=1}^p W_{j, \ell}
    \Bigg\{2^\frac{1}{b} \Big(\Big[(x^j)^{a} + (y^\ell)^{a}\Big]^{\frac{1}{a}} - \Big[(x^j)^{a} + (\tfrac{1}{p})^{a}\Big]^{\frac{1}{a}} \\ 
    &\quad - \Big[(\tfrac{1}{p})^{a} + (y^\ell)^{a}\Big]^{\frac{1}{a}}\Big) -2^\frac{1}{a} \Big(\Big[(x^j)^{b} + (y^\ell)^{b}\Big]^{\frac{1}{b}} - \Big[(x^j)^{b} + (\tfrac{1}{p})^{b}\Big]^{\frac{1}{b}} \\ 
    &\quad - \Big[(\tfrac{1}{p})^{b} + (y^\ell)^{b}\Big]^{\frac{1}{b}}\Big)\Bigg\}
\end{split}
\end{equation*}
\textbf{Generalized-JS kernel ($a \to \infty, b < \infty$)}
\begin{equation*}
\begin{split}
    k(x,y) &= -\frac{b}{2}\sum_{j, \ell=1}^p W_{j, \ell} \Bigg\{
    2^{\frac{1}{b}}\Big(\max\{x^j,y^\ell\} - \max\{x^j,\tfrac{1}{p}\} - \max\{y^\ell,\tfrac{1}{p}\}\Big) \\
    &\quad - \Big[(x^j)^{b} + (y^\ell)^{b}\Big]^{\frac{1}{b}} 
    + \Big[(x^j)^{b} + (\tfrac{1}{p})^{b}\Big]^{\frac{1}{b}}
    + \Big[(y^\ell)^{b} + (\tfrac{1}{p})^{b}\Big]^{\frac{1}{b}}
    \Bigg\}
\end{split}
\end{equation*}
\textbf{Generalized-JS kernel ($a < \infty, b \to a$)}
\begin{equation*}
\begin{split}
    k(x,y) &= -\frac{1}{2^{\frac{1}{b}+1}}\sum_{j, \ell=1}^p W_{j, \ell} \Bigg\{\Big[(x^j)^{b}+(y^\ell)^{b}\Big]^{\frac{1}{b}-1} \cdot \Big((x^j)^{b}\cdot\log\Big[2(x^j)^{b}\Big] + (y^\ell)^{b}\log\Big[2(y^\ell)^{b}\Big] \\
    &\qquad\qquad - \Big[(x^j)^{b} + (y^\ell)^{b}\Big]\cdot\log\Big[(x^j)^{b} + (y^\ell)^{b}\Big]\Big)\\
    &\qquad - \Big[(x^j)^{b}+\tfrac{1}{p^{b}}\Big]^{\frac{1}{b}-1}\cdot
    \Big(-\Big[(x^j)^{b} + (\tfrac{1}{p^{b}})\Big]\cdot\log\Big[(x^j)^{b} + \tfrac{1}{p^{b}}\Big] \\ 
    &\qquad\qquad + (x^j)^{b}\cdot\log\Big[2(x^j)^{b}\Big] + \tfrac{1}{p^{b}}\cdot\log\Big[\tfrac{2}{p^{b}}\Big]\Big) \\
    &\qquad - \Big[(y^\ell)^{b}+\tfrac{1}{p^{b}}\Big]^{\frac{1}{b}-1}\cdot\Big(-\Big[(y^\ell)^{b} + (\tfrac{1}{p^{b}})\Big]\cdot\log\Big[(y^\ell)^{b} + \tfrac{1}{p^{b}}\Big] \\
    &\qquad\qquad + (y^\ell)^{b}\cdot\log\Big[2(y^\ell)^{b}\Big] + \tfrac{1}{p^{b}}\cdot\log\Big[\tfrac{2}{p^{b}}\Big]\Big)
    \Bigg\}
\end{split}
\end{equation*}
\textbf{Generalized-JS kernel ($a = b \to \infty$)}
\begin{equation*}
\begin{split}
    k(x,y) &= -\frac{\log(2)}{2}\cdot\sum_{j, \ell=1}^p W_{j, \ell} \Bigg\{\max\{x^j,y^\ell\}\cdot\mathds{1}{\{x^j \neq y^\ell\}} \\ 
    &\quad -\max\{x^j,\tfrac{1}{p}\}\cdot\mathds{1}{\{x^j \neq \tfrac{1}{p}\}} -\max\{y^\ell,\tfrac{1}{p}\}\cdot\mathds{1}{\{y^\ell \neq \tfrac{1}{p}\}}
    \Bigg\}
\end{split}
\end{equation*}
\textbf{Hilbertian kernel ($a < \infty, b > - \infty$)}
\begin{equation*}
\begin{split}
    k(x,y) &= -\frac{1}{2(2^{\frac{1}{a}} - 2^{\frac{1}{b}})}\sum_{j, \ell=1}^p W_{j, \ell} \Bigg\{
        2^{\frac{1}{b}}\Big(\Big[(x^j)^{a}+(y^\ell)^{a}\Big]^{\frac{1}{a}} - \Big[(x^j)^{a}+(\tfrac{1}{p})^{a}\Big]^{\frac{1}{a}} \\ 
        &\quad - \Big[(y^\ell)^{a}+(\tfrac{1}{p})^{a}\Big]^{\frac{1}{a}}\Big) - 2^{\frac{1}{a}}\Big(\Big[(x^j)^{b}+(y^\ell)^{b}\Big]^{\frac{1}{b}} - \Big[(x^j)^{b}+(\tfrac{1}{p})^{b}\Big]^{\frac{1}{b}} \\ 
        &\quad - \Big[(y^\ell)^{b}+(\tfrac{1}{p})^{b}\Big]^{\frac{1}{b}}\Big)
    \Bigg\}
\end{split}
\end{equation*}
\textbf{Hilbertian kernel ($a \to \infty, b > - \infty$)}
\begin{equation*}
\begin{split}
    k(x,y) &= -\frac{1}{2(1-2^{\frac{1}{b}})}\sum_{j, \ell=1}^p W_{j, \ell}\bigg\{
    2^{\frac{1}{b}}\Big(\max\{x^j,y^\ell\}-\max\{x^j,\tfrac{1}{p}\}-\max\{y^\ell,\tfrac{1}{p}\}\Big) \\
    &\quad - \Big[(x^j)^{b} + (y^\ell)^{b}\Big]^{\frac{1}{b}}
    + \Big[(x^j)^{b} + (\tfrac{1}{p})^{b}\Big]^{\frac{1}{b}}
    + \Big[(y^\ell)^{b} + (\tfrac{1}{p})^{b}\Big]^{\frac{1}{b}} \bigg\}
\end{split}
\end{equation*}
\textbf{Hilbertian kernel ($a < \infty, b \to - \infty$)}
\begin{equation*}
\begin{split}
    k(x,y) &= -\frac{1}{2(2^{\frac{1}{a}}-1)}\sum_{j, \ell=1}^p W_{j, \ell}\Bigg\{
        \Big[(x^j)^{a} + (y^\ell)^{a}\Big]^{\frac{1}{a}}
        - \Big[(x^j)^{a} + (\tfrac{1}{p})^{a}\Big]^{\frac{1}{a}}
        - \Big[(y^\ell)^{a} + (\tfrac{1}{p})^{a}\Big]^{\frac{1}{a}} \\
        &\quad - 2^{\frac{1}{a}}\Big[\min\{x^j,y^\ell\} - \min\{x^j,\tfrac{1}{p}\}  - \min\{y^\ell,\tfrac{1}{p}\}\Big]
    \Bigg\}
\end{split}
\end{equation*}
\textbf{Aitchison kernel}
\begin{equation*}
    k(x,y) = \sum_{j, \ell=1}^p W_{j, \ell}\log\frac{x^j+c}{g(x+c)}\log\frac{y^\ell+c}{g(y+c)}
\end{equation*}
\textbf{Aitchison RBF kernel}
\begin{equation*}
    k(x,y) = \exp\Big(-\frac{1}{2\sigma^2}\sum_{j, \ell=1}^p W_{j, \ell}\Big[\log\frac{x^j+c}{g(x+c)} - \log\frac{y^\ell+c}{g(y+c)}\Big]^2\Big)
\end{equation*}
\textbf{Heat diffusion kernel}
\begin{equation*}
    k(x,y) = (4\pi t)^{-p/2} \cdot\exp\bigg(-\tfrac{1}{t}\arccos^2\Big(\sum_{j, \ell=1}^p W_{j, \ell}\sqrt{x^j y^\ell}\Big)\bigg)
\end{equation*}
\end{appendices}

\clearpage

\vskip 0.2in
\bibliographystyle{plainnat}
\bibliography{bibliography} 

\end{document}